\begin{document}

\begin{center}

	{\bf{\LARGE{Adversarial risk via optimal transport\\ \vspace{2mm} and optimal couplings}}}

	\vspace*{.25in}

 	\begin{tabular}{ccc}
 		{\large{Muni Sreenivas Pydi}} & \hspace*{.5in} & {\large{Varun Jog}}\\
 		{\large{\texttt{pydi@wisc.edu}}} & \hspace*{.5in} & {\large{\texttt{vjog@wisc.edu}}} 
 			\end{tabular}
 \begin{center}
 Department of Electrical \& Computer Engineering\\
 University of Wisconsin-Madison
 \end{center}

	\vspace*{.2in}

December 2019

	\vspace*{.2in}

\end{center}

%\author[ \hspace{-1ex}]{Varun Log \thanks{Department of Electrical and Computer Engineering, UW - Madison. Email: vjog@wisc.edu}}
%\affil[ ]{}

\begin{abstract}
 
Modern machine learning algorithms perform poorly on adversarially manipulated data. Adversarial risk quantifies the error of classifiers in adversarial settings; adversarial classifiers minimize adversarial risk. In this paper, we analyze adversarial risk and adversarial classifiers from an optimal transport perspective. We show that the optimal adversarial risk for binary classification with 0-1 loss is determined by an optimal transport cost between the probability distributions of the two classes. We develop optimal transport plans (probabilistic couplings) for univariate distributions such as the normal, the uniform, and the triangular distribution. We also derive optimal adversarial classifiers in these settings. Our analysis leads to algorithm-independent fundamental limits on adversarial risk, which we calculate for several real-world datasets. We extend our results to general loss functions under convexity and smoothness assumptions.
\footnote{This paper was presented in part at the International Conference on Machine Learning, July 2020, Virtual.}

%The accuracy of modern machine learning algorithms deteriorates severely on adversarially manipulated test data. 
%\emph{Optimal adversarial risk} quantifies the best error rate of any classifier in the presence of adversaries, and \emph{optimal adversarial classifiers} are sought that minimize adversarial risk. 
%In this paper, we investigate the optimal adversarial risk and optimal adversarial classifiers from an optimal transport perspective. 
%We present a new and simple approach to show that the optimal adversarial risk for binary classification with $0$-$1$ loss function is completely characterized by an optimal transport cost between the probability distributions of the two classes, for a suitably defined cost function. We propose a novel coupling strategy that achieves the optimal transport cost for several univariate distributions like Gaussian, uniform and triangular. Using the optimal couplings, we obtain the optimal adversarial classifiers in these settings and show how they differ from optimal classifiers in the absence of adversaries. Based on our analysis, we evaluate algorithm-independent fundamental limits on adversarial risk for CIFAR-10, MNIST, Fashion-MNIST and SVHN datasets, and Gaussian mixtures based on them. In addition to the $0$-$1$ loss, we also derive bounds on the deviation of optimal risk and optimal classifier in the presence of adversaries for continuous loss functions, that are based on the convexity and smoothness of the loss functions.
\end{abstract}

\section{Introduction}

Deep learning has had tremendous success in recent times, producing state-of-the-art results in image classification \citep{KriEtal12, HeEtal16}, game playing \citep{SilEtal16, SilEtal18, VinEtal19}, speech \citep{HinEtal12, GraEtal13} and natural language processing \citep{YouEtal18, DevEtal19}. However, Szegedy et al.~\citep{SzeEtal13} discovered that these algorithms are surprisingly vulnerable to minute adversarial perturbations. Many {\em adversarial attacks}~\citep{AthCar18, CarWag17, GooEtal14} and defenses~\citep{MadEtal18, PapEtal16, CisEtal17} have been proposed since. Often, the defenses are subsequently broken or are computationally intractable in practice. %Recent work has focused on certifiable defenses that are provably robust against a pre-specified class of adversaries \citep{CohEtal19, SinEtal17, RagEtal18}.

The reason for existence of adversarial examples in deep learning is unknown, but many explanations have been suggested. One line of work hypothesizes that adversarial examples are inevitable in certain high-dimensional settings \citep{ShaEtal19, MahEtal19}. %Specifically, it has been shown that any classifier that works on data  sampled from concentrated metric probability spaces is susceptible to a high adversarial risk. For instance, when the input distribution is uniform over a high dimensional sphere \citep{GilEtal18} or Boolean hypercube \citep{DioEtal18}, or when the latent space of the data is a high dimensional Gaussian \citep{FawEtal18}, the adversarial risk can be significantly higher than standard risk, even for small $\epsilon$. 
Goodfellow et al.~\citep{GooEtal14} propose that the reason for adversarial examples may be the linear nature of deep neural networks.
Ilyas et al.~\citep{IlyEtal19} propose that adversarial examples correspond to non-robust features in the data that are highly predictive, but brittle.
Moreover, it was recently proposed that adversarial risk may be fundamentally at odds with standard risk---a claim that finds support both in theory \citep{TsiEtal18} and in practice \citep{SuEtal18}.

In this paper, we deviate from algorithm-dependent investigations of adversarial examples and ask two fundamental questions:

\begin{question}\label{q_1}
How much can the optimal adversarial risk differ from optimal standard risk? In the binary classification setting, we may equivalently ask: How much will the classification error increase due to an adversary---an increase that cannot be mitigated by \emph{any} algorithm?
\end{question}

\begin{question}\label{q_2}
How does the optimal adversarial classifier differ from the standard optimal classifier?
\end{question}

Recent works have addressed Question~\ref{q_1} by deriving upper  and lower bounds on the optimal adversarial risk with respect to a fixed set of classifiers, by extending the PAC learning theory to encompass adversaries~\cite{KhiLoh18, YinEtal18}. 
A related question asks how much adversarial perturbation is sufficient to make the optimal adversarial risk significantly greater than the optimal standard risk. Relevant works in this direction develop robustness metrics that depend on the classifier \citep{WenEtal18, ZhaEtal18, HeiEtal17}. %In addition to Question~\ref{q_1}, one might also consider the nature of the optimal classifier under the standard and adversarial settings. This motivates the following question: 
For Question~\ref{q_2}, recall that the optimal classifier without an adversary is simply the Bayes optimal classifier. In adversarial settings, the optimal classifier may differ considerably from the Bayes optimal classifier. 
A recent line of work shows that optimal adversarial classifier can be calculated using non-parametric methods if data are well-separated \citep{YanEtal20, BhaCha20}.
The work of Moosavi-Dezfooli et al.~\citep{MooEtal18}, Cohen et al.~\citep{CohEtal19} and Yang et al.~\citep{YanEtal20_2} suggests that the optimal adversarial classifier has smoother boundaries than the optimal standard classifier. Even so, the question of how the optimal adversarial classifier differs from the standard one remains open. 

 %empirically observed that adversarial training significantly reduces the curvature of the loss function with respect to the input.  Another line of work attempts to construct a provably robust classifier from a baseline classifier using randomized smoothing \cite{CohEtal19}. These works provide a clue that the optimal classifier may in fact be different in the presence of an adversary. 
%{\red Move this to the discussion? [[}Even so, many other interesting questions remain. For instance, is the optimal classifier without an adversary approximately the same as the optimal classifier with a small adversary (i.e., small $\epsilon$)? If decision boundaries change, do they change smoothly with increasing strength of an adversary, or do they change drastically? {\red ]]}

The closest work to ours is Bhagoji et al.~\cite{BhaEtal19}, which develops algorithm-independent lower bounds for learning in the presence of an adversary. Specifically, \cite{BhaEtal19} contains a similar result to our Theorem~\ref{th_01bound} which gives the optimal adversarial risk for binary classification with $0$-$1$ loss in terms of an optimal transport cost between the probability distributions of the two classes. We provide a new, simpler proof of this characterization by applying the Kantorovich duality of optimal transport for $0$-$1$ cost functions. We shall discuss the results from Bhagoji et al.~\cite{BhaEtal19} and compare these with our results at appropriate points in the paper.

\subsection*{Our contributions}

In this paper, we consider \emph{data perturbing adversaries} that manipulate sampled data points, and \emph{distribution perturbing adversaries} that manipulate the data generating distribution itself. We show that these two adversaries are closely related and establish the precise relationship between them.
We primarily focus on the binary classification setting under $0$-$1$ loss function. 
We answer Question~\ref{q_1} by providing algorithm-independent bounds for adversarial risk that are agnostic to the classifier. 
We answer Question~\ref{q_2} by deriving the optimal adversarial classifier in some special settings, and by providing bounds on the deviation of the optimal adversarial classifier from the standard optimal classifier in more general settings.
Our contributions are listed below.

\begin{itemize}
    \item[(1)] We resolve Question~\ref{q_1} in the binary classification with 0-1 loss setting by deriving a formula for the optimal adversarial risk in terms of an optimal transport cost between the two data distribution. Our proof is novel and simple and connects adversarial machine learning to well-known results in optimal transport theory.
    %We provide a new and simple proof for the characterization of optimal adversarial risk for $0$-$1$ loss functions in terms of an optimal transport cost between the two data generating distributions, where the transport cost is given by $c_\epsilon(x, x') = \1\{d(x,x')>2\epsilon\}$ and $\epsilon$ is the perturbation budget of the adversary. This is analogous to the optimal risk (i.e., Bayes risk) for binary classification in the standard setting, which is a function of the total variation distance between the two data generating distributions, which in turn is also an optimal transport cost with transport cost $c_0(x, x') = \1\{d(x,x')>0\}$. This completely answers Question~\ref{q_1} for this setting. Our proof establishes connections between adversarial machine learning and well-known results in the theory of optimal transport. 

    \item[(2)] We construct optimal couplings for the optimal transport cost from (1) when the two data distributions are univariate normal, uniform over intervals, and triangular. We resolve Question~\ref{q_2} in these cases by determining the optimal adversarial classifiers using the optimal couplings. Our results indicate that the decision boundary can be sensitive to the adversary's budget.
    %We propose a novel coupling strategy that achieves the proposed optimal transport cost between the two class-conditional densities for several univariate distributions like Gaussian, triangular and uniform. Using the analysis of optimal couplings, we obtain the optimal adversarial classifiers for these settings. This answers Question~\ref{q_2} in these settings, and shows how the decision boundary of the optimal classifier changes in the presence of adversaries. In certain cases, we show that the decision boundary can change arbitrarily, even for small changes in the adversary budget $\epsilon$.

    \item[(3)] We calculate the optimal adversarial risk for the CIFAR10, MNIST, Fashion-MNIST, and SVHN datasets. We perform a similar calculation for data-augmented versions of these datasets. The non-zero values resulting from these calculations highlight the impossibility of being completely accurate---even on the training set---in adversarial settings.
    %Using our analysis for $0$-$1$ loss, we obtain the exact optimal risk attainable for a range of adversarial budgets under $\ell_2$-norm and $\ell_\infty$-norm perturbation of data, for several real-world datasets, namely CIFAR10, MNIST, Fashion-MNIST, and SVHN. 
    %In addition, we analyze Gaussian mixture distributions based on these datasets and compute lower bounds on the optimal adversarial risk  for them. These bounds indicate the optimal adversarial error  achievable with data augmentation using Gaussian perturbations.
        
	\item[(4)] We partially address Question~\ref{q_1} for continuous loss functions by deriving upper and lower bounds on the optimal adversarial risk which depend on convexity and smoothness assumptions of the loss with respect to data. We also partially address Question~\ref{q_2} by upper bounding how much the optimal hypothesis with an adversary can deviate from the optimal hypothesis without an adversary. These bounds are in terms of the curvature of the loss function with respect to the parameters of the hypotheses.
	%Addressing Questions~\ref{q_1} and \ref{q_2} is more challenging for general continuous loss functions. We provide some preliminary results to address these that include upper and lower bounds on the optimal adversarial risk. These bounds depend on the convexity and smoothness of the loss function with respect to the data. For Question~\ref{q_2}, we prove a result that quantifies the maximum amount by which the optimal classifier can deviate in the presence of an adversary, in terms of the curvature of the loss function at the optimal classifier for standard risk.
        
\end{itemize}

\paragraph{Structure:}  The rest of the paper is structured as follows: In Section~\ref{sec: preliminaries}, we introduce the data-perturbing and distribution-perturbing models of adversaries and show that the data-perturbing adversary may be considered to be a special case of the distribution-perturbing adversary. We also discuss related work, especially, other related notions of adversaries studied in the literature.
In Section~\ref{sec: opt risk 01 loss}, we discuss the optimal adversarial risk for binary classification with $0$-$1$ loss. We settle Question~\ref{q_1} in this setting by introducing the $D_\epsilon$ optimal transport cost that completely characterizes the optimal risk.
In Section~\ref{sec: couplings} we present a coupling strategy that achieves the optimal transport cost in special cases of interest in the univariate case. Using this coupling, we obtain the optimal adversarial classifier, thus settling Question~\ref{q_2} for these special cases.
In Section~\ref{sec: opt risk gen loss}, we discuss the optimal risk for general loss functions and present our bounds on the optimal adversarial risk.
In Section~\ref{sec: opt classifier}, we discuss optimal classifiers for general loss functions and present our deviation bounds on the optimal adversarial classifier.
Finally, in Section~\ref{sec: expts}, we present adversarial risk lower bounds for real world datasets and evaluate our bounds for $0$-$1$ loss function.

\paragraph{Notation:}
The complement of a set $A$ is denoted by $A^c$. The  indicator function that maps all the inputs satisfying condition $C$ to $1$ and the rest to $0$ is denoted by $\1{\{C\}}$. 
The set of probability measures over the measure space $(\cX, \sigma(\cX))$, where $\cX$ is a Polish space and $\sigma(\cX)$ is the Borel sigma algebra over $\cX$, is denoted by $\cP(\cX)$.
For any two probability measures $\mu, \nu\in \cP(\cX)$, the set of all joint probability measures (or couplings) over $\cX\times\cX$ with marginals $\mu$ and $\nu$ is denoted by $\Pi(\mu, \nu)$.
The total variation distance and $p$-Wasserstein distance between $\mu$ and $\nu$ is denoted by $D_{TV}(\mu, \nu)$ and ${W_p}(\mu, \nu)$, respectively. A norm and its dual are denoted by $\lVert\cdot\rVert$ and $\lVert\cdot\rVert_*$, respectively. The cumulative distribution function (cdf) of the standard normal distribution is denoted by $\Phi$ and its tail distribution is denoted by $Q(x) := 1- \Phi(x)$.

\section{Models in adversarial machine learning}\label{sec: preliminaries}

%In this section, we introduce the problem setup of learning in the presence of an adversary. In Section~\ref{sec: optimal_transport}, we introduce some definitions from optimal transport that will be used throughout the paper. In section~\ref{sec: types_of_adversaries} we introduce two notions of adversaries and compare them with existing notions in the literature.

We describe models of adversaries that are commonly invoked in machine learning and highlight connections between them. We use the following convention: Let $(\cX, d)$ denote a separable Hilbert space with metric $d$ for the data points and $\cY$ denote the finite set of discrete labels assigned to the data-points. Let $\rho$ be the data distribution which we express as $\rho_y(y)\rho_{x|y}(x)$, where $\rho_y(y)$ is the marginal probability of label $y \in \cY$ and $\rho_{x|y}(\cdot)$ is the conditional distribution of $X$ given $Y = y$. Let the hypothesis class be $\cW$. Let $\ell: (\cX\times \cY)\times\cW\to \mathbb{R}^+$ denote a loss function such that $\ell((\cdot, \cdot), w)$ is $\rho$-measurable for all $w\in \cW$. Let $\cZ \defn \cX \times \cY$ and $z \defn (x,y) \in \cZ$.

\subsection{Types of adversaries: Informal description}

To quantify the impact of an adversary, several notions of adversarial risk have been proposed in the literature. We highlight two most popular notions: (i) adversary perturbs data points, and (ii) adversary perturbs data distributions. 

\paragraph{Data perturbing adversary:}

A data-perturbing adversary of budget $\epsilon$ can perturb $x \in \cX$ to any $x'\in \cX$ such that $d(x,x')\leq \epsilon$. The adversary wishes to maximize loss, and so would choose $x'$ accordingly. A natural definition for adversarial loss (or robust loss) incurred by a hypothesis $w\in \cW$ for an adversary with budget $\epsilon$ is \citep{MadEtal18, ShaEtal18}:
\begin{align}\label{eq: data_loss}
    R_\epsilon(\ell, w) = \E_{(x,y)\sim\rho_y\rho_{x|y} }\left[\sup_{d(x,x')\leq\epsilon}\ell((x', y),w)\right].
\end{align}

\paragraph{Distribution perturbing adversary:} 
The adversarial loss incurred by a hypothesis $w\in \cW$ in the presence of a distribution perturbing adversary with a budget $\epsilon$ is defined as follows:
\begin{align}\label{eq: distr_loss1}
    \widehat R_\epsilon(\ell, w) = \sup_{\rho' \in B_\epsilon(\rho)} \E_{z \sim \rho'} \ell(z,w),
\end{align}
where $B_\epsilon(\rho)$ may be thought of as a ball of radius $\epsilon$ around $\rho$, the true data generating distribution. The Wasserstein distance has been one of the more popular metrics used to define $B_\epsilon(\cdot)$ in the space of distributions \citep{Woz14, BlaMur16, GaoKle16, GaoEtal17, EsfKuh18, ZhuEtal19}. 

\subsection{Types of adversaries: Formal description}

\paragraph{Data perturbing adversary:} 
Formulation~\eqref{eq: data_loss} is adequate for most practical purposes, but runs into measure-theoretic difficulties due to the arbitrary choice involved in the adversary's perturbations. Even if the adversarial map $x\to x'$ is $\rho$-measurable, the function $\sup_{d(x,x')\leq\epsilon}\ell((x', y),w)$ may not be $\rho$-measurable. Moreover, the adversary may not use a deterministic mapping to perturb the data points, and rather do so randomly.
In light of these considerations, we redefine a data perturbing adversary to be a collection of Markov kernels indexed by $y \in \cY$ denoted by $\kappa_y: \cX\times\sigma(\cX)\to [0,1]$. Equivalently, for each $y \in \cY$ the kernel $\kappa_y$ satisfies:
\begin{enumerate}
\item For all $x\in\cX$, the map $A\to \kappa_y(x, A)$ is a probability measure denoted by $\kappa_{y,x}\in \cP(\cX)$,
\item For all $A\in \sigma(\cX)$, the map $x\to \kappa_{y,x}(A)$ is measurable. 
\end{enumerate}
 Let the collection of kernels indexed by $y$ be $\kappa \defn \{\kappa_y\mid y\in \cY\}$. It is useful to think of $\kappa_{y,x}$ as the adversary's perturbation strategy after observing the sample $x$ and its label $y$ -- the adversary perturbs $x$ to $x'$ where the latter is the result of passing $x$ through the Markov kernel $\kappa_{y,x}$. The collection of kernels $\kappa$ completely describes  the adversary's strategy. %For a true data distribution $\chi\in \cP(\cX)$,
%we define a joint probability measure, $\pi_{\chi, \kappa} \in \cP(\cX\times \cX)$ as follows. For $A,B\in \sigma(\cX)$,
%\begin{align*}
%\pi_{\chi, \kappa}(A\times B) = \int_{x\in A} \kappa_x(B) d\chi.
%\end{align*}
We use $\rho^\kappa(x,y,x')$ to denote the joint distribution of $(x,y,x')$ induced by $\kappa$. Let the joint distribution of $(x, x')$ conditioned on $y$ be denoted by $\rho^\kappa_{(x,x')|y} \in \cP(\cX \times \cX)$, and the conditional distribution of $x'$ given $y$ be denoted by $\rho^\kappa_{x'|y} \in \cP(\cX)$.
We say that the adversary $\kappa$ has a budget of $\epsilon\geq 0$, denoted by $\kappa \in K_\epsilon$, if the following holds $\rho_y$-almost surely:
\begin{align}\label{eq: adv_constraint}
	\esssup_{(x, x') \sim \rho^\kappa_{(x,x')|y}} d(x,x') \leq \epsilon.
\end{align}
%We note that the constraint in \eqref{eq: adv_constraint} allows the adversary $\kappa$ to perturb data points from a $\chi$-null set by an arbitrary amount. 
%The distribution of $(x,y,x')$ for the $\kappa$ adversary is $\chi(x)\psi_x(y)\kappa_x(x')$, which we denote by $\chi\psi_x\kappa_x$. 
%Let $K_\epsilon$ denote the set of all Markov kernels over $\cX\times \sigma(\cX)$ satisfying the constraint in \eqref{eq: adv_constraint}. 
Then, we have the following definition for the adversarial risk incurred by a hypothesis $w\in \cW$ in the presence of a data perturbing adversary of budget $\epsilon$:
\begin{align}\label{eq: data_loss2}
	R_\epsilon(\ell, w) = \sup_{\kappa\in K_\epsilon} \E_{(x,y, x')\sim \rho^\kappa(x,y,x')} \left[ \ell((x', y), w) \right].
\end{align}
We will use the definition in \eqref{eq: data_loss2} rather than the one in \eqref{eq: data_loss} to denote the adversarial loss unless specified otherwise. 
 
The optimal adversarial loss attainable over the hypotheses $w\in \cW$ is defined as the {\em optimal adversarial risk} or {\em optimal robust risk},
\begin{align}\label{eq: optimal distr. robust risk}
    R^*_\epsilon = \inf_{w\in\cW} R_\epsilon(\ell, w).
\end{align}
The hypothesis attaining the optimal adversarial risk (if it exists) is called the {\em optimal adversarial hypothesis} and is denoted by $w^*_\epsilon$. Note that for $\epsilon=0$, we have $x=x'$ almost surely, and so the adversarial risk equals Bayes risk,
$R^*_0 = \inf_w \E_{z\sim\rho}\left[ \ell((x,y),w) \right]$.

\paragraph{Distribution perturbing adversary:}

Since distribution perturbing adversaries considered in this paper rely on optimal transport distances, we introduce optimal transport briefly. Let $\mu, \nu\in \cP(\cX)$ and let $c: \cX\times\cX\to\mathbb{R}^+$ denote the cost $c(x,x')$ of transporting unit mass from $x\in\cX$ to $x'\in\cX$. The optimal transport cost between $\mu$ and $\nu$ is given by,
\begin{align}\label{eq: OT_cost}
	\cT_c(\mu, \nu) = \inf_{\pi\in\Pi(\mu, \nu)}\E_{(x,x')\sim\pi} c(x, x'),
\end{align}
where $\Pi(\mu, \nu)$ is the set of all couplings between $\mu$ and $\nu$. When $c(x,x')=d(x,x')$, where $d$ is a metric over $\cX$, the optimal transport cost is the $1$-Wasserstein distance; i.e., $W_1(\mu, \nu) = \cT_d(\mu, \nu)$. For $p\geq 1$, the $p$-Wasserstein distance is given by $W_1(\mu, \nu) = \left(\cT_{d^p}(\mu, \nu)\right)^p$. The $\infty$-Wasserstein distance is defined to be the limit of the $p$-Wasserstein distances as follows: $W_\infty(\mu, \nu) = \lim_{p\to\infty} W_p(\mu, \nu)$. Alternatively, the $\infty$-Wasserstein distance is also characterized as follows \citep{GivSho84}.
\begin{align}
W_\infty(\mu, \nu) &= \inf\{\delta > 0~:~\mu(A) \leq \nu(A^\delta)\text{ for all measurable~} A\}\label{eq: W_infty_def_sets}\\
&= \inf_{\pi\in\Pi(\mu, \nu)} \esssup_{(x,x')\sim\pi} d(x, x') \label{eq: W_infty_def_esssup}.
\end{align}
For $1\leq p\leq q$, we have $W_p(\mu, \nu)\leq W_q(\mu, \nu)\leq W_\infty(\mu, \nu)$. Hence, the $W_\infty$-metric is stronger than any $W_p$-metric. 

%For each $y \in \cY$, let $B_\epsilon(\rho_y) = \{\mu \in \cP(\cX) \mid W_p(\mu, \rho_y) \leq \epsilon}$. Let the collection of all these balls be $B_\epsilon = \{B_\epsilon(y) \mid y \in \cY\}$. 
An adversary $\gamma$ is a collection of distributions over $\cX$ indexed by $y$; i.e., $\gamma = \{\rho^\gamma_{x'|y}\in \cP(\cX) \mid y\in \cY\}$. An adversary $\gamma$ is said to have budget $\epsilon$ in the $p$-Wasserstein space if, the following holds $\rho_y$-almost surely: 
\begin{align*}
W_p(\rho^\gamma_{x|y}, \rho_{x|y}) \leq \epsilon.
\end{align*}
This is denoted by $\gamma \in \Gamma^p_\epsilon$.
Note that $\Gamma^q_\epsilon \subseteq \Gamma^p_\epsilon$ for $1\leq p\leq q \leq \infty$.
It is useful to think of $\rho^\gamma_{x'|y}$ as the adversary's strategy after observing the sample $(x,y)$. The adversary perturbs $x$ to $x'$ such that $x' \sim \rho^\gamma_{x'|y}$ and $x$ and $x'$ are conditionally independent given the label $y$. The collection of distributions $\{\rho^\gamma_{x'|y} \mid y\in \cY\}$ completely describes the adversary's strategy. The data distribution after the adversary's action is $(x', y) \sim \rho_y(y)\rho^\gamma_{x'|y}$.
Let $\rho^\gamma(x',y)\in \cP(\cX\times \cY)$ denote this distribution.
For a loss function $\ell$ and hypothesis $w$, the adversarial risk is defined as
\begin{align}\label{eq: distr_loss1_formal}
    \widehat R_\epsilon^p(\ell, w) = \sup_{\gamma \in \Gamma_\epsilon^p} \E_{(x',y) \sim \rho^\gamma(x',y)} \ell((x',y),w).
\end{align}

The optimal adversarial loss attainable over the hypotheses $w\in \cW$ is defined as the {\em optimal adversarial risk} or {\em optimal robust risk},
\begin{align*}
   \widehat R^{p,*}_\epsilon = \inf_{w\in\cW} \widehat R^p_\epsilon(\ell, w).
\end{align*}
The hypothesis attaining the optimal adversarial risk (if it exists) is called the {\em optimal adversarial hypothesis} and is denoted by $\widehat w^*_\epsilon$. Note as before that  for $\epsilon=0$, adversarial risk equals Bayes risk.

\paragraph{Relation between the two types of adversaries:}
Our goal is to show that the data perturbing adversary is a special case of the distribution perturbing adversary in the following sense: The adversarial loss incurred under $K_\epsilon$ is identical to the adversarial loss incurred under a $\Gamma^\infty_\epsilon$. (Note that the adversaries themselves are different due to the conditional independence of $x$ and $x'$ given $y$ for the distribution perturbing adversary.) This is shown in the following theorem.

\begin{theorem}\label{thm: two_types_of_adv}
Let $\cX$ be a separable Hilbert space, let $\cY$ be a discrete set of labels, let $\cW$ be a hypothesis class, and let $\ell: \cZ \times \cW \to \real_+$ be a $\rho$-measurable loss function. Let 
$$R_\epsilon(\ell, w) = \sup_{\kappa\in K_\epsilon} \E_{(x,y,x')\sim \rho^\kappa(x, y, x')} \left[ \ell((x', y), w) \right]$$
and 
$$\widehat R^\infty_\epsilon(\ell, w) = \sup_{\gamma\in \Gamma^\infty_\epsilon}  \E_{(x',y)\sim \rho^\gamma(x',y)}\left[\ell((x, y),w)\right].$$ Then $R_\epsilon(\ell, w) = \widehat R_\epsilon^\infty(\ell, w)$. 
%\begin{align}
%	R_\epsilon(\ell, w)
%	= \sup_{\kappa\in K_\epsilon} \E_{(x,y,x')\sim \rho_{x|y}\rho_y\kappa_{y,x}} \left[ \ell((x', y), w) \right]
%	= \sup_{\gamma\in \Gamma^\infty_\epsilon}  \E_{(x',y)\sim \rho'_{x|y}\rho_y}\left[\ell((x, y),w)\right].
%\end{align}
\end{theorem}
\begin{proof} 
Let $\kappa\in K_\epsilon$.  Observe that the adversarial loss depends only on the marginal distribution of $(x',y)$ in the joint distribution of $(x,y, x')$ induced by $\kappa$. 
Recall that we denote the joint distribution of $(x, x')$ conditioned on $y$ by $\rho^\kappa_{(x,x')|y} \in \cP(\cX \times \cX)$, and the conditional distribution of $x'$ given $y$ by $\rho^\kappa_{x'|y} \in \cP(\cX)$. %, which is given by
%\begin{align*}
%\rho'(x',y) = \int_{x \in \cX} \rho(x,y)\kappa_{y,x}(x') dx.
%\end{align*}
%Express $\rho'(x,y) = \rho'_y(y)\rho'_{x|y}(x)$. 
%We consider an alternate adversary $\kappa'$ such that $\kappa'_{y,x} = \rho^\kappa_{x'|y}(\cdot)$; i.e., the kernel $\kappa'_{y,x}$ depends only on $y$. Observe that $\kappa$ and $\kappa'$ lead to the same joint distribution of $(x',y)$ and so the adversarial loss is identical for both.

Define  a distribution perturbing adversary $\gamma$ as follows: $\gamma = \{\rho^\gamma_{x'|y} := \rho^\kappa_{x'|y} \  |\ y\in \cY \}$. The key point to note here is that the marginal distribution $\rho^\gamma(x',y)$ is identical to $\rho^\kappa(x',y)$, and so the adversarial risk for both adversaries is identical. For $y\in \cY$,
\begin{align*}
W_\infty(\rho_{x|y}, \rho^\gamma_{x'|y}) &= W_\infty(\rho_{x|y}, \rho^\kappa_{x'|y})
\leq \esssup_{(x,x')\sim \rho^\kappa_{(x, x')|y}} d(x,x')
\leq \epsilon.
\end{align*}
Here the first inequality follows since $W_\infty$ is the infimum of the essential supremum over all couplings and $\rho^\kappa$ is just one such coupling, and the final inequality follows since $\kappa \in K_\epsilon$. This shows that for every $\kappa \in K_\epsilon$, there is a $\gamma \in \Gamma^\infty_\epsilon$ that achieves the same adversarial risk, and so we conclude $R_\epsilon(\ell, w) \leq \widehat R_\epsilon^\infty(\ell, w).$

We will now prove the above inequality in the reverse. 
Let $\gamma= \{\rho^\gamma_{x'|y}\  |\ y\in \cY \}\in \Gamma^\infty_\epsilon$. 
Then, $W_\infty(\rho_{x|y},\rho^\gamma_{x'|y})\leq \epsilon$ for all $y\in \cY$.
Fix a $y\in \cY$. By the definition of $W_\infty$, we have that for any positive integer $n$, there exists a joint probability measure $\pi_n\in \Pi(\rho_{x|y}, \rho^\gamma_{x'|y})$ such that  $\esssup_{(x,x')\sim \pi_n} d(x,x')<\epsilon+ 1/n$.
We now show that the sequence of measures $\pi_n$ is tight. Given a $\delta>0$, let $E\subseteq \cX$ be a compact set such that $\min\{\rho_{x|y}(E), \rho^\gamma_{x'|y}(E)\}>1-\delta/2$. Then, 
\begin{align*}
\pi_n((E\times E)^c) \leq \rho_{x|y}(E^c) + \rho^\gamma_{x'|y}(E^c) <\delta.
\end{align*}
Hence, by Prokhorov's theorem (for reference, see Theorem 5.1 in \citep{Bil13}), there is a
subsequence of $(\pi_n)$ that converges weakly to a
 probability measure $\pi^* \in \Pi(\rho_{x|y}, \rho^\gamma_{x'|y})$ that satisfies,
\begin{align*}
\esssup_{(x,x')\sim \pi^*} d(x,x')\leq\epsilon.
\end{align*}
Since $\cX$ is a complete and separable space, there exists a Markov kernel $\lambda_y: \cX\times \sigma(\cX)\to [0,1]$ such that for any $A,B\in \sigma(\cX)$, we have
\begin{align*}
\pi^*(A\times B) = \int_{x\in A} \lambda_{y,x}(B) d\rho_{x|y}.
\end{align*}
The existence of such a probability kernel is guaranteed by the product-restricted conditional probability property of Radon spaces (see Theorem 3.1 in \citep{LeaEtal04}). 
Repeating the above argument, we construct a kernel $\lambda_y$ for each $y\in \cY$ satisfying $\esssup_{(x,x')\sim \rho_{x|y}\lambda_{y,x}} d(x,x')\leq \epsilon$. Then, $\lambda := \{\lambda_y \ | \ y\in \cY\} \in K_\epsilon$. Moreover, the joint distribution of $(x',y)$ under $\rho^\gamma_{(x',y)}$ is identical to $\rho^\lambda_{(x',y)}$, and so the corresponding adversarial risks are identical.
Hence, for every $\gamma \in \Gamma^\infty_\epsilon$ there exists a $\lambda \in K_\epsilon$ that achieves an identical adversarial risk. This leads to the conclusion $\widehat R_\epsilon^\infty(\ell, w) \leq R_\epsilon(\ell, w)$, which completes the proof.
\end{proof}

For any $p \geq 1$, a distribution perturbing adversary of budget $\epsilon$ in $p$-Wasserstein space is   more powerful than a data perturbing adversary of the same budget, as shown by the following corollary:
\begin{corollary}\label{thm: stronger adversary}
For $p \geq 1$, consider a $W_p$-data-perturbing adversary of budget $\epsilon$. The following inequality holds:
\begin{align}
R_\epsilon(\ell, w) \leq \widehat{R}_\epsilon^p(\ell, w),
\end{align}
for all $w\in \cW$. Moreover, $R^*_\epsilon \leq \widehat R^{p,*}_\epsilon$.
\end{corollary}
\begin{proof}
We have the following sequence of inequalities:
\begin{align}
R_\epsilon(\ell, w)
&= 
\sup_{\gamma\in \Gamma^\infty_\epsilon}  \E_{(x',y)\sim \rho^\gamma{(x',y)}}\left[\ell((x', y),w)\right]\label{eq: adv1}\\
&\leq \sup_{\gamma\in \Gamma^p_\epsilon}  \E_{(x',y)\sim \rho^\gamma{(x',y)}}\left[\ell((x', y),w)\right]\label{eq: adv2}\\
&= \widehat{R}_\epsilon^p(\ell, w)\nonumber,
\end{align}
where the equality in~\eqref{eq: adv1} follows from Theorem~\ref{thm: two_types_of_adv} and the inequality in~\eqref{eq: adv2} follows from the fact that $\Gamma^\infty_\epsilon\subseteq \Gamma^p_\epsilon$. Taking infimum over $w\in \cW$ on both sides of the inequality, we get $R^*_\epsilon \leq \widehat R^{p,*}_\epsilon$.
\end{proof}

To see that the inequality in Corollary~\ref{thm: stronger adversary} can be strict, consider the following example of binary classification with $0$-$1$ loss. Let $P(X|Y=0)$ (denoted by $p_0$) be a uniform distribution over $[0,1]$ and $P(X|Y=1)$ (denoted by $p_1$) be a constant distribution at $X=0$. Let both the classes be equally likely. In this case, $W_p(p_0, p_1) = \left(\int_0^1 x^p dx\right)^\frac{1}{p} = 1/(p+1)^\frac{1}{p}$. Hence, $W_1(p_0, p_1) = 1/2$, while $W_\infty(p_0, p_1) = 1$. Taking the adversarial budget to be $\epsilon = 3/4$, it is easy to see that $\widehat{R}^{1,*}_\epsilon = 1/2$ because both the distributions are within the perturbation budget of the $W_1$-distribution perturbing adversary. But $R^*_\epsilon = \epsilon/2 = 3/8 < \widehat{R}^{1,*}_\epsilon$, which is achieved by the optimal classifier which declares label $0$ on the set $[\epsilon, 1]$ and $0$ otherwise.

\paragraph{A remark on the risk bounds for adversaries:} All risk bounds proved in this paper are valid for both adversaries. Since the distribution perturbing adversary is stronger than the data perturbing adversary, any lower bound that holds for the latter holds for the former. Analogously, any upper bound for the distribution perturbing adversary holds for the data perturbing adversary.

%\subsection{Types of adversaries}\label{sec: types_of_adversaries}

\subsection{Related work}\label{sec: related work}

\paragraph{Related notions of adversarial risk:}

In this paper, we assume that the true data distribution $\rho(x,y)$ is expressed as $\rho_y(y)\rho_{x|y}(x)$. This model allows for randomness in the label $y$ for a fixed $x$. A special case of this model is when the existence of a true labelling function is assumed; i.e., there exists a function $c: \cX\to \cY$ such that $c(x)$ is the true label of $x$ for any $x\in \cX$. That is, $\rho(x,y) = \1\{y=c(x)\}\rho_x(x)$.
Under this model, Gourdeau et al. \citep{GouEtal19} define \emph{constant-in-the-ball risk} as
\begin{align}\label{eq: related_risk_1}
	R(h) = \mathbb{P}_{x\sim \rho_x}[\exists x': d(x,x')\leq \epsilon, \ h(x')\neq c(x)].
\end{align}
Rewriting in terms of expectation, we get the following.
\begin{align*}
	R(h) = \E_{x\sim \rho_x}\1\{\exists x': d(x,x')\leq \epsilon, \ h(x')\neq c(x)\}
	=  \E_{x\sim \rho_x}\left[\sup_{d(x,x')\leq \epsilon} \1\{ h(x')\neq c(x)\}\right].
\end{align*}
Hence,  the \emph{constant-in-the-ball risk} defined above is identical to adversarial risk defined in \eqref{eq: data_loss} for $0$-$1$ loss function under hypothesis $h$. The same notion of risk is also called the \emph{corrupted instance risk} in Diochnos et al. \citep{DioEtal18}.

A related notion of adversarial risk is the following:
\begin{align}\label{eq: related_risk_2}
R'(h) = \E_{x\sim \rho_x}\left[\sup_{d(x,x')\leq \epsilon} \1\{ h(x')\neq c(x')\}\right].
\end{align}
Here, the loss on the perturbed data point is evaluated with respect to the true label at the perturbed data point; i.e., $c(x')$ rather than the true label of the original data point $c(x)$. This notion of adversarial risk is termed \emph{exact-in-the-ball risk} in Gourdeau et al. \citep{GouEtal19} and \emph{error-region risk}  in Diochnos et al. \citep{DioEtal18}.
A key difference between $R(h)$ in \eqref{eq: related_risk_1} and $R'(h)$ in \eqref{eq: related_risk_2} is that $R'(h)$ is exactly equal to $0$ for $h=c$ for any $\epsilon\geq 0$ whereas $R(h)$ may be strictly positive even for $h=c$. Thus, the definition of $R'(h)$ allows for the existence of an optimal classifier whose adversarial risk is $0$, while the optimal classifier that minimizes $R(h)$ may still have a non-zero adversarial risk.
As noted in Gourdeau et al. \citep{GouEtal19}, $R(h)$ measures the sensitivity of the output label to corruptions in the input, while $R'(h)$ measures how well a hypothesis fits the ground-truth even with corrupted inputs.

Tu et al. \citep{TuEtal19} and Staib and Jegelka \citep{StaJeg17} use the adversarial risk of \eqref{eq: data_loss} to establish a version of 
Theorem~\ref{thm: two_types_of_adv}. However, they implicitly assume that there exists a measurable function that maps $x\to x'$ such that the supremum in \eqref{eq: data_loss} is attained for all $x\in \cX$. As explained previously, this is not true in general. 
Pinot et al. \citep{PinEtal20} define a notion of adversarial risk using a class of adversaries that use measurable maps to perturb data points as 
\begin{align*}
 R'_\epsilon(\ell, w) = \sup_{f\in \cF_\epsilon}\E_{(x,y)\sim \rho} [\ell((f(x), y),w)],
\end{align*}
where $\cF_\epsilon$ is the set of all $\rho$-measurable functions satisfying the budget constraint $\esssup_{x\sim \chi} d(x, f(x))\leq \epsilon$.
This definition is a special case of the definition in \eqref{eq: data_loss2}, where $K_\epsilon$ is restricted to the set of probability kernels with $k_x = \delta_{f(x)}$ (i.e. a probability measure with a single atom at $f(x)$) for $f\in \cF_\epsilon$.

\paragraph{Surrogates for the adversarial risk:}
Before adversarial deep learning, minimax risk was studied in the context of robust classification with linear classifiers and SVMs \citep{LanEtal02, ShiEtal06, XuEtal09, AbaEtal15, ChePas18}. Here, one proposes surrogate robust loss functions that can be tractably minimized. A similar strategy for minimizing adversarial loss may be found in~\citep{KhiLoh18}. For a discussion of surrogate losses, we refer the reader to Bao et al. \citep{BaoEtal20}.

In practice, the inner maximization term in the adversarial risk is approximated using gradient methods like the Fast Gradient Sign Method (FGSM) \citep{GooEtal14, CarWag17}. This gives rise to several related notions of risk that can be interpreted as a Taylor approximations for adversarial risk in definition \eqref{eq: data_loss}. \citep{MadEtal18, ShaEtal18}. 

Surrogate loss functions for ensuring Wasserstein distributional robustness have been proposed in~\citep{GaoKle16, EsfKuh18}, and robustness with respect to other optimal transport-based  perturbations is studied in~\citep{BlaMur19}. A key idea in these works is the dual formulation of optimal transport distances. As shown in Theorem~\ref{thm: two_types_of_adv}, adversarial robustness is equivalent to $W_\infty$-distributional robustness. However, the recent work on optimal transport-based robustness cannot be readily extended to the $W_\infty$-case because the $W_\infty$-metric does not admit a transport-cost minimizing formulation (for instance, compare \eqref{eq: OT_cost} with \eqref{eq: W_infty_def_esssup}) and so the classic Kantorovich-Rubinstein duality cannot be applied.
%Other related work on distributional robustness is discussed in the next paragraph.

\paragraph{Related notions of distributionally robust risk:}
The adversarial risk formulation under a distribution perturbing adversary has been widely studied in the distributionally robust optimization (DRO) literature \citep{GohSim10, WieEtal14}, with special focus on Wasserstein DRO~\citep{GaoKle16, EsfKuh18, BlaMur19}. The advantage of using Wasserstein metrics is the ability to  measure distances between probability distributions with non-overlapping supports, which is not possible for divergence-based measures.

The distributional uncertainty set is typically centered at the empirical distribution of the data points, unlike definition \eqref{eq: distr_loss1} where it is centered around the true data generating distribution. Bertsimas et al. \citep{BerEtal20} note that when the support of the true distribution is unbounded, the $W_\infty$-uncertainty set around the empirical distribution does not contain the true distribution for any $\epsilon$. Hence, $W_\infty$-distributional robustness is not considered in the distributional robustness setting, except for the works of Tu et al. \citep{TuEtal19} and Staib and Jegelka \citep{StaJeg17} that make a similar observation as our Theorem~\ref{thm: two_types_of_adv}. Distributionally robust risk has also been studied in a minimax statistical learning framework in \citep{LeeRag18, MazEtal20} for deriving generalization error bounds.

\paragraph{Connection to robust statistics:}
Finding optimal classifiers under $0$-$1$ loss is equivalent to hypothesis testing, and there are natural connections of adversarial machine learning to robust hypothesis testing. Classical literature on robust hypothesis studies robust versions of the likelihood ratio test under various (non-adversarial) contamination models such as Huber's $\epsilon$-contamination model, the total variation contamination model, or the Levy-Prokhorov metric contamination model \citep{Hub65, HubStr73}. Contamination models based on $f$-divergences have also been analyzed for the Kullback-Liebler divergence \citep{Lev08} and the squared Hellinger distance \citep{GulAbd13, GulZou17}.

For general loss functions, finding the parameters $w^* \in \mathcal{W}$ is akin to minimax robust estimation. Classical literature on minimax robust estimation studies problems such as density estimation and regression under a  parametrized uncertainty set of probability measures \citep{Hub04}. When the uncertainty sets are constructed with the Hellinger distance, methods are known for obtaining nearly optimal estimators \citep{Lec73, BarEtal17, BarBir18}.

\paragraph{Connection to concentration of measure:}
The concentration of measure phenomenon in high dimensional settings causes the measure of $\epsilon$-expansion of sets like $\mu(A^\epsilon)$ to blow up even for small $\epsilon>0$ \citep{Led01}.
Several authors suggest that adversarial examples are inevitable by appealing to concentration of measure phenomena in high dimensional~\citep{MahEtal19, GilEtal18, Doh19}. %This line of work aims to show that $R^*_\epsilon - R^*_0$ can be high even for small $\epsilon$.
Specifically, it has been shown that any classifier that works on data  sampled from concentrated metric probability spaces is susceptible to a high adversarial risk. For instance, when the input distribution is uniform over a high dimensional sphere \citep{GilEtal18} or Boolean hypercube \citep{DioEtal18}, or when the latent space of the data is a high dimensional Gaussian \citep{FawEtal18}, the adversarial risk can be significantly higher than standard risk, even for small $\epsilon$. 

\section{Optimal adversarial risk via optimal transport}\label{sec: opt risk 01 loss}
In this section, we present our results on adversarial risk under 
$0$-$1$ loss in the binary classification setting. 
We first define the optimal transport cost $D_\epsilon(\mu, \nu)$  between two probability measures $\mu$ and $\nu$ over a metric space $(\cX, d)$,  as follows. 
\begin{definition}[$D_\epsilon$ transport cost]
For $\epsilon \ge 0$, define the cost function $c_\epsilon: \cX \times \cX \to \real$ as $c_\epsilon(x, y) = \1\{d(x, y) > 2\epsilon\}$. The optimal transport cost $D_\epsilon$ is defined as
\begin{align}\label{eq_epsilon_Wasserstein}
    D_\epsilon(\mu, \nu) =
   \inf_{\pi\in\Pi(\mu, \nu)}\E_{(x,x')\sim\pi} c_\epsilon(x, x').
\end{align}
\end{definition}
\begin{remark*}
For $\epsilon = 0$, the optimal cost is equivalent to the total variation distance, i.e., $D_0(\mu, \nu) = D_{TV}(\mu, \nu)$. For $\epsilon > 0$, this cost does not define a metric over the space of distributions. This is because $D_\epsilon(\mu, \nu) = 0$ does not imply $\mu$ and $\nu$ are identical. Moreover, it also does not define a pseudometric since the triangle inequality is not satisfied. To see this, observe that if $\mu_1$, $\mu_2$, and $\mu_3$ are unit point masses at $0$, $2\epsilon$, and $4\epsilon$, then $D_\epsilon(\mu_1, \mu_3) = 1 > 0 = D_\epsilon(\mu_1, \mu_2) + D_\epsilon(\mu_2, \mu_3).$
\end{remark*}

Next, we present the main theorem of this section that gives the optimal risk under the binary classification setup for a data perturbing adversary.

\begin{theorem}\label{th_01bound}
Consider the binary classification setup with $\cY=\{0,1\}$, where the input $x\in\cX$ is drawn with equal probability from two  distributions $p_0$ (for label $0$) and $p_1$ (for label $0$). We consider a set of binary classifiers of the form $\1\{x\in A\}$, where $A\subseteq\cX$ is a topologically closed set.
That is, the classifier corresponding to $A$ assigns the label $1$ for all $x\in A$ and the label $0$ for all $x\notin A$.
Consider the $0$-$1$ loss function $\ell((x,y),A) = \1\{x\in A, y=0\} + \1\{x\notin A, y=1\}$. The adversarial risk with the data perturbing adversary is given by 
\begin{align}\label{eq: 01 opt risk}
    R^*_\epsilon
    =
    \frac{1}{2}\left[1- D_\epsilon(p_0, p_1)\right].
\end{align}
\end{theorem}
Instantiating Theorem~\ref{th_01bound} for $\epsilon=0$, we get 
$R^*_0 = \frac{1}{2}\left[1- D_{0}(p_0, p_1)\right] = \frac{1}{2}\left[1- D_{TV}(p_0, p_1)\right]$, which is the Bayes risk. It is also possible to derive weaker bounds in terms of the $p$-Wasserstein distance between the distributions of the two data classes, as shown in the following corollary:

\begin{corollary}\label{cor_dwp_bound}
Under the setup considered in Theorem~\ref{th_01bound},  we have the following bound for $p\geq 1$:
\begin{align}\label{eq_dwp_bound}
    R^*_\epsilon \geq
    \frac{1}{2}\left[1-  \left(\frac{{W_p}(p_0, p_1)}{2\epsilon}\right)^p   \right].
\end{align}
\end{corollary}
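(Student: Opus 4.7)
The plan is to reduce the corollary to an upper bound on $D_\epsilon(p_0, p_1)$ in terms of $W_p(p_0, p_1)$, and then invoke Theorem~\ref{th_01bound} directly. Since Theorem~\ref{th_01bound} gives $R^*_\epsilon = \tfrac{1}{2}[1 - D_\epsilon(p_0, p_1)]$, it suffices to show
\begin{equation*}
D_\epsilon(p_0, p_1) \;\le\; \left(\frac{W_p(p_0, p_1)}{2\epsilon}\right)^p.
\end{equation*}

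The key tool is Markov's inequality applied coupling-by-coupling. For any coupling $\pi \in \Pi(p_0, p_1)$, I would write
\begin{equation*}
\E_{(x,x')\sim\pi}\,\1\{d(x,x') > 2\epsilon\}
\;=\; \Pr_\pi\!\left[d(x,x')^p > (2\epsilon)^p\right]
\;\le\; \frac{\E_\pi\, d(x,x')^p}{(2\epsilon)^p},
\end{equation*}
which holds for every joint law with marginals $p_0$ and $p_1$. Taking the infimum over $\pi \in \Pi(p_0, p_1)$ on both sides (the inequality is preserved because it is pointwise in $\pi$) yields
\begin{equation*}
D_\epsilon(p_0, p_1)
\;\le\; \inf_{\pi \in \Pi(p_0,p_1)} \frac{\E_\pi\, d(x,x')^p}{(2\epsilon)^p}
\;=\; \frac{W_p(p_0, p_1)^p}{(2\epsilon)^p},
\end{equation*}
using the definition of the $p$-Wasserstein distance. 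Substituting back into \eqref{eq: 01 opt risk} gives the claimed inequality.

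There is essentially no obstacle here since the argument is a direct Markov-type estimate comparing the indicator cost $c_\epsilon$ against the cost $d^p/(2\epsilon)^p$; the only subtle point to flag is that taking infima over couplings on both sides is legitimate because the bound is valid pointwise in $\pi$ (so the two infima need not be attained by the same coupling). No further regularity is required on $(\cX, d)$ beyond what is already assumed, and the result holds for all $p \ge 1$ as a consequence.
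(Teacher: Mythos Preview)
Your proposal is correct and follows essentially the same approach as the paper: both apply Markov's inequality pointwise in the coupling $\pi$ to bound the indicator cost by $d(x,x')^p/(2\epsilon)^p$, then take the infimum over $\pi$ to obtain $D_\epsilon(p_0,p_1)\le (W_p(p_0,p_1)/2\epsilon)^p$ and plug into Theorem~\ref{th_01bound}. Your write-up is in fact slightly more careful in justifying why the infimum can be passed through the pointwise inequality.
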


Our next result identifies a necessary and sufficient condition for $D_\epsilon(\mu, \nu) = 0$ for probability measures on a bounded support. When this holds, adversarial risk is $1/2$; i.e., no classifier can do better than random choice.

\begin{theorem}\label{thm: zerozero}
Let $\mu, \nu\in \cP(\cX)$. Then $D_\epsilon(\mu, \nu) = 0$ if and only if $W_\infty(\mu, \nu) \leq 2\epsilon$.
\end{theorem}

\subsection{Proofs of Theorems~\ref{th_01bound} and \ref{thm: zerozero}}

\begin{proof}[Proof of Theorem~\ref{th_01bound}]

Let $A \subseteq \cX$ be a closed set such that the classifier declares $1$ on $A$ and $0$ on $A^c$. Suppose the true hypothesis is $0$ and the observed sample is $x$. If the set $B(x, \epsilon) \defn \{y \in \cX \mid d(x,y) \leq \epsilon\}$ has a non-empty intersection with $A$, then the adversary can push $x$ to $x' \in B(x, \epsilon) \cap A$ such that the $\ell((x',0), A) = 1$. The set of all such $x$'s is given by:
\begin{equation}
A^{\oplus \epsilon} \defn \cup_{x \in A} B(x, r). 
\end{equation}
An equivalent way to express this is using Minkowski sums:
\begin{align*}
A^{\oplus \epsilon} = \{a + b \mid a \in A, b \in B(0, \epsilon)\}.
\end{align*}
It is easy to see that $(A^{\oplus \epsilon_1})^{\oplus \epsilon_2} = A^{\oplus (\epsilon_1+\epsilon_2)}$ for $\epsilon_1, \epsilon_2\geq 0$.

Similarly, if the true hypothesis is $1$, then the adversary can ensure a loss of 1 as long as $B(x, \epsilon) \cap A^c \neq \phi$. The set of all such $x$'s is given by
\begin{equation}
(A^c)^{\oplus \epsilon} \defn \cup_{x \in A^c} B(x, r). 
\end{equation}
We define $A^{\ominus \epsilon}$ as
\begin{equation}
A^{\ominus \epsilon} \defn ((A^c)^{\oplus \epsilon})^c.
\end{equation}

The robust risk over the hypothesis class of closed sets is given by
\begin{align*}
    R_\epsilon^*
    &= \inf_{A~\text{closed}}
    \frac{1}{2} \left( p_0(A^{\oplus \epsilon}) + p_1\left((A^c)^{\oplus \epsilon}\right)\right)\\
    &= 
    \frac{1}{2} \left( 1 - 
    \sup_{A~\text{closed}}
    \left\{
    p_1\left(A^{\ominus \epsilon}\right)-
    p_0(A^{\oplus \epsilon})
    \right\}
    \right).
\end{align*}
The following Lemma provides basic topological properties of the sets $A^{\oplus \epsilon}$ and $A^{\ominus \epsilon}$. 
\begin{lemma}[Proof in Appendix~\ref{app: lemma: oplus-ominus}]\label{lemma: oplus-ominus}
Let $\epsilon > 0$. If $A$ is a closed set, then $A^{\oplus \epsilon}$ and $A^{\ominus \epsilon}$ are also closed sets.
\end{lemma}

We now consider a slightly different notion of $\epsilon$-expansions of sets similar to our definition of $A^{\oplus \epsilon}$. For $\epsilon > 0$, define
\begin{align}
A^\epsilon = \{x \in \cX \mid d(x, A) \leq \epsilon\}, 
\end{align}
where 
\begin{align}
d(x, A) = \inf_{a \in A} d(x, a).
\end{align}
Our next Lemma shows the equivalence of $A^{\oplus \epsilon}$ and $A^{\epsilon}$ for closed sets $A$.

\begin{lemma}[Proof in Appendix~\ref{app: lemma: equivalence}]\label{lemma: equivalence}
For a closed set $A$, we have $A^\epsilon = A^{\oplus \epsilon}$.
\end{lemma}

Note that $A^{\oplus \epsilon}$ and $A^{\epsilon}$ need not be equivalent when $A$ is not closed. For example consider $(\cX, d) = (\real, |\cdot|)$ with $A = (0,1)$. Then, $A^{\oplus \epsilon} = (-\epsilon, 1+\epsilon)$ whereas $A^\epsilon = [-\epsilon, 1+\epsilon]$.

The main idea of our proof is to leverage Strassen's theorem (Appendix~\ref{app: strassen}), which states that
\begin{align*}
D_\epsilon(p_0, p_1) = \sup_{A~\text{closed}} \left\{ p_1(A) - p_0(A^{2\epsilon}) \right\}.
\end{align*}
To prove the equality  $R^*_\epsilon = \frac{1}{2}[1-D_\epsilon(p_0, p_1)]$, notice that it is enough to prove that 
\begin{align}\label{eq: pm_epsilon}
\sup_{A ~\text{closed}} p_1(A^{\ominus \epsilon}) - p_0(A^{\oplus \epsilon}) = \sup_{A ~\text{closed}} p_1(A) - p_0(A^{2\epsilon}).
\end{align}

We need the following lemma:

\begin{lemma}[Proof in Appendix~\ref{app: lemma: thick and thin}]\label{lemma: thick and thin fixed} 
Let $A$ be a closed set. Then $(A^{\ominus \epsilon})^{\oplus \epsilon} \subseteq A$ and $A \subseteq (A^{\oplus \epsilon})^{\ominus \epsilon}$. 
\end{lemma}
Figure~\ref{fig: square} illustrates the above lemma when $A$ is a square in $\real^2$ with the Euclidean distance metric.

We have the sequence of inequalities
\begin{align*}
\sup_{A ~\text{closed}} p_1(A) - p_0(A^{2\epsilon}) &\stackrel{(a)}\geq \sup_{A ~\text{closed}} p_1(A^{\ominus \epsilon}) - p_0((A^{\ominus\epsilon})^{2\epsilon})\\
&\stackrel{(b)}\geq \sup_{A ~\text{closed}} p_1(A^{\ominus \epsilon}) - p_0(A^{\oplus \epsilon}).
\end{align*}
Here, $(a)$ follows because $A^{\ominus \epsilon}$ is contained in the set of all closed sets by Lemma~\ref{lemma: oplus-ominus}. Inequality $(b)$ follows by the equivalence $(A^{\ominus \epsilon})^{2\epsilon} = (A^{\ominus \epsilon})^{\oplus 2\epsilon}$ from Lemma~\ref{lemma: equivalence}, and Lemma~\ref{lemma: thick and thin fixed} since 
$$(A^{\ominus \epsilon})^{2\epsilon} =  [(A^{\ominus \epsilon})^{\oplus \epsilon}]^{\oplus \epsilon} \subseteq A^{\oplus \epsilon},$$
and so $p_0((A^{\ominus \epsilon})^{2\epsilon}) \leq p_0(A^{\oplus \epsilon})$.

For the other direction, notice that
\begin{align*}
\sup_{A ~\text{closed}} p_1(A^{\ominus \epsilon}) - p_0(A^{\oplus \epsilon}) &\stackrel{(a)}\geq \sup_{A ~\text{closed}} p_1((A^{\oplus\epsilon})^{\ominus \epsilon}) - p_0((A^{\oplus \epsilon})^{\epsilon})\\
&\stackrel{(b)}\geq \sup_{A ~\text{closed}} p_1(A) - p_0(A^{2\epsilon}).
\end{align*}
Here, $(a)$ follows because $A^{\oplus \epsilon}$ is a closed set according to Lemma~\ref{lemma: oplus-ominus}. To see $(b)$, first note that using Lemma~\ref{lemma: equivalence},
\begin{align*}
(A^{\oplus \epsilon})^\epsilon &= (A^{\oplus \epsilon})^{\oplus \epsilon} = A^{\oplus 2\epsilon}
= A^{2\epsilon}.
\end{align*}
Thus, $p_0((A^{\oplus \epsilon})^\epsilon) = p_0(A^{2\epsilon})$. Moreover, Lemma~\ref{lemma: thick and thin fixed} states that
$A \subseteq (A^{\epsilon})^{\ominus \epsilon},$
and so $p_1(A) \leq p_1((A^{\oplus \epsilon})^{\ominus \epsilon})$. This completes the proof.

\begin{figure}[t]
    \centering
    \includegraphics[scale=0.6]{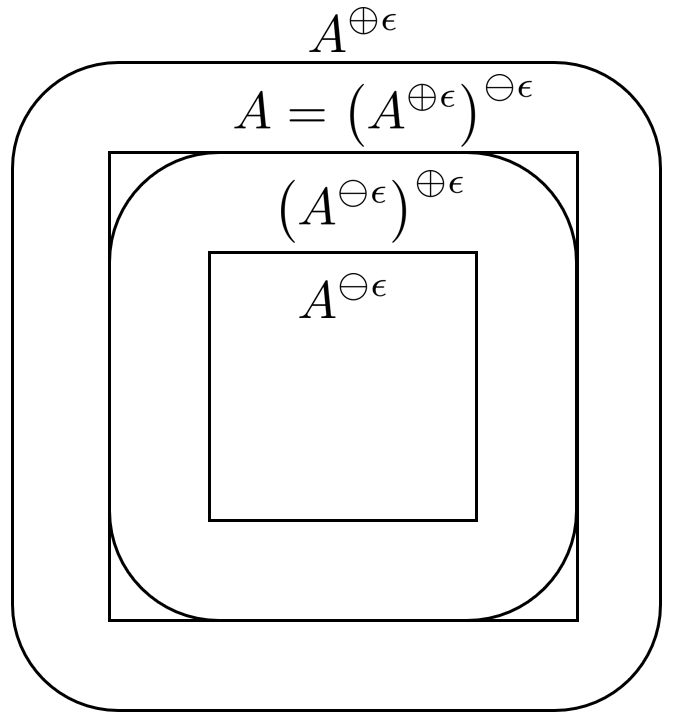}
    \caption{ Illustration of $A, A^{\oplus\epsilon}, A^{\ominus\epsilon}, (A^\oplus\epsilon)^{\ominus\epsilon}$, and $(A^{\ominus\epsilon})^{\oplus\epsilon}$ for a closed square in $(\mathbb{R}^2, \|\cdot\|_2)$. Observe that $(A^{\ominus\epsilon})^{\oplus\epsilon} \subseteq A$ and $A \subseteq (A^\oplus\epsilon)^{\ominus\epsilon}$.}
    \label{fig: square}
\end{figure}

\end{proof} 

\paragraph{Comparison with Bhagoji et al.~\cite{BhaEtal19}:} We point out that a similar result was obtained recently in \cite{BhaEtal19}. A key difference is that the proof in \cite{BhaEtal19} was established for a larger hypothesis class of measurable sets $A$; i.e., the following equality was established:
\begin{align*}
\sup_{A ~\text{measurable}} \mu(A^{\ominus \epsilon}) - \nu(A^{\oplus \epsilon}) = \sup_{A ~\text{measurable}} \mu(A) - \nu(A^{2\epsilon}).
\end{align*} 

It is not hard to check that $A^\epsilon$ is closed for any measurable set $A$, and so  
\begin{align*}
\sup_{A ~\text{measurable}} \mu(A) - \nu(A^{2\epsilon}) = \sup_{A ~\text{closed}} \mu(A) - \nu(A^{2\epsilon})
\end{align*}
We may restrict to the smaller hypothesis class of closed sets $A$ and use the result in~\cite{BhaEtal19} to obtain an inequality
\begin{align*}
\sup_{A ~\text{closed}} \mu(A^{\ominus \epsilon}) - \nu(A^{\oplus \epsilon}) \leq \sup_{A ~\text{closed}} \mu(A) - \nu(A^{2\epsilon}).
\end{align*}
Our result shows that this is, in fact, an equality.
\begin{proof}[Proof of Corollary~\ref{cor_dwp_bound}]

From Theorem~\ref{th_01bound}, we have
\begin{align*}
    R^*_\epsilon
    =
    \frac{1}{2}\left[1-
     \inf_{\pi\in\Pi(\mu, \nu)}\E_{(x,x')\sim\pi}[\1{\{d(x,x')>2\epsilon\}}]
      \right].
\end{align*}
For $p\geq 1$ and any $\pi\in\Pi(\mu, \nu)$, we have the following:
\begin{align*}
    \E_{(x,x')\sim\pi}[\1{\{d(x,x')>2\epsilon\}}]
    &= \E_{(x,x')\sim\pi}[\1{\{d(x,x')^p>(2\epsilon)^p\}}]
    \leq \E_{(x,x')\sim\pi}\left[\left( \frac{d(x,x')}{2\epsilon} \right)^p\right],
\end{align*}
where the last inequality follows from Markov's inequality.
Therefore, 
\begin{align*}
    R^*_\epsilon
    =
    \frac{1}{2}\left[1-
     \inf_{\pi\in\Pi(\mu, \nu)}\E_{(x,x')\sim\pi}\left[\left( \frac{d(x,x')}{2\epsilon} \right)^p\right]\right]
     \geq \frac{1}{2}\left[1-  \left(\frac{{W_p}(p_0, p_1)}{2\epsilon}\right)^p   \right].
\end{align*}
\end{proof}

\begin{proof}[Proof of Theorem~\ref{thm: zerozero}]
Since $W_\infty(\mu, \nu) = \inf\{\delta > 0\mid\mu(A) \leq \nu(A^\delta)\text{ for all measurable~} A\}$,  if $W_\infty(\mu, \nu) \leq 2\epsilon$, then $\mu(A) \leq \nu(A^{2\epsilon})$ for all closed sets $A$. Hence, 
\begin{align*}
D_\epsilon(\mu, \nu) = \sup_{A~\text{closed}} \mu(A) - \nu(A^{2\epsilon}) \leq 0.
\end{align*}
Since $D_\epsilon(\mu, \nu) \geq 0$, we conclude that $D_\epsilon(\mu, \nu) = 0$.

For the reverse direction, suppose that $D_\epsilon(\mu, \nu) = 0$. This means there exists a sequence of couplings $\{\pi\}_{i \geq 1}$ such that $\E_{\pi_i} c_\epsilon(x,x') \to 0$ where $\pi_i \in \Pi(\mu, \nu)$.  Using a strategy as in Theorem~\ref{thm: two_types_of_adv}, we conclude that the sequence $\{\pi_i\}$ is tight, and thus there exists a subsequence that converges weakly to $\pi^* \in \Pi(\mu, \nu)$. Since $c$ is a lower semicontinuous cost function, the coupling $\pi^*$ satisfies $\E_{\pi^*} c_\epsilon(x, x') = 0$, or equivalently, $\esssup_{(x, x') \sim \pi^*} d(x, x') \leq 2\epsilon.$ Using the definition of $W_\infty$ from \eqref{eq: W_infty_def_esssup}, we conclude $W_\infty(\mu, \nu) \leq 2\epsilon$. %Equivalently, the probability $\alpha_i := \prob_{\pi_i}(d(x,x') > 2\epsilon) \to 0$ as $i \to \infty$. For any fixed $p \geq 1$, we have the inequality
%\begin{align*}
%W_p(\mu, \nu) &\leq \left(\E_{\pi_i} d(x,x')^p\right)^{1/p}\\
%&\leq \left((2\epsilon)^p (1-\alpha_i) + C^p \alpha_i \right)^{1/p},
%\end{align*}
%where $C = \sup_{x,x'\in \cX} d(x,x')$ which is a constant since $\cX$ is bounded.
%Letting $i\infty$ in the above inequality, we get $W_p(\mu, \nu)\leq 2\epsilon$ for any fixed $p\geq 1$. Since $W_\infty(\mu, \nu) = \lim_{p\to\infty}W_p(\mu, \nu)$, we also have $W_\infty(\mu, \nu)\leq 2\epsilon$.

\end{proof}

\section{Optimal adversarial classifiers via optimal couplings}\label{sec: couplings}

In this section, we explicitly compute the optimal risk and optimal classifier for a data perturbing adversary in  some special cases. Instead of using $D_\epsilon$, we have shown in Corollary~\ref{th_01bound} that the optimal adversarial risk can be lower-bounded using other well-understood metrics such as the $W_p$ distances. However, these bounds are often too loose to use in practice, and this motivates us to study the optimal cost $D_\epsilon$ directly. In this section, we show that in certain special cases, the optimal coupling corresponding to calculating $D_\epsilon$ may be explicitly evaluated. Furthermore, in these cases, we can exactly characterize the optimal classifier and the optimal risk in the presence of an adversary. Given measures $\mu$ and $\nu$ corresponding to the two (equally likely) data classes, the general strategy we employ consists of the following steps: 
\begin{itemize}
\item[(1)] Propose a coupling $\pi$ between $\mu$ and $\nu$.
\item[(2)] Using this coupling, obtain the upper bound
\begin{align*}
D_\epsilon(\mu, \nu) \leq \E_{(x, x') \sim \pi} c_\epsilon(x, x').
\end{align*}
\item[(3)] Identify a closed set $A$ and compute a lower bound using %either 
%\begin{align*}
%D_\epsilon \geq \mu(A) - \nu(A^{2\epsilon}),
%\end{align*}
%or as per the proof of Theorem~\ref{th_01bound},
\begin{align*}
D_\epsilon(\mu, \nu) \geq \mu(A^{\ominus \epsilon}) - \nu(A^{\oplus \epsilon}).
\end{align*}
\item[(4)] Show that the lower and upper bounds match. This shows that the proposed coupling is optimal, and the sets $A$ and $A^c$ define the two regions of the optimal robust classifier.
\end{itemize}

In the examples we consider, guessing the set $A$ corresponding to the optimal robust classifier is easy. The challenging part is proposing a coupling and establishing its optimality. Although we shall focus on real-valued random variables, some of our results also naturally extend to higher dimensional distributions. 

In the following subsection, we review some results pertaining to optimal transport on the real line. We then present results that help in evaluating $D_\epsilon$ cost for real-valued random variables. In the subsequent subsections, we use these results to propose optimal couplings for several univariate distributions.

\subsection{Optimal transport on the real line}\label{sec: OT_real_line}

For a probability measure $\mu$ on $\real$, the cumulative distribution function (cdf) of $\mu$ is defined as $F(x) = \mu((-\infty, x])$, and for $t \in [0,1]$, the inverse cdf (or quantile function) is defined as $F^{-1}(t) = \inf\{x \in \real ~:~ F(x) \geq t\}$.

\begin{lemma}\label{lem: push_forward}[Theorem 2.5 in \citep{San15}]
Let $\mu$ and $\nu$ be probability measures on the real line, where $\mu$ is absolutely continuous with respect to the Lebesgue measure. Then there exists a unique non-decreasing function $T: \mathbb{R}\to\mathbb{R}$ such that $\mu(T^{-1}(A))=\nu(A)$ for any measurable set $A\subseteq \mathbb{R}$. Moreover, if $F$ and $G$ denote the cumulative distribution functions of $\mu$ and $\nu$ respectively, then $T$ is given by $T(x) = G^{-1}(F(x))$.
\end{lemma}
The function $T$ in Lemma~\ref{lem: push_forward} that transforms (or ``pushes forward") the measure $\mu$ into $\nu$ is called a \emph{monotone transport map}. Given a monotone transport map, we can define a coupling induced by the monotone map as follows. $(X, X')\sim \Pi(\mu, \nu)$ where $X\sim \mu$ and $X' = T(X)\sim \nu$. This coupling is also known by the name \emph{quantile coupling}.

The following lemma shows that the coupling induced by the monotone transport map is optimal for certain cases of the cost function.
\begin{lemma}[Theorem 2.9 in \citep{San15}]
Let $h:\mathbb{R}\to\mathbb{R}^+$ be a strictly convex function. Let $\mu$ and $\nu$ be probability measures on the real line, where $\mu$ has a density. Consider the cost function $c(x,x')=h(x'-x)$. Suppose $\cT_c(\mu, \nu)$ is finite. Then, $\cT_c(\mu, \nu) = \E_{x\sim\mu}[c(x, T(x))]$, where $T$ is the monotone transport map from $\mu$ to $\nu$.
\end{lemma}\label{lem: monotone_map_opt}

For the case of $h(x) = |x|^p$ where $p\geq 1$, Lemma~\ref{lem: monotone_map_opt} shows that the optimal coupling for $p$-Wasserstein distance is induced by the optimal transport map. However this may not be the case for $\infty$-Wasserstein distance. In the following theorem, we use the monotone map from Lemma~\ref{lem: push_forward} to present a more concrete condition than Theorem~\ref{thm: zerozero} for checking when $D_\epsilon(\mu, \nu) = 0$ for measures over $\real$.

\begin{theorem}\label{thm: zero}
Let $\mu$ and $\nu$ be probability measures on $\real$ that are absolutely continuous with respect to the Lebesgue measure with Radon-Nikodyn derivatives $f(\cdot)$ and $g(\cdot)$, respectively. Let $F$ and $G$ denote the cumulative distribution functions of $\mu$ and $\nu$ respectively.  Then $D_\epsilon(\mu, \nu) = 0$ if and only if $\|F^{-1} - G^{-1}\|_\infty \leq 2\epsilon$.
\end{theorem}
\begin{proof}
Consider the monotone transport map from $\mu$ to $\nu$ given by $T(x) = G^{-1}(F(x))$ as in Lemma~\ref{lem: push_forward}. We shall show that this map satisfies $|T(x) - x| \leq 2\epsilon$ for all $x \in \real$, and so the optimal transport cost $D_\epsilon$ must be 0. To see this, note that
\begin{align*}
T(x) - x &= G^{-1}(F(x)) - x\\
&\leq F^{-1}(F(x)) + 2\epsilon - x\\
&=  2\epsilon,
\end{align*}
where the last equality is in the $\mu$-almost sure sense. A similar argument shows $x-T(x) \leq 2\epsilon$, and thus $|T(x) - x| \leq 2\epsilon$. 

For the converse, suppose that there exists a $t_0 \in (0,1)$ such that $G^{-1}(t_0) - F^{-1}(t_0) > 2\epsilon$. Equivalently, $
G^{-1}(t_0) > F^{-1}(t_0) +
2\epsilon.$ Applying the $G$ function on both sides, 
\begin{align*}
t_0 > G(F^{-1}(t_0) +
2\epsilon).
\end{align*}
Consider the set $\tilde A = (-\infty, F^{-1}(t_0)]$. For this set, notice that 
\begin{align*}
\nu(\tilde A^{2\epsilon}) &= \nu((-\infty, F^{-1}(t_0)+2\epsilon])= G(F^{-1}(t_0) + 2\epsilon).
\end{align*}
Thus, we have
\begin{align*}
D_\epsilon(\mu, \nu) &= \sup_{A} \mu(A) - \nu(A^{2\epsilon})\\
&\geq \mu(\tilde A) - \nu(\tilde A^{2\epsilon})\\
&= t_0 - G(F^{-1}(t_0) + 2\epsilon)\\
&> 0.
\end{align*}
A similar argument may also be made for the case when $F^{-1}(t_0) - G^{-1}(t_0) > 2\epsilon$. 
\end{proof}
The above argument  shows that monotone transport maps are optimal when $D_\epsilon = 0$. But monotone maps are not always optimal for the cost function $c_\epsilon(\cdot, \cdot)$. Consider for example the two measures $\cN(0,1)$ and $\cN(1,1)$, and $\epsilon = 0.1$. The monotone map in this case is $T(x) = x+1$, which gives unit cost of transportation. However, Theorem~\ref{thm: same variance} shows that the optimal transport cost in this example is strictly smaller than 1. 

Checking the condition $\|F^{-1} - G^{-1}\| \leq 2\epsilon$ is not always easy. We identify a simple but useful characterization in the following corollary:
\begin{corollary}\label{cor: dominate}
Let $\mu$ and $\nu$ be as in Theorem~\ref{thm: zero}. Suppose that for every $x \in \real$, we have $F(x) \geq G(x)$ and $F(x) \leq G(x+2\epsilon)$. Then $D_\epsilon(\mu, \nu) = 0.$
\end{corollary}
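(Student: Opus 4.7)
The plan is to reduce the corollary directly to the quantile-function criterion from Theorem~\ref{thm: zero}. Specifically, I would show that the two pointwise inequalities $F(x) \ge G(x)$ and $F(x) \le G(x+2\epsilon)$ translate, respectively, into $F^{-1}(t) \le G^{-1}(t)$ and $G^{-1}(t) \le F^{-1}(t) + 2\epsilon$ for every $t \in (0,1)$. Combined, these give $\|F^{-1}-G^{-1}\|_\infty \le 2\epsilon$, which by Theorem~\ref{thm: zero} is equivalent to $D_\epsilon(\mu,\nu)=0$.

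In detail, first I would establish the easy direction. Since $F(x) \ge G(x)$ holds pointwise, the sublevel structure gives $\{x : G(x) \ge t\} \subseteq \{x : F(x) \ge t\}$, and taking infima yields $F^{-1}(t) \le G^{-1}(t)$ for every $t \in (0,1)$. Next, for the upper direction, I would use the standard identity $F(F^{-1}(t)) \ge t$ (valid whenever $F$ is a cdf; it is in fact an equality under the absolute continuity assumption of Theorem~\ref{thm: zero}). Then the hypothesis $F(x) \le G(x+2\epsilon)$ applied at $x = F^{-1}(t)$ gives
\begin{equation*}
G(F^{-1}(t) + 2\epsilon) \;\ge\; F(F^{-1}(t)) \;\ge\; t,
\end{equation*}
so $F^{-1}(t) + 2\epsilon$ lies in $\{x : G(x) \ge t\}$, whence $G^{-1}(t) \le F^{-1}(t) + 2\epsilon$.

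Combining the two estimates yields $0 \le G^{-1}(t) - F^{-1}(t) \le 2\epsilon$ for every $t \in (0,1)$, i.e.\ $\|F^{-1}-G^{-1}\|_\infty \le 2\epsilon$. Invoking Theorem~\ref{thm: zero} then gives $D_\epsilon(\mu,\nu)=0$, as desired.

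I do not expect a serious obstacle here: the only subtlety is keeping track of the direction of the inequalities between cdfs and their generalized inverses, and handling the one-sided nature of $F(F^{-1}(t)) \ge t$ versus equality (the absolute continuity assumption makes this a non-issue). No additional coupling construction is required, since the work of producing a degenerate optimal coupling from the quantile bound has already been done inside Theorem~\ref{thm: zero}.
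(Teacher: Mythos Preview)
Your proposal is correct and follows essentially the same route as the paper. The paper's proof applies $G^{-1}$ to the two cdf inequalities to bound the monotone transport map $T(x)=G^{-1}(F(x))$ between $x$ and $x+2\epsilon$; your version instead bounds $G^{-1}(t)-F^{-1}(t)\in[0,2\epsilon]$ and then invokes Theorem~\ref{thm: zero}, which is the same computation after the substitution $t=F(x)$.
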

\begin{proof}
Applying the $G^{-1}$ function to both sides of both inequalities, we arrive at
\begin{align*}
T(x) \geq x, \quad \text{ and } \quad T(x) \leq x+2\epsilon.
\end{align*}
This gives $\abs{T(x) - x} \leq 2\epsilon$ for all $x$, which concludes the proof.
\end{proof}

Theorem~\ref{thm: zero} may also be applied to finite positive measures $\mu, \nu$ with $\mu(\real)=\nu(\real)=U<\infty$ with simple scaling.
In what follows, we define a notion of optimal transport for finite positive measures that may have unequal masses.

\begin{comment}
\begin{definition}\label{def: dominates}
Let $\mu$ and $\nu$ be measures on $\real$ that are absolutely continuous with respect to the Lebesgue measure with a Radon-Nikodyn derivatives $f(\cdot)$ and $g(\cdot)$ respectively. For intervals $I$ and $J$, we denote $\mu_I$ to be the $\mu$ restricted to $I$ and $\nu_J$ to be $\nu$ restricted to $J$. For $a_1 < b_1$ and $a_2 < b_2$, we say that $\mu_{[a_1, b_1]} \succeq \nu_{[a_2, b_2]}$ if and only if 
\begin{align*}
\mu_{[a_1, b_1]}([a_1, a_1+t]) \geq \nu_{[a_2, b_2]}([a_2, a_2+t])
\end{align*}
for all $t \geq 0$. We say $\mu_{[b_1, a_1]} \succeq \nu_{[b_2, a_2]}$ if
\begin{align*}
\mu_{[a_1, b_1]}([b_1-t, b_1]) \geq \nu_{[a_2, b_2]}([b_2-t, b_2])
\end{align*} 
for all $t \geq 0$.
\end{definition}
\end{comment}

\begin{definition}\label{def: unequal}[Optimal transport cost for general measures]
Let $\mu$ and $\nu$ be as in Theorem~\ref{thm: zero}. Suppose that $\mu(\real) = U$ and $\nu(\real) = V$ and $U \leq V$. Let $\nu'$ be a measure on $\real$ with Radon-Nikodyn derivative $g'$ such that $\nu'(\real) = U$. We say $\nu' \subseteq \nu$, or $\nu'$ is contained in $\nu$, if $g(x) \geq g'(x)$ $\nu$-almost surely. Then the optimal transport cost $D_\epsilon(\mu, \nu)$ is defined as
\begin{align*}
D_\epsilon(\mu, \nu) =  \inf_{\nu' \subseteq \nu} D_\epsilon(\mu, \nu').
\end{align*}
Note that the amount of mass being moved is $\min(U,V) = U$.
\end{definition}

In the following two lemmas, we identify conditions under which $D_\epsilon(\mu, \nu)=0$ for finite positive measures with unequal mass.

\begin{lemma}\label{lemma: shift}
Let $\mu$ and $\nu$ be as in Theorem~\ref{thm: zero}. Assume that $\mu(\real) = U$ and $\nu(\real) = V$. Suppose the following conditions hold:
\begin{enumerate}
\item
The support of $g$ is a subset of $[a, +\infty)$ and the support of $f$ is a subset of $[a + 2\epsilon, +\infty) =: [a', +\infty)$. 
\item
For all $x \in \real$, we have $g(x) \leq f(x+2\epsilon)$.
\end{enumerate}
Then $D_\epsilon(\mu, \nu) = 0$. A similar result holds if the supports of $g$ and $f$ are subsets of $(-\infty, -a]$ and $(-\infty, -a-2\epsilon]$ respectively, and $f(-x-2\epsilon) \geq g(-x)$.
\end{lemma}

\begin{figure}[t]
    \centering
    \includegraphics[scale=0.3]{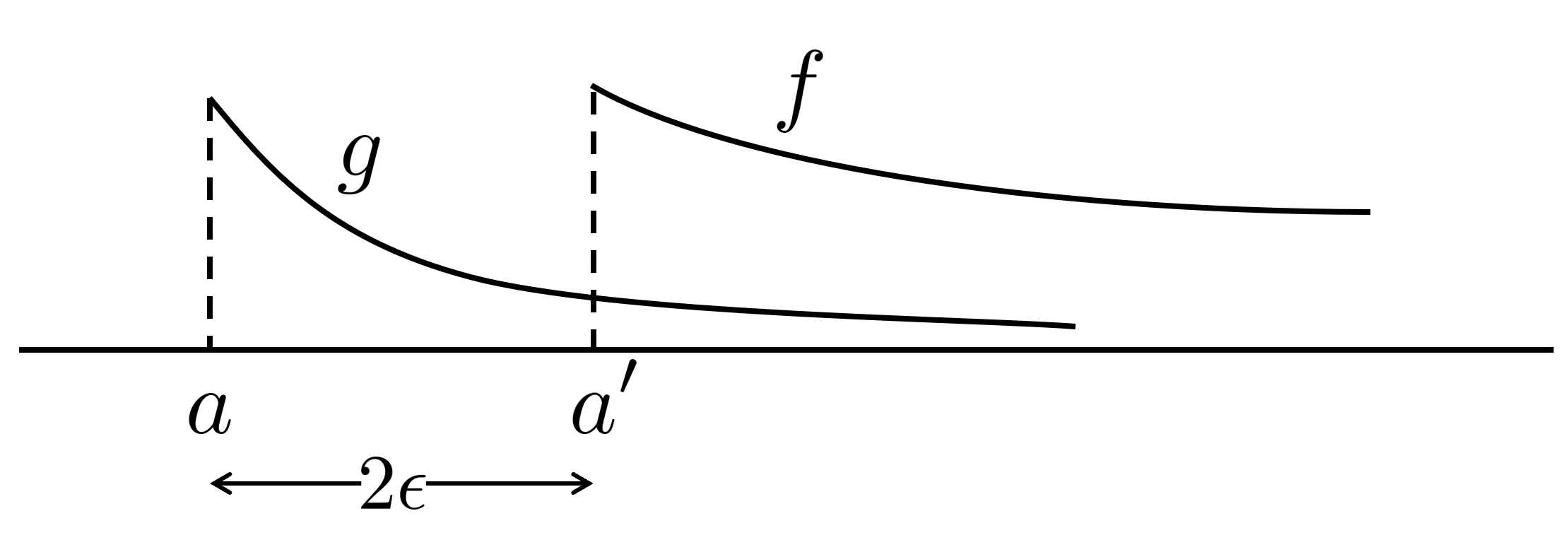}
    \caption{Figure illustrating the conditions in Lemma~\ref{lemma: shift}. }
    \label{fig:lemma shift}
\end{figure}

\begin{proof} Consider the transport map $T(x) = x+2\epsilon$ applied to $\nu$. This map has the effect of ``translating'' the measure $\nu$ by $2\epsilon$ to the right. Call this translated measure $\eta$. Since $f(x) \geq g(x-2\epsilon)$, it is immediate that $\eta \subseteq \mu$. Moreover, the transport cost is $D_\epsilon(\nu, \eta) = 0$. This shows that $D_\epsilon(\mu, \nu) = 0$. 
\end{proof}

\begin{lemma}\label{lemma: scrunch}
Let $\mu$ and $\nu$ be as in Theorem~\ref{thm: zero}. Assume that $\mu(\real) = \nu(\real) = U$. Suppose the following conditions hold (see Figure~\ref{fig:lemma scrunch} for an illustration): 
\begin{enumerate}
\item
Let $a, b \in \real$ be such that the support of $f$ is a subset of $[a, b]$ and the support of $g$ is a subset of $[a', b] := [a+2\epsilon, b]$. 
\item
There exists $t \in [a, b]$ such that $f(x) \geq g(x)$ for $x \in [a, t)$, and $f(x) \leq g(x)$ for $x \in (t, b]$.
\item
Let $\tilde g(x) = g(x+2\epsilon)$. Note that the support $\tilde g$ is within $[a, b-2\epsilon]$. There exists $\tilde t \in [a, b-2\epsilon]$ such that $f(x) \leq \tilde g(x)$ for $x \in [a, \tilde t)$, and $f(x) \geq \tilde g(x)$ for $x \in (\tilde t, b-2\epsilon]$. 
\end{enumerate}
Then $D_\epsilon(\mu, \nu) = 0$. A mirror image of this result also holds: $D_\epsilon(\mu, \nu) = 0$ when the support of $f$ is a subset of $[b, c+2\epsilon]$, that of $g$ is a subset of $[b, c]$, and $f(x) \leq g(x)$ for $x \in [b, t)$ and $f(x) \geq g(x)$ for $x \in [t, c+2\epsilon]$; and for $\tilde g(x) = g(x+2\epsilon)$ we have $f(x) \geq \tilde g(x)$ for $x \in [b+2\epsilon, \tilde t)$ and $f(x) \leq g(x)$ for $x \in [\tilde t, c+2\epsilon]$ .
\end{lemma}

\begin{figure}[t]
    \centering
    \includegraphics[scale=0.4]{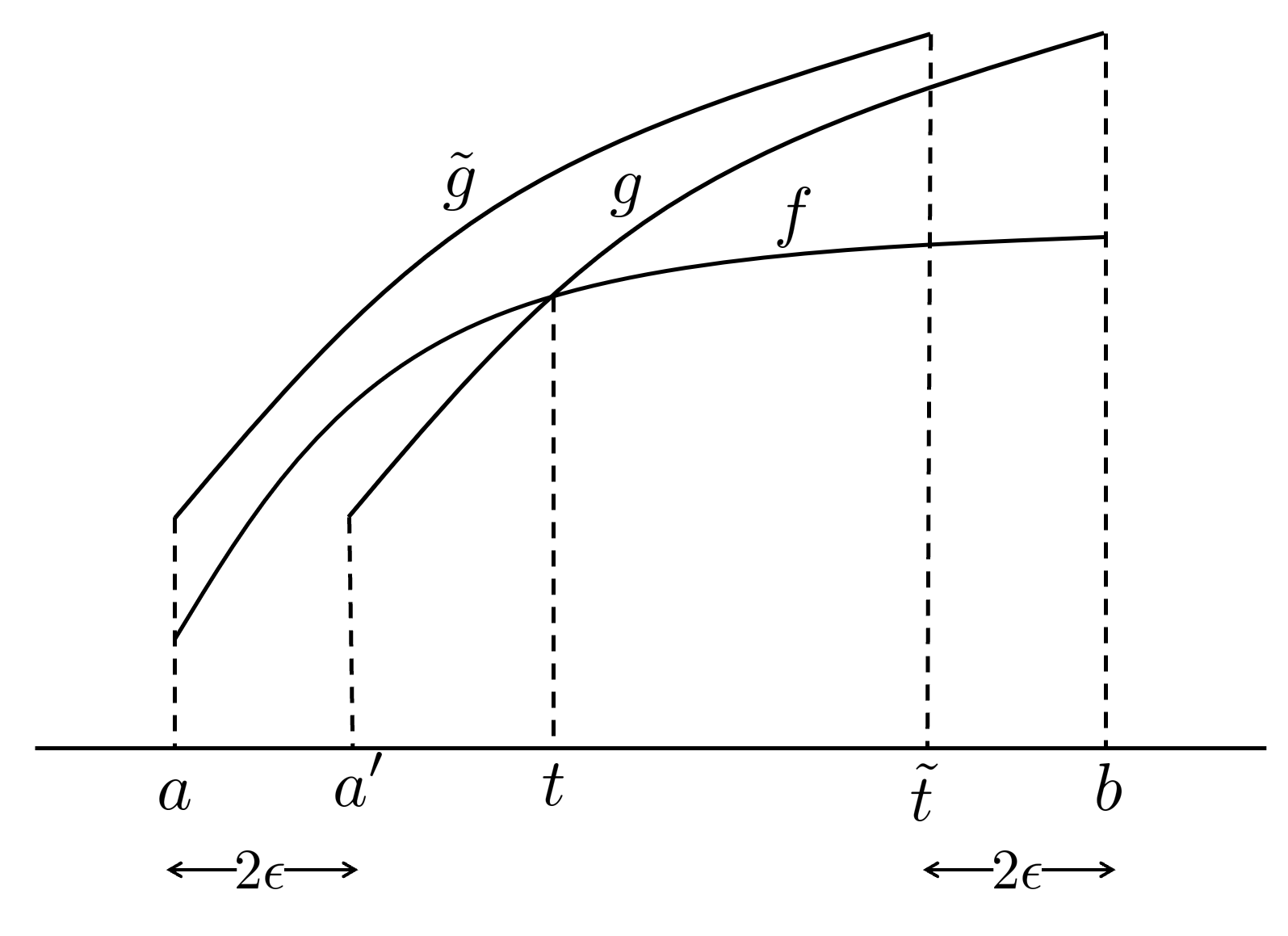}
    \caption{Figure illustrating the conditions in Lemma~\ref{lemma: scrunch}. Note that in general $\tilde{t}$ need not be equal to $b-2\epsilon$  as shown in the figure.}
    \label{fig:lemma scrunch}
\end{figure}

\begin{proof}
We first prove $F(x) \geq G(x)$. To see this, consider $H(x) = F(x) - G(x)$. Since the derivative of $H$ is $f-g$, it must be that $H$ is increasing from $[a, t)$ and decreasing from $[t, b]$. Also, we have $H(a) = H(b) = 0$, and so the function $H$ must be non-negative in $[a,b]$. Equivalently, we must have $F(x) \geq G(x)$ for $x \in \real$. We now prove $F(x) \leq G(x+2\epsilon)$. Consider $\tilde H(x) = F(x) - \tilde G(x)$. By condition $(3)$, the derivative of this function is negative from $[a, \tilde t]$ and positive from $[\tilde t, b]$. Thus, the function $\tilde H$ decreases on the interval $[a, \tilde t)$ and increases on the interval $[\tilde t, b]$. Note that since $\tilde H(a) = \tilde H(b) = 0$, the function $\tilde H$ must be non-positive in the interval $[a, b]$. Thus, we have $F(x) \leq G(x+2\epsilon)$. Applying Corollary~\ref{cor: dominate} concludes the proof.
\end{proof}

\subsection{Gaussian distributions with identical variances} \label{section: same variance}
\begin{theorem}\label{thm: same variance}
Let $p_0 = \cN(\mu_0, \sigma^2)$ and $p_1 = \cN(\mu_1, \sigma^2)$ in the metric space $(\mathbb{R}, |\cdot|)$. Assume $\mu_0 < \mu_1$ without loss of generality.  Then the following hold:
\begin{enumerate}
\item
If $\epsilon \geq \frac{\abs{\mu_0 - \mu_1}}{2}$, the optimal robust risk is $1/2$. A constant classifier achieves this risk.
\item
If $\epsilon < \frac{\abs{\mu_0 - \mu_1}}{2}$, the optimal classifier satisfies $A = \left[\frac{\mu_1+\mu_0}{2},  +\infty\right]$, where $A$ is the region where the classifier declares label $1$. The optimal risk in this case is $\int_{ \frac{\mu_1+\mu_0}{2}-\epsilon}^\infty p_0(x) dx = Q\left(\frac{\frac{\mu_1-\mu_0}{2} - \epsilon}{\sigma}\right)$.
\end{enumerate}
\end{theorem}

The lower bound of $1/2$ on the adversarial risk is trivially achieved by the constant classifier. Part (1) of the theorem states that for large enough $\epsilon$, this is the best one can do. For smaller values of $\epsilon$, the above theorem shows that the most robust classifier is the same as the MLE classifier. For larger values of $\epsilon$, the MLE classifier has a risk \emph{larger than} $1/2$; i.e., it is worse than the constant classifier. 

\begin{proof} We shall prove (1) first. Note that if $\epsilon \geq \frac{\mu_1 - \mu_0}{2}$, the transport map $T$ defined by $T(x) = x+(\mu_1 - \mu_0)$ transports $p_0$ to $p_1$. Moreover, this coupling satisfies $\abs{T(x) - x} = \mu_1- \mu_0 \leq 2\epsilon$. Thus, the optimal transport cost for this coupling is 0, and therefore so is $D_\epsilon(p_0, p_1)$. This gives the lower bound 
\begin{align*}
R_\epsilon^* \geq \frac{1}{2}.
\end{align*}
However, since the constant classifier achieves the lower bound, we conclude $R_\epsilon^* = 1/2$.

\begin{figure}[t]
\begin{center}
\includegraphics[scale = 0.45]{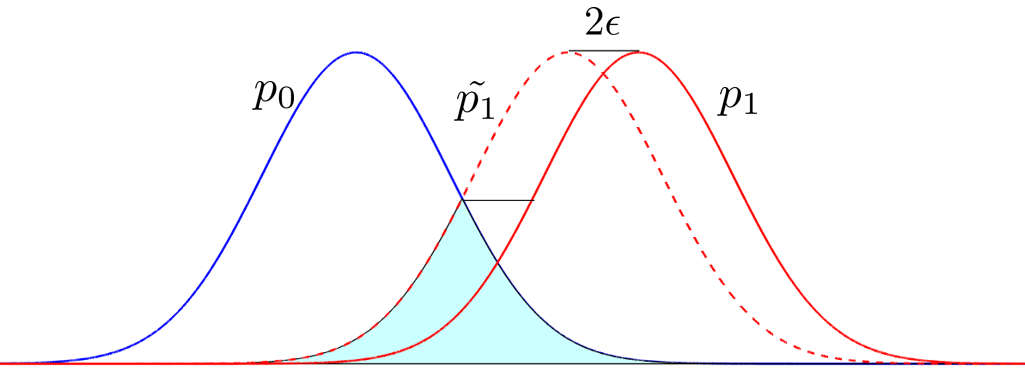}
\end{center}
\caption{Optimal coupling for two Gaussians with identical variances. The shaded region within $p_0$ is translated by $2\epsilon$ to $p_1$, whereas the remaining is mass in $p_0$ is moved at a cost of 1 per unit mass.}
\label{fig: coupling_same_var}
\end{figure}

For part (2), we consider the following strategy for transporting the mass from $p_0$ to $p_1$.
As shown in Figure~\ref{fig: coupling_same_var}, consider the distribution $\tilde p_1$ obtained by shifting $p_1$ to the left by $2\epsilon$.
That is, $\tilde p_1(x) = p_1(x+2\epsilon)$.
Define $q: \real\to \real$ as $q(x) =\min(p_0(x), \tilde p_1(x))$.
It is evident that the overlapping area between $\tilde p_1$ and $p_0$ (i.e., the area under the curve $q(x)$) maybe be translated by $2\epsilon$ to the right so that it lies entirely under the curve $p_1(x)$. More precisely, $q(x-2\epsilon)\leq p_1(x)$ for all $x\in \real$.
Hence, the area under $q(x)$ may be transported to $p_1(x)$ at $0$ cost by using the transport map $T(x) = x+2\epsilon$.
It is easily verified that the area under $q(x)$ equals $2Q\left(\frac{\frac{\mu_1-\mu_0}{2} - \epsilon}{\sigma}\right)$, and so the total cost of transporting $p_0$ to $p_1$ is at most $1-2Q\left(\frac{\frac{\mu_1-\mu_0}{2} - \epsilon}{\sigma}\right)$. Plugging this into the lower bound, we see that 
$$R_\epsilon^* \geq Q\left(\frac{\frac{\mu_1-\mu_0}{2} - \epsilon}{\sigma}\right).$$ 
Since this risk is achieved by the MLE classifier, we conclude that this is the optimal robust risk and the MLE classifier is the optimal robust classifier.
\end{proof}

Theorem~\ref{thm: same variance} can be easily extended to $d$-dimensional Gaussians with the same identity covariances. 
Our results may be summarized in the following theorem:
\begin{theorem}\label{thm: same variance d-dim}
Let $p_0 = \cN(\mu_0, \sigma^2 I_d)$ and $p_1 = \cN(\mu_1, \sigma^2 I_d)$ in the metric space $(\mathbb{R}, ||\cdot||_2)$. Then the following hold:
\begin{enumerate}
\item
If $\epsilon \geq \frac{||\mu_0 - \mu_1||_2}{2}$, the optimal robust risk is $1/2$. A constant classifier achieves this risk.
\item
If $\epsilon < \frac{||\mu_0 - \mu_1||_2}{2}$, the optimal classifier is given by the following halfspace:
\begin{align}\label{eq: A d-dim}
A = \left\{x ~:~ (\mu_1 - \mu_0) \left(x - \frac{\mu_0+\mu_1}{2} \right) \geq 0 \right\}.
\end{align}
\end{enumerate}
\end{theorem}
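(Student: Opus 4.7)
The $d$-dimensional result should follow from Theorem~\ref{thm: same variance} by a rotation/projection reduction that exploits the rotational symmetry of the identity-covariance Gaussian. My plan is to project everything onto the unit vector $u = (\mu_1-\mu_0)/\|\mu_1-\mu_0\|_2$ and treat the remaining $d-1$ orthogonal coordinates as ``nuisance'' directions that carry the same marginal under $p_0$ and $p_1$. Concretely, if we complete $u$ to an orthonormal basis $\{u, e_2, \ldots, e_d\}$, then under $\cN(\mu_j, \sigma^2 I_d)$ the coordinates $u^\top X$ and $(e_2^\top X, \ldots, e_d^\top X)$ are independent Gaussians, and the second block has the same distribution for $j=0$ and $j=1$ (because $\mu_1-\mu_0$ is parallel to $u$). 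This is what allows the reduction.

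For part (1), when $\epsilon \ge \|\mu_0-\mu_1\|_2/2$, the translation $T(x) = x + (\mu_1-\mu_0)$ pushes $p_0$ onto $p_1$ and $\|T(x)-x\|_2 = \|\mu_1-\mu_0\|_2 \le 2\epsilon$, so the induced coupling has $c_\epsilon$-cost zero; hence $D_\epsilon(p_0,p_1)=0$ and Theorem~\ref{th_01bound} gives $R^*_\epsilon = 1/2$, matched by any constant classifier. For part (2), I would follow the four-step recipe laid out at the start of Section~\ref{sec: couplings}. For the upper bound on $D_\epsilon$, I would construct a coupling $(X,X')$ by sampling $X\sim p_0$, writing $X = (u^\top X)u + X_\perp$, applying the one-dimensional optimal coupling from Theorem~\ref{thm: same variance} to the projection $u^\top X$ to produce $Y' \sim \cN(u^\top\mu_1, \sigma^2)$, and setting $X' := Y' u + X_\perp$. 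Using independence of $u^\top X$ and $X_\perp$ (and of the auxiliary randomness used in the 1D coupling), $X'$ has marginal $p_1$; moreover $\|X-X'\|_2 = |u^\top X - Y'|$, so the $c_\epsilon$-cost equals the 1D cost, which by Theorem~\ref{thm: same variance} is $1 - 2 Q\!\left(\frac{\|\mu_1-\mu_0\|_2/2 - \epsilon}{\sigma}\right)$.

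For the matching lower bound I would take $A$ to be the halfspace in \eqref{eq: A d-dim}. Then $A^\epsilon$ and $A^{-\epsilon}$ are halfspaces shifted by $\pm\epsilon$ along $u$, and since $u^\top(X-\tfrac{\mu_0+\mu_1}{2})$ is a scalar Gaussian with mean $\pm\|\mu_1-\mu_0\|_2/2$ and variance $\sigma^2$ under $p_1$ and $p_0$ respectively, a direct computation gives $p_1(A^{-\epsilon}) - p_0(A^\epsilon) = 1 - 2Q\!\left(\frac{\|\mu_1-\mu_0\|_2/2 - \epsilon}{\sigma}\right)$, which by Strassen's theorem (as used in the proof of Theorem~\ref{th_01bound}) lower-bounds $D_\epsilon(p_0,p_1)$ by the same quantity. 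Matching upper and lower bounds then identifies $D_\epsilon(p_0,p_1)$, shows via Theorem~\ref{th_01bound} that $R^*_\epsilon = Q\!\left(\frac{\|\mu_1-\mu_0\|_2/2-\epsilon}{\sigma}\right)$, and certifies the halfspace $A$ as an optimal robust classifier since its adversarial risk $\tfrac12[p_0(A^\epsilon) + p_1((A^c)^\epsilon)]$ equals this value.

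The main obstacle I anticipate is the coupling construction in step two: one has to be careful to verify that $X'$ truly has marginal $p_1$ and not just the right 1D projection. The subtle point is ensuring that $Y'$ remains independent of $X_\perp$, which uses that the 1D coupling is a measurable function of $u^\top X$ and independent auxiliary randomness, together with the independence of the coordinates of an identity-covariance Gaussian in any orthonormal basis. Once this is in place the rest of the argument mirrors the one-dimensional proof.
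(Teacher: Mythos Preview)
Your proposal is correct and is precisely the natural ``easy extension'' that the paper alludes to but does not write out: the paper states Theorem~\ref{thm: same variance d-dim} immediately after Theorem~\ref{thm: same variance} with only the remark that the one-dimensional result ``can be easily extended to $d$-dimensional Gaussians with the same identity covariances,'' and provides no further proof. Your projection onto $u=(\mu_1-\mu_0)/\|\mu_1-\mu_0\|_2$, together with the four-step recipe from Section~\ref{sec: couplings} (translation coupling for part~(1), lifted one-dimensional coupling plus the halfspace witness $A$ for part~(2)), is exactly how one would fill in that gap, and your computations of $p_1(A^{-\epsilon})-p_0(A^\epsilon)$ and of the coupling cost are correct.
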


\paragraph{Comparison to Bhagoji et al.~\cite{BhaEtal19}:} Bhagoji et al. also explore optimal classifiers for multivariate normal distributions. In fact, they show a more general version of our Theorems~\ref{thm: same variance} and \ref{thm: same variance d-dim} by considering data distributions $\cN(\mu_0, \Sigma)$ and $\cN(\mu_1, \Sigma)$, and an adversary that perturbs within $\ell_p$-balls. 

In the following subsections, we shall generalize Theorem~\ref{thm: same variance} in a different way by considering various interesting examples of univariate distributions and identifying optimal couplings for these.

\subsection{Gaussians with arbitrary means and variances}\label{section: same mean}
We shall introduce a general coupling strategy and apply it to the special case of Gaussian random variables. Given two probability measures $\mu$ and $\nu$ on $\real$, our strategy consists of the following steps:
\begin{itemize}
\item[(1)] Partition the real line into $K \ge 1$ intervals $S_i, 1 \leq i \leq K$, and let the restriction of $\mu$ to $S_i$ be $\mu_i$.
\item[(2)] Partition the real line into $K \ge 1$ intervals $T_i, 1 \leq i \leq K$, and let the restriction of $\nu$ to $T_i$ be $\nu_i$.
\item[(3)] Transport mass from $\mu_i$ to $\nu_i$ such that $D_\epsilon(\mu_i, \nu_i) = 0$. (We shall use the definition of mass transport between measures with unequal masses from definition \ref{def: unequal}.) The transport maps used in these $K$ problems may be arbitrary; however, we shall often use versions of the monotone optimal transport map \cite{Vil03}.
\end{itemize}

Our next lemma is specific to Gaussian pdfs:
\begin{lemma}\label{lemma: normal intersection}
Let $f$ and $g$ be Gaussian pdfs corresponding to $\cN(\mu_1, \sigma_1^2)$ and $\cN(\mu_2, \sigma_2^2)$, respectively. Assume $\sigma_1^2 > \sigma_2^2$. Then the equation $f(x)-g(x) = 0$ has exactly two solutions $s_1 < \mu_2 < s_2$. 
\end{lemma}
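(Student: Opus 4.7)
The plan is to take logarithms on both sides of $f(x)=g(x)$ to reduce the problem to analyzing a quadratic equation in $x$. Concretely, $f(x)=g(x)$ is equivalent to
\begin{align*}
\frac{(x-\mu_2)^2}{2\sigma_2^2} - \frac{(x-\mu_1)^2}{2\sigma_1^2} = \log\!\frac{\sigma_1}{\sigma_2}.
\end{align*}
The left-hand side is a quadratic in $x$ whose leading coefficient is $\tfrac{1}{2\sigma_2^2}-\tfrac{1}{2\sigma_1^2}$, which is strictly positive because $\sigma_1^2>\sigma_2^2$. Thus the set of solutions to $f(x)-g(x)=0$ has cardinality at most two.

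To show there are in fact exactly two solutions, and that $\mu_2$ lies strictly between them, I would verify two things: first, that $f(\mu_2)<g(\mu_2)$; and second, that $f(x)>g(x)$ for all sufficiently large $|x|$. For the first, note $g(\mu_2)=\tfrac{1}{\sqrt{2\pi}\,\sigma_2}$ is the maximum value of $g$, whereas $f(\mu_2)\le \tfrac{1}{\sqrt{2\pi}\,\sigma_1}<\tfrac{1}{\sqrt{2\pi}\,\sigma_2}=g(\mu_2)$ using $\sigma_1>\sigma_2$. For the second, the ratio $g(x)/f(x)$ has the form $(\sigma_1/\sigma_2)\exp\!\bigl(-\tfrac{(x-\mu_2)^2}{2\sigma_2^2}+\tfrac{(x-\mu_1)^2}{2\sigma_1^2}\bigr)$, and the exponent tends to $-\infty$ as $|x|\to\infty$ because the coefficient of $x^2$ in the exponent, namely $\tfrac{1}{2\sigma_1^2}-\tfrac{1}{2\sigma_2^2}$, is strictly negative; hence $g(x)/f(x)\to 0$, so $f(x)>g(x)$ outside some bounded interval containing $\mu_2$.

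By the intermediate value theorem applied to the continuous function $f-g$, which is negative at $\mu_2$ and positive for sufficiently negative and sufficiently positive $x$, there exists at least one root in $(-\infty,\mu_2)$ and at least one root in $(\mu_2,+\infty)$. Combined with the upper bound of two roots from the quadratic structure, we conclude $f(x)-g(x)=0$ has exactly two solutions $s_1<\mu_2<s_2$, as claimed.

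There is no real obstacle here; this is a routine quadratic/IVT argument. The only minor point to be careful about is handling the logarithm cleanly so that the reduction to a quadratic is valid everywhere (it is, since $f$ and $g$ are both strictly positive on $\real$), and ensuring both inequalities $f(\mu_2)<g(\mu_2)$ and the asymptotic domination $f\gg g$ at infinity are strict, which they are under the assumption $\sigma_1^2>\sigma_2^2$.
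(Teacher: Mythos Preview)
Your proof is correct and follows essentially the same approach as the paper: both reduce $f(x)=g(x)$ to a quadratic equation via the logarithm and use the hypothesis $\sigma_1^2>\sigma_2^2$ to control the leading coefficient. The only cosmetic difference is that the paper first normalizes to $\mu_2=0$, $\sigma_2=1$ and reads off the signs of the two roots algebraically (the constant term $-(\mu_1^2+2\sigma_1^2\log\sigma_1)$ is negative, so the product of roots is negative), whereas you keep the general parameters and locate the roots on either side of $\mu_2$ via the intermediate value theorem.
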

\begin{proof}
By scaling and translating, we may set $\mu_2 = 0$ and $\sigma_2^2 = 1$. Solving $f(x) - g(x) = 0$ is equivalent to solving the quadratic equation
\begin{align*}
\frac{x^2}{2} - \frac{(x-\mu_1)^2}{2\sigma_1^2} = \log \sigma_1.
\end{align*}
Simplifying, we wish to solve
\begin{align*}x^2(\sigma_1^2 - 1) + 2\mu_1 x - (\mu_1^2 + 2\sigma_1^2 \log \sigma_1) = 0.
\end{align*}
Since $\sigma_1 > 1$, the above quadratic has two distinct roots: one negative and one positive. This proves the claim.
\end{proof}

We shall call the two points where $f$ and $g$ intersect as the left and right intersection points. 

\begin{theorem}\label{thm: same mean}
Let $\mu$ and $\nu$ be the Gaussian measures $\cN(0,\sigma_1^2)$ and $\cN(0,\sigma_2^2)$, respectively. Assume $\sigma_1^2 > \sigma_2^2$ without loss of generality. Let $m >0$ be such that $f(m+\epsilon) = g(m-\epsilon)$. Let $A = (-\infty, -m] \cup [m, +\infty)$. Then the optimal transport cost between $\mu$ and $\nu$ is given by
\begin{align*}
D_\epsilon(\mu, \nu) &= \mu(A^{\ominus \epsilon}) - \nu(A^{\oplus \epsilon}) \\ 
&= 2Q\left(\frac{m+\epsilon}{\sigma_1}\right) - 2Q\left(\frac{m-\epsilon}{\sigma_2}\right).
\end{align*}
The corresponding robust risk is
\begin{align*}
R_\epsilon^* = \frac{1 - \mu(A^{\ominus \epsilon}) + \nu(A^{\oplus \epsilon})}{2}.
\end{align*}
Moreover, if $\mu$ corresponds to hypothesis 1, the optimal robust classifier declares label $1$ on the set $A$.
\end{theorem}

\begin{proof}

We shall propose a map that transports $\mu$ to $\nu$. (See Figure~\ref{fig: gaussians_centered} for an illustration.) 
The existence of a $m>0$ such that $f(m+\epsilon) = g(m-\epsilon)$ is guaranteed by Lemma~\ref{lemma: normal intersection}.
Consider $r \in (0, m-\epsilon)$ whose value will be provided later. First, we partition $\real$ into the five regions for $\mu$ and $\nu$, as shown in Table~\ref{table: partition}. For $\mu$, these partitions are $(-\infty, -m-\epsilon]$, $(-m-\epsilon, -r]$, $(-r, +r)$, $[r, m+\epsilon)$, and $[m+\epsilon, \infty)$. Let $\mu$ restricted to these intervals be $\mu_{--}$, $\mu_{-}$, $\mu_0$, $\mu_{+},$ and $\mu_{++}$, respectively. The measure $\nu$ is also partitioned five ways, but the intervals used in this case are slightly modified to be $(-\infty, -m+\epsilon]$, $(-m+\epsilon, -r]$, $(-r, r)$, $[r, m-\epsilon)$, and $[m-\epsilon, +\infty)$. Call $\nu$ restricted to these intervals $\nu_{--}$, $\nu_{-}$, $\nu_0$, $\nu_+$, and $\nu_{++}$, respectively. 

\begin{table}[]
\begin{center}
\begin{tabular}{|c|c|}
\hline
$\mu_{--}$ &$(-\infty, -m-\epsilon]$  \\ \hline
$\mu_{-}$ &$(-m-\epsilon, -r]$  \\ \hline
$\mu_{0}$ &$(-r, +r)$  \\ \hline
$\mu_{+}$ &$[r, m+\epsilon)$  \\ \hline
$\mu_{++}$ &$[m+\epsilon, \infty)$  \\ \hline
\end{tabular}
%\end{table}
\quad
%\begin{table}[]
\begin{tabular}{|c|c|}
\hline
$\nu_{--}$ &$(-\infty, -m+\epsilon]$  \\ \hline
$\nu_{-}$ &$(-m+\epsilon, -r]$  \\ \hline
$\nu_{0}$ &$(-r, +r)$  \\ \hline
$\nu_{+}$ &$[r, m-\epsilon)$  \\ \hline
$\nu_{++}$ &$[m-\epsilon, \infty)$  \\ \hline
\end{tabular}
\end{center}
\caption{The real line is partitioned into five regions for $\mu$ and $\nu$, as shown in the table.}
\label{table: partition}
\end{table}
The transport plan from $\mu$ to $\nu$ will consist of five maps transporting $\mu_{--} \to \nu_{--}$, $\mu_- \to \nu_-$, $\mu_0 \to \nu_0$, $\mu_+ \to \nu_+$, and $\mu_{++} \to \nu_{++}$. In each case, we plan to show that $D_\epsilon(\mu_{*}, \nu_{*}) = 0$, where $*$ ranges over all possible subscripts in $\{--, -, 0, +, ++\}$. Note that these measures do not necessarily have identical masses, and thus by Definition~\ref{def: unequal}, we are transporting a quantity of mass equal to the minimum mass among the two measures. For this reason, even though the transport cost is $D_\epsilon(\mu_*, \nu_*) = 0$, it does not mean $D_\epsilon(\mu, \nu) = 0$.

Consider $\mu_{++}$ and $\nu_{++}$. We have $f(m+\epsilon) = g(m-\epsilon)$ by the choice of $m$. We argue that for any $t \geq 0$, we must have $f(m+\epsilon+t) \geq g(m-\epsilon+t)$. This is because
any two Gaussian pdfs can intersect in at most two points. By Lemma~\ref{lemma: normal intersection}, the $\epsilon$-shifted Gaussian pdfs $f(x+\epsilon)$ and $g(x-\epsilon)$ have $m$ as their right intersection point, and there are no additional points of intersection to the right of $m$. Since the tail of $f$ is heavier, it means that $f(m+\epsilon+t) \geq g(m-\epsilon + t)$ for all $t \geq 0$. By Lemma~\ref{lemma: shift}, we can now conclude $D_\epsilon(\mu_{++}, \nu_{++}) = 0$. A similar argument also shows $D_\epsilon(\mu_{--}, \nu_{--}) = 0$. 

Before we consider $\mu_{-}$ and $\nu_{-}$, we first define $r$ as follows: Pick $r>0$ such that $\mu([-m-\epsilon, -r)) = \nu([-m+\epsilon, -r))$. To see that such an $r$ must exist, consider the functions $a(t) := \mu([-m-\epsilon, t))$ and $b(t):=\nu([-m+\epsilon, t))$ as $t$ ranges over $(-m+\epsilon, 0)$. When $t = -m+\epsilon$, we have $a(t) > b(t) = 0$. When $t=0$, we have $a(t) = 1/2 - \mu_{--}(\real) < b(t) = 1/2 - \nu_{--}(\real)$.  Thus, there must exist a $t_0 \in (-m+\epsilon, 0)$ such that $a(t_0) = b(t_0)$. Pick the smallest (i.e., the leftmost) such $t_0$, and set $-r = t_0$. Call $f(\cdot)$ restricted to $[-m-\epsilon, -r)$ and $g(\cdot)$ restricted to $[-m+\epsilon, -r)$ as $f_-$ and $g_-$, respectively, and their corresponding cdfs $F_-$ and $G_-$, respectively. We claim that $\mu_-$ and $\nu_-$ satisfy all three conditions from Lemma~\ref{lemma: scrunch}. Since the supports of $f_-$ and $g_-$ are  $[-m-\epsilon, -r)$ and $[-m+\epsilon, -r)$, condition $(1)$ is immediately verified. To check condition $(2)$, we break up the interval $[-m-\epsilon, -r)$ into two parts: $[-m-\epsilon, -s)$ and $[-s, -r)$, where $s$ is such that $f(-s) = g(-s)$. Observe that $f_- \geq g_-$ on $[-m-\epsilon, -s)$, whereas $f_- \leq g_-$ on $[-s, -r)$. This shows that condition $(2)$ is satisfied. We have $g_-(-m+\epsilon) = f_-(-m-\epsilon)$. Again, using Lemma~\ref{lemma: normal intersection} the $2\epsilon$-shifted Gaussian pdf $f(x-2\epsilon)$ and $g(x)$ have $-m+\epsilon$ as their left intersection point, and the right intersection point is to the right of 0. Thus, we have $f(x-2\epsilon) \leq g(x)$ for all $x \in [-m+\epsilon, 0] \supseteq [-m+\epsilon, r).$ Using this domination, we conclude that $f_- \leq \tilde g_-$ in the interval $[-m-\epsilon, -r-2\epsilon)$ and $f_- \geq g_- = 0$ in the interval $(-r-2\epsilon, -r]$, and so condition $(3)$ is satisfied. Applying Lemma~\ref{lemma: scrunch}, we conclude $D_\epsilon(\mu_-, \nu_-) = 0$. An essentially identical argument may be used to show $D_\epsilon(\mu_+, \nu_+) = 0$. The minor difference being that $r$ is chosen to satisfy $\mu([r, m+\epsilon)) = \nu([r, m-\epsilon))$, and the mirror image of Lemma~\ref{lemma: scrunch} is applied.

Finally, consider the interval $(-r, +r)$. In this interval, $f(x) \leq g(x)$ for every point. Hence, a transport map from $\mu_0$ to $\nu_0$ is obtained by simply considering the identity function. Any remaining mass in $\mu$ is moved to $\nu$ arbitrarily, incurring a cost of at most 1 per unit mass. The total cost of transport is then upper-bounded by
\begin{align*}
D_\epsilon(\mu, \nu) &\leq 1 - \left[ \min(\mu_{--}, \nu_{--}) + \min(\mu_{-}, \nu_{-}) + \min(\mu_{0}, \nu_{0}) + \min(\mu_{+}, \nu_{+}) + \min(\mu_{++}, \nu_{++})\right]\\
&= 1 -  \left[ \nu_{--} + \mu_{-} + \mu_{0} + \mu_{+} + \nu_{++}\right]\\
&= 1 - \mu([-m-\epsilon, m+\epsilon]) - 2\nu([m-\epsilon, \infty))\\
&= \mu(A^{\ominus \epsilon}) - \nu(A^{\oplus \epsilon})\\
%&= 1 - \left(1 - 2Q\left(\frac{m+\epsilon}{\sigma_1}\right)\right) - 2Q\left(\frac{m-\epsilon}{\sigma_2}\right)\\
&= 2Q\left(\frac{m+\epsilon}{\sigma_1}\right) - 2Q\left(\frac{m-\epsilon}{\sigma_2}\right).
\end{align*}
where for brevity we have denoted $\mu_{*}(\real)$ as $\mu_*$. However, we also have
\begin{align*}
D_\epsilon(\mu, \nu) &\geq \mu(A^{\ominus \epsilon}) - \nu(A^{\oplus \epsilon}).
\end{align*}
The lower and upper bounds match and this concludes the proof. The robust risk $R_\epsilon^*$ is given by Theorem~\ref{th_01bound}. The robust risk of the classifier that declares label $1$ on the set $A$ is easily seen to be $R_\epsilon^*$.
\end{proof}

\begin{figure}[t]
    \centering
    \includegraphics[scale=0.5]{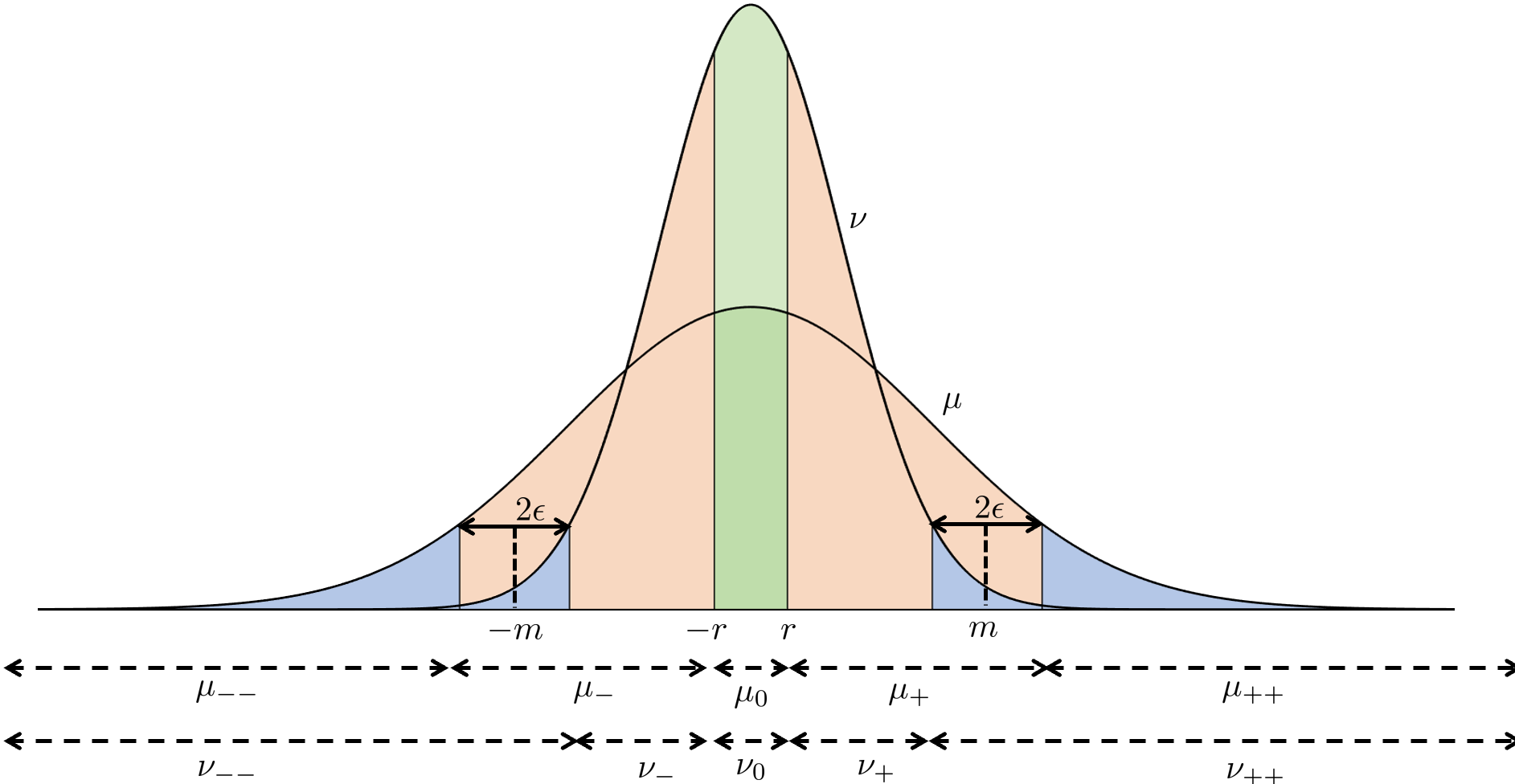}
    \caption{Optimal transport coupling for centered Gaussian distributions $\mu$ and $\nu$. As in the proof of Theorem~\ref{thm: same mean}, we divide the real line into five regions. The transport plan from $\mu$ to $\nu$ consists of five maps transporting $\mu_{--} \to \nu_{--}$ (blue regions to the left), $\mu_- \to \nu_-$ (orange regions to the left), $\mu_0 \to \nu_0$ (green regions in the middle), $\mu_+ \to \nu_+$ (orange regions to the right), and $\mu_{++} \to \nu_{++}$ (blue regions to the right).}
    \label{fig: gaussians_centered}
\end{figure}

We now extend the above proof strategy to demonstrate the optimal coupling for Gaussians with arbitrary means and arbitrary variances. Our main result is the following:
\begin{theorem}\label{thm: diff mean}
Let $\mu$ and $\nu$ be Gaussian measures $\cN(\mu_1, \sigma_1^2)$ and $\cN(\mu_2, \sigma_2^2)$ respectively. Assume $\sigma_1^2 > \sigma_2^2$ without loss of generality. Let $m_1, m_2 >0$ be such that $f(-m_1-\epsilon) = g(-m_1+\epsilon)$ and $f(m_2+\epsilon) = g(m_2-\epsilon)$. Let $A = (-\infty, -m_1] \cup [m_2, \infty)$. Then the optimal transport cost between $\mu$ and $\nu$ is given by
\begin{align*}
D_\epsilon(\mu, \nu) = \mu(A^{\ominus \epsilon}) - \nu(A^{\oplus \epsilon}).
\end{align*}
Consequently, the robust risk is given by
\begin{align*}
R_\epsilon^* = \frac{1}{2} (1 - \mu(A^{\ominus \epsilon}) + \nu(A^{\oplus \epsilon})).
\end{align*}
If $\mu$ corresponds to hypothesis 1, the optimal robust classifier declares label $1$ on the set $A$.
\end{theorem}
\begin{proof}
We first note that the existence of $m_1$ and $m_2$ in the theorem statement is guaranteed by Lemma~\ref{lemma: normal intersection}.
As in the proof of Theorem~\ref{thm: same mean}, we shall divide the real line into five regions as shown in Table~\ref{table: diff_meanvar} where we define $r_1$ and $r_2$ shortly. Using an identical strategy as in Theorem~\ref{thm: same mean}, we conclude $D_\epsilon(\mu_{--}, \nu_{--}) = D_\epsilon(\mu_{++}, \nu_{++}) = 0$. Define $r_1$ as the leftmost point where $\mu([-m_1-\epsilon, r_1)) = \nu([-m_1+\epsilon, r_1))$. Similarly, define $r_2$ to be the rightmost point such that $\mu([r_2, m_2+\epsilon)) = \nu([r_2, m_2-\epsilon))$. We shall now prove $D_\epsilon(\mu_-, \nu_-) = 0$ by using Lemma~\ref{lemma: scrunch}.  Verifying conditions $(1)$ and $(2)$ is exactly as in that of Theorem~\ref{thm: same mean}. The novel component of this proof is verifying condition $(3)$, since the domination used in the proof of Theorem~\ref{thm: same mean} does not work in this case due to the asymmetry. Consider the pdfs $f_-(x)$ and $g_-(x+2\epsilon)$. These two pdfs, being restrictions of Gaussian pdfs to suitable intervals, may only intersect in at most two points. One of these points of intersection is $-m_1-\epsilon$ by the choice of $m_1$, so there can be at most one other point of intersection in the interval $[-m_1-\epsilon, -r_1-2\epsilon]$. Note that there may be no point of intersection in this interval. However, the key observation is that in both cases, condition $(3)$ continues to be satisfied. To see this, suppose that there is a point of interaction $\tilde t$. In this case, $f_- \leq \tilde g_-$ in $[-m_1-\epsilon, \tilde t)$, and $f_- \geq g_-$ in $(\tilde t, -r_1]$. If there is no point of intersection, then $f_- \leq \tilde g_-$ in $[-m_1-\epsilon, -r_1-2\epsilon)$, and $f_- \geq g_- = 0$ in $(-r_1-2\epsilon, -r_1]$. This verifies condition $(3)$. Using Lemma~\ref{lemma: scrunch}, we conclude $D_\epsilon(\mu_-, \nu_-) = 0$. An identical approach gives $D_\epsilon(\mu_+, \nu_+) = 0$. Since $f(x) \leq g(x)$ for all points in the interval $(-r_1, r_2)$, the identity map may be used to conclude $D_\epsilon(\mu_0, \nu_0) = 0$. 

Any remaining mass in $\mu$ is moved to $\nu$ arbitrarily, incurring a cost of at most 1 per unit mass. The total cost of transport is then upper-bounded by
\begin{align*}
D_\epsilon(\mu, \nu) &\leq 1 - \left[ \min(\mu_{--}, \nu_{--}) + \min(\mu_{-}, \nu_{-}) + \min(\mu_{0}, \nu_{0}) + \min(\mu_{+}, \nu_{+}) + \min(\mu_{++}, \nu_{++})\right]\\
&= 1 -  \left[ \nu_{--} + \mu_{-} + \mu_{0} + \mu_{+} + \nu_{++}\right]\\
&= 1 - \mu([-m_1-\epsilon, m_2+\epsilon]) - \nu((-\infty, -m_1+\epsilon)) - \nu([m_2-\epsilon, \infty))\\
&= \mu(A^{\ominus \epsilon}) - \nu(A^{\oplus \epsilon}),
%&= 1 - \left(1 - 2Q\left(\frac{m+\epsilon}{\sigma_1}\right)\right) - 2Q\left(\frac{m-\epsilon}{\sigma_2}\right)\\
%&= 2Q\left(\frac{m+\epsilon}{\sigma_1}\right) - 2Q\left(\frac{m-\epsilon}{\sigma_2}\right).
\end{align*}
where for brevity we have denoted $\mu_{*}(\real)$ as $\mu_*$, where $*$ ranges over all possible subscripts in $\{--, -, 0, +, ++\}$. The rest of the proof is identical to that of Theorem~\ref{thm: same mean}. 
\begin{table}[]
\begin{center}
\begin{tabular}{|c|c|}
\hline
$\mu_{--}$ &$(-\infty, -m_1-\epsilon]$  \\ \hline
$\mu_{-}$ &$(-m_1-\epsilon, -r_1]$  \\ \hline
$\mu_{0}$ &$(-r_1, +r_2)$  \\ \hline
$\mu_{+}$ &$[r_2, m_2+\epsilon)$  \\ \hline
$\mu_{++}$ &$[m_2+\epsilon, \infty)$  \\ \hline
\end{tabular}
%\end{table}
\quad
%\begin{table}[]
\begin{tabular}{|c|c|}
\hline
$\nu_{--}$ &$(-\infty, -m_1+\epsilon]$  \\ \hline
$\nu_{-}$ &$(-m_1+\epsilon, -r_1]$  \\ \hline
$\nu_{0}$ &$(-r_1, +r_2)$  \\ \hline
$\nu_{+}$ &$[r_2, m_2-\epsilon)$  \\ \hline
$\nu_{++}$ &$[m_2-\epsilon, \infty)$  \\ \hline
\end{tabular}
\end{center}
\caption{The real line is partitioned into five regions for $\mu$ and $\nu$ as shown in the table.}
\label{table: diff_meanvar}
\end{table}
\end{proof}

\subsection{Beyond Gaussian examples}
The coupling strategy for Gaussian random variables can also be applied to other univariate examples that share some similarities with the Gaussian case. To illustrate, we describe the optimal classifier and optimal coupling for uniform distributions and triangular distributions. 

\begin{theorem}[Uniform distributions]\label{thm: uniform coupling}
Let $\mu$ and $\nu$ be uniform measures on closed intervals $I$ and $J$ respectively. Without loss of generality, we assume $|I| \leq |J|$. Then the optimal robust risk is $\nu(I^{2\epsilon})$ and the optimal classifier is given by $A = I^\epsilon$.
\end{theorem}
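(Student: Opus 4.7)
The proof would follow the four-step strategy laid out at the start of Section~\ref{sec: couplings}. The main claim to establish is the optimal transport equality
\[
D_\epsilon(\mu, \nu) \;=\; 1 - \nu(I^{2\epsilon}),
\]
from which the optimal classifier with decision region $A = I^\epsilon$ and the optimal risk follow through Theorem~\ref{th_01bound}.

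For the lower bound, I would apply Strassen's theorem to the closed set $A_0 = I$: since $\mu$ is supported on $I$ and $A_0^{2\epsilon} = I^{2\epsilon}$,
\[
D_\epsilon(\mu, \nu) \;\geq\; \mu(I) - \nu(I^{2\epsilon}) \;=\; 1 - \nu(I^{2\epsilon}).
\]
For the matching upper bound, I would construct a coupling as follows. Split $\nu = \nu_{\mathrm{in}} + \nu_{\mathrm{out}}$ with $\nu_{\mathrm{in}} := \nu|_{J \cap I^{2\epsilon}}$ (mass $\nu(I^{2\epsilon})$) and $\nu_{\mathrm{out}} := \nu|_{J \setminus I^{2\epsilon}}$ (mass $1 - \nu(I^{2\epsilon})$). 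Every point of $\mathrm{supp}(\nu_{\mathrm{out}})$ lies at distance strictly greater than $2\epsilon$ from the support $I$ of $\mu$, so any coupling must pay cost $1$ per unit of mass transported to $\nu_{\mathrm{out}}$. Hence it suffices to exhibit a sub-measure $\mu_{\mathrm{in}} \leq \mu$ of mass $\nu(I^{2\epsilon})$ that can be transported to $\nu_{\mathrm{in}}$ with every matched pair of points within distance $2\epsilon$; the residual mass of $\mu$ is sent to $\nu_{\mathrm{out}}$ arbitrarily, yielding total coupling cost $1 - \nu(I^{2\epsilon})$. Letting $[c', d'] := J \cap I^{2\epsilon}$, I would take $\mu_{\mathrm{in}}$ to be the restriction of $\mu$ to a sub-interval $[p, q] \subseteq I$ with uniform density $\alpha \leq 1/|I|$ chosen so that the monotone (affine) map from $\mu_{\mathrm{in}}$ to $\nu_{\mathrm{in}}$ sends $p \mapsto c'$ and $q \mapsto d'$ with $|c' - p|, |d' - q| \leq 2\epsilon$; affineness then forces $|T(x) - x| \leq 2\epsilon$ everywhere on $[p, q]$.

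The main obstacle is showing that a feasible triple $(p, q, \alpha)$ exists in every configuration of $I$ and $J$, which calls for a short case analysis on how $J$ sits relative to $I^{2\epsilon}$. If $J \subseteq I^{2\epsilon}$, take $\mu_{\mathrm{in}} = \mu$ and use $|I| \leq |J|$ to rule out configurations where $J$ is shifted so far within $I^{2\epsilon}$ that the endpoint displacements of the monotone map from $\mu$ to $\nu$ exceed $2\epsilon$. If $J$ hangs off $I^{2\epsilon}$ on only one side (say the right), I would translate $[p, q]$ inside $I$ so that the monotone map becomes essentially a pure $2\epsilon$-translation of the chosen sub-interval of $\mu$ onto $\nu_{\mathrm{in}}$. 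If $J$ hangs off on both sides, set $[p, q] = I$ with density $\alpha = \nu(I^{2\epsilon})/|I|$, so that the monotone map is the affine stretch sending $a \mapsto a - 2\epsilon$ and $b \mapsto b + 2\epsilon$, with endpoint displacements exactly $2\epsilon$. In every sub-case the hypothesis $|I| \leq |J|$ is what makes the feasibility constraints and endpoint checks go through. A cleaner alternative is to package the same analysis by applying Theorem~\ref{thm: zero} to $\mu_{\mathrm{in}}$ and $\nu_{\mathrm{in}}$ and directly verifying $\|F_{\mathrm{in}}^{-1} - G_{\mathrm{in}}^{-1}\|_\infty \leq 2\epsilon$ for the inverse CDFs of the restricted measures.
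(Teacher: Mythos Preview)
Your lower bound via Strassen with $A_0 = I$ is exactly the paper's argument. For the upper bound, however, the paper constructs the coupling differently: it partitions $I$ at two breakpoints $t_1 < t_2$ chosen so that $\mu([i_1, t_1]) = \nu([i_1 - 2\epsilon, t_1])$ and $\mu([t_2, i_2]) = \nu([t_2, i_2+2\epsilon])$, uses monotone maps on the two outer pieces $[i_1,t_1]$ and $[t_2,i_2]$, uses the identity on the middle piece $(t_1,t_2)$ (which embeds $\nu|_{(t_1,t_2)}$ into $\mu|_{(t_1,t_2)}$ since $1/|J| \leq 1/|I|$), and moves all leftover $\nu$-mass arbitrarily. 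This mirrors the five-region Gaussian construction and avoids any case analysis on how $J$ sits relative to $I^{2\epsilon}$; what your approach buys is a single affine map in place of three piecewise maps, at the cost of the case split.

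One point to tighten: your description of the one-sided overhang case as ``essentially a pure $2\epsilon$-translation'' only covers the sub-case where $c' \geq i_1 + 2\epsilon$ (so that $[c'-2\epsilon, d'-2\epsilon] \subseteq I$). When $c' \in [i_1 - 2\epsilon, i_1 + 2\epsilon)$ the translate would protrude from $I$, and you instead need the affine stretch with $[p,q] = I$ and $\alpha = \nu(I^{2\epsilon})/|I|$, exactly as in your two-sided case; the endpoint displacements are then $|c' - i_1| \leq 2\epsilon$ and $|d' - i_2| = 2\epsilon$. Your fallback of applying Theorem~\ref{thm: zero} to $\mu_{\mathrm{in}}, \nu_{\mathrm{in}}$ absorbs this cleanly, since for uniform sub-measures both inverse CDFs are affine and $\|F_{\mathrm{in}}^{-1} - G_{\mathrm{in}}^{-1}\|_\infty$ is attained at an endpoint.
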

\begin{proof}
See Appendix~\ref{app: uniform coupling proof}
\end{proof}

In the following, we present the optimal adversarial risk and optimal classifier for symmetric triangular distributions. For $\delta>0$, we use $\Delta(m, \delta)$ to denote a triangular distribution with support $[m-\delta, m+\delta]$ and mode at $m$. The pdf of such a distribution is given by the function $f(x) = \frac{1}{\delta}\max\left\{ 1 - \frac{|x-m|}{\delta}, 0 \right\}$.

The next lemma is similar to Lemma~\ref{lemma: normal intersection}, but is specific to symmetric triangular distributions.
\begin{lemma}\label{lemma: triangular intersection}
Let $\mu$ and $\nu$ correspond to the triangular distributions $\Delta(m_1, \delta_1)$ and $\Delta(m_2, \delta_2)$ with pdfs $f$ and $g$ respectively. Assume $\delta_1 < \delta_2$. 
Then,
\begin{enumerate}
	\item If $|m_1-m_2| > \delta_2 + \delta_1$, then the equation $f(x)-g(x)=0$ has no solutions on the supports of $\mu$ or $\nu$.
	\item If $\delta_2 - \delta_1 <|m_1-m_2| \leq \delta_2 + \delta_1$, then the equation $f(x)-g(x)=0$ has exactly one solution $u$ on the support of $\mu$. Further, $u\geq m_1$ if and only if $m_1\leq m_2$.
	\item If $|m_1-m_2| \leq \delta_2 - \delta_1$, then the equation $f(x)-g(x)=0$ has exactly two solutions $l\in [m_1-\delta_1, m_1]$ and $r\in [m_1, m_1+\delta_1]$ on the support of $\mu$.
\end{enumerate}

\end{lemma}
\begin{proof}

We may assume that $m_1\leq m_2$, as case of $m_1\geq m_2$ follows by symmetry.

Suppose $|m_1-m_2| > \delta_2 + \delta_1$. Then $m_1+\delta_1<m_2-\delta_2$. Hence, the supports of $\mu$ and $\nu$ are disjoint and the result follows trivially.

Suppose $\delta_2 - \delta_1 <|m_1-m_2| \leq \delta_2 + \delta_1$. Then, $m_1+\delta_1\in [m_2-\delta_2, m_2+\delta_2]$ and $m_1-\delta_1\notin [m_2-\delta_2, m_2+\delta_2]$. Hence, the only solution $u$ to $f(x)-g(x)=0$ occurs at the intersection of the graph of $g(x)$ with the line segment joining the points $(m_1, 1/\delta_1)$ and $(m_1+\delta_1, 0)$. Clearly, $u\geq m_1$.

Suppose $|m_1-m_2| \leq \delta_2 - \delta_1$.
Then, $m_1-\delta_1\geq m_2-\delta_2$ and $m_1+\delta_1\leq m_2-\delta_2$. Hence, $[m_1-\delta_1, m_1+\delta_1]\subset [m_2-\delta_2, m_2+\delta_2]$. 
It follows that $f(m_1-\delta_1) - g(m_1-\delta_1)<0$, $f(m_1) - g(m_1)>0$ and $f(m_1+\delta_1) - g(m_1+\delta_1)<0$.
Since $f(x)-g(x)$ is a continuous function, there must be $l\in [m_1-\delta_1, m_1]$ and $r\in [m_1, m_1+\delta_1]$ such that $f(l)-g(l)=0$ and $f(r)-g(r)=0$. Moreover, $f(x)>g(x)$ for $x\in (l,r)$ and $f(x)<g(x)$ for $x\in (m_2-\delta_2, l)\cup (r, m_2+\delta_2)$.
Hence, $l$ and $r$ are the only solutions to $f(x)-g(x)=0$ on the support of $\mu$.

\end{proof}

\begin{theorem}[Triangular distributions]\label{thm: triangle coupling}
Let $\mu$ and $\nu$ correspond to the triangular distributions $\Delta(m_1, \delta_1)$ and $\Delta(m_2, \delta_2)$ with pdfs $f$ and $g$ respectively. Without loss of generality, assume $\delta_1 < \delta_2$ and $m_1<m_2$ (the case of $m_1>m_2$ follows from symmetry).  Let $2\epsilon \in (0,  \min(2\delta_1, \delta_2-\delta_1))$. Let $l = \sup\{x\leq m_1: f(x+\epsilon) = g(x-\epsilon)\}$  and $r = \inf\{x\geq m_1: f(x-\epsilon) = g(x+\epsilon)\}$. Let $A$ be the set defined as follows.
\begin{enumerate}
	\item If $m_2-m_1 \ge  \delta_2+\delta_1+2\epsilon$, then $A = (-\infty, m_1+\delta_1+\epsilon]$.
	\item If $\delta_2-\delta_1-2\epsilon \le m_2-m_1 <  \delta_2+\delta_1+2\epsilon$,  then $A = (-\infty, r]$.
	\item If $m_2-m_1 <  \delta_2-\delta_1-2\epsilon$,  then $A = [l, r]$.
\end{enumerate}
Then $D_\epsilon(\mu, \nu) = \mu(A^{\ominus \epsilon}) -\nu(A^{\oplus \epsilon})$, and the robust risk is 
\begin{align*}
R_\epsilon^* = \frac{1-\mu(A^{\ominus \epsilon}) + \nu(A^{\oplus \epsilon})}{2},
\end{align*}
and if $\mu$ corresponds to hypothesis 1, then the optimal robust classifier declares label $1$ on $A$.
\end{theorem}
\begin{proof}
See Appendix~\ref{app: triangle coupling proof}
\end{proof}

\section{Adversarial risk for continuous loss functions}\label{gen loss}

It is natural to ask if the results for $0$-$1$ loss may be extended to continuous losses. In this section, we present adversarial risk bounds in regression-like settings with continuous losses and investigate Questions~\ref{q_1} and \ref{q_2} in light of these bounds. 
%We recall that the adversary's perturbation ball around $x$ is according to a metric $d(\cdot, \cdot)$, which need not be the same as norm of the Hilbert space $\cX$. 

\subsection{Optimal adversarial risk}\label{sec: opt risk gen loss}

In this section, we prove lower and upper bounds on the optimal adversarial risk for distribution perturbing adversaries with budget $\epsilon \ge 0$. To prove lower bounds, we consider the $W_\infty$-distribution perturbing adversary with budget $\epsilon$, since this bound is valid for all $W_p$-distribution perturbing adversaries. Similarly, we prove upper bounds for the $W_1$-distribution perturbing adversary with budget $\epsilon$.  
%In this section, we prove a lower bound on optimal adversarial risk for for the data perturbing adversary and an upper bound optimal adversarial risk for for the distribution perturbing adversary. As noted earlier, the lower bound is also valid for distribution perturbing adversary and the upper bound is also valid for data perturbing adversary.

\subsubsection*{A trivial lower bound}\label{sec_triv_bound}
We start by presenting a trivial lower bound on the optimal adversarial risk. We shall assume that for all $\epsilon \ge 0$, the optimal hypothesis $w_\epsilon^*$ exists to  simplify presentation. The proofs can be easily modified by considering sequences of hypothesis such that $\liminf_i R_\epsilon^*(w_i) = R_\epsilon^*$ in case $w_\epsilon^*$ does not exist. %To derive the bound, we use the fact that the adversarial loss $R_\epsilon(\ell, w)$ for any classifier (i.e., for any $w$) is lower-bounded by the standard loss $R_0(\ell, w)$ for that classifier.

\begin{theorem}\label{th_trivial}
The optimal adversarial risk is at least as large as the optimal standard risk, that is, $R^*_\epsilon \geq R^*_0$.
\end{theorem}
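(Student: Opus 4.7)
The plan is to derive the bound pointwise in $w$ and then take the infimum over $\cW$. The key observation is that for any data point $(x,y)$, the set $\{x' : d(x,x') \leq \epsilon\}$ contains $x$ itself, since $d(x,x) = 0 \leq \epsilon$. Thus the supremum in the definition of $R_\epsilon(\ell,w)$ dominates the integrand value at $x' = x$, i.e.,
\begin{align*}
\sup_{d(x,x') \leq \epsilon} \ell((x',y), w) \geq \ell((x,y), w).
\end{align*}

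Next I would take expectations with respect to $(x,y) \sim \rho$ on both sides, which by monotonicity of expectation yields $R_\epsilon(\ell, w) \geq R_0(\ell, w)$ for every fixed $w \in \cW$. Finally, taking the infimum over $w \in \cW$ on both sides and using the fact that the infimum of a pointwise-larger function is at least as large gives $R^*_\epsilon \geq R^*_0$, as desired.

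The statement is essentially immediate from the definitions, so I do not expect a genuine obstacle. The only minor subtlety to flag is measurability of the inner supremum in \eqref{eq: data_loss}, which is implicitly assumed throughout the paper so that $R_\epsilon(\ell,w)$ is well-defined; no additional argument is needed here. Note also that since $R^*_\epsilon \leq \widehat R^*_\epsilon$ was already established, the same bound transfers immediately to the distribution perturbing adversary, giving $\widehat R^*_\epsilon \geq R^*_0$ as a free corollary.
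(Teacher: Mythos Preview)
Your proof is correct and essentially identical to the paper's: both establish $R_\epsilon(\ell,w)\ge R_0(\ell,w)$ for each fixed $w$ (you make explicit that $x$ lies in its own $\epsilon$-ball, while the paper just states $R_\epsilon$ is nondecreasing in $\epsilon$), and then pass to the infimum. The paper writes the chain through the specific point $w^*_\epsilon$ rather than taking $\inf_w$ on both sides, but this is only a cosmetic difference.
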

\begin{proof}
We have the sequence of inequalities:
\begin{align*}
    R^*_\epsilon
    = R_\epsilon(\ell, w^*_\epsilon)
    \geq R_0(\ell, w^*_\epsilon)
    \geq R_0(\ell, w^*_0)
    = R^*_0.
\end{align*}
The first inequality holds because $R_\epsilon(\ell, w)$ is a non-decreasing function of $\epsilon$ for any fixed $\ell$ and $w$.
The second inequality follows from the fact that the adversarially optimal classifier $w^*_\epsilon$ is sub-optimal for minimizing the standard risk.
\end{proof}

Note that the bound in Theorem \ref{th_trivial} does not depend on the strength of the adversary $\epsilon$, and hence it may not be very tight for large $\epsilon$. In what follows, we show tighter lower bounds for $R^*_\epsilon$ that depend on $\epsilon$.

For the lower bound, we consider loss functions that are convex with respect to the input $x$, as defined below.
\begin{definition}[Convex loss function]
We say that the loss function $\ell:\cX \times \cY \times\cW\to\mathbb{R}^+$ is convex with respect to the input if it satisfies the following condition.
\begin{align}\label{eq_convex}
    \ell((x', y), w) - \ell((x, y), w)
    &\geq
    \langle\nabla_x \ell((x,y), w), x'-x\rangle.
\end{align}
\end{definition}

\begin{theorem}\label{th_bound_generalloss}
The adversarial risk for a loss function satisfying \eqref{eq_convex} is bounded as follows.
\begin{align}
    R^*_\epsilon
    \geq
    R^*_0 + 
    \inf_{w \in \cW} \E_z\left[\sup_{d(x,x')\leq\epsilon}
    \langle \nabla_x\ell((x,y),w), x'-x \rangle
    \right].
\end{align}
\end{theorem}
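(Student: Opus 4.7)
The plan is to apply the convexity inequality \eqref{eq_convex} pointwise in $(x,y)$ and $w$, then take a sup over the adversary's perturbation, then integrate, and finally take an infimum over $w$. The weakening from $\inf_w(A(w)+B(w))$ to $\inf_w A(w) + \inf_w B(w)$ is what will make the stated bound come out in the clean ``$R^*_0 + \text{(correction term)}$'' form.

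Concretely, first I would fix $w \in \cW$ and $(x,y) \sim \rho$. Convexity gives, for every $x'$ with $d(x,x')\le \epsilon$,
\begin{equation*}
\ell((x',y),w) \;\ge\; \ell((x,y),w) + \langle \nabla_x \ell((x,y),w),\, x'-x\rangle.
\end{equation*}
Since the first term on the right does not depend on $x'$, taking the supremum over $\{x' : d(x,x')\le\epsilon\}$ on both sides yields
\begin{equation*}
\sup_{d(x,x')\le\epsilon}\ell((x',y),w) \;\ge\; \ell((x,y),w) + \sup_{d(x,x')\le\epsilon}\langle \nabla_x \ell((x,y),w),\, x'-x\rangle.
\end{equation*}

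Next I would take expectation with respect to $(x,y)\sim\rho$; by definition of $R_\epsilon(\ell,w)$ and $R_0(\ell,w)$, this gives
\begin{equation*}
R_\epsilon(\ell,w) \;\ge\; R_0(\ell,w) + \E_z\Bigl[\,\sup_{d(x,x')\le\epsilon}\langle \nabla_x \ell((x,y),w),\, x'-x\rangle\Bigr].
\end{equation*}
Finally, taking the infimum over $w \in \cW$ on both sides and using the elementary inequality $\inf_w(A(w)+B(w)) \ge \inf_w A(w) + \inf_w B(w)$ on the right-hand side,
\begin{equation*}
R^*_\epsilon \;=\; \inf_{w\in\cW} R_\epsilon(\ell,w) \;\ge\; \inf_{w\in\cW} R_0(\ell,w) \;+\; \inf_{w\in\cW}\E_z\Bigl[\sup_{d(x,x')\le\epsilon}\langle \nabla_x \ell((x,y),w),\, x'-x\rangle\Bigr],
\end{equation*}
which is exactly the claimed bound since $\inf_{w\in\cW} R_0(\ell,w) = R^*_0$.

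There is no serious obstacle: the result is essentially a one-line convexity argument combined with a supremum/expectation exchange that only requires a one-sided inequality (which is automatic for a pointwise lower bound), plus the standard $\inf(A+B)\ge \inf A+\inf B$ relaxation. The only modeling choice worth flagging is that the correction term uses $\inf_w$ separately, so it may be loose when the minimizers of $R_0(\ell,\cdot)$ and of the perturbation functional $\E_z[\sup \langle \nabla_x \ell, x'-x\rangle]$ are different. One could present the sharper intermediate inequality $R^*_\epsilon \ge \inf_w\{R_0(\ell,w) + \E_z[\sup_{d(x,x')\le\epsilon}\langle \nabla_x\ell,x'-x\rangle]\}$ as a remark, but the stated decoupled form has the advantage of being directly comparable to $R^*_0$, matching the spirit of Question~\ref{q_1}.
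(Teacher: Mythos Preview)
Your proposal is correct and essentially identical to the paper's proof. The only cosmetic difference is that the paper evaluates the adversarial risk at the specific minimizer $w^*_\epsilon$ and then uses $R_0(\ell,w^*_\epsilon)\ge R^*_0$ before applying convexity and finally taking $\inf_w$ on the gradient term, whereas you derive the inequality uniformly in $w$ and then apply $\inf_w(A+B)\ge\inf_w A+\inf_w B$; these are the same argument in a slightly different order.
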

\begin{remark*}
The lower bound holds for any $p$-Wasserstein distribution perturbing adversary with budget $\epsilon$.
\end{remark*}

Note that adversary's metric $d(\cdot, \cdot)$ may not be the same as the norm on the Hilbert space $\cX$. In the special case $d$ corresponds to the norm $\lVert \cdot\rVert_\text{adv}$, we can tighten the result of Theorem~\ref{th_bound_generalloss} as follows.
 
\begin{corollary}\label{cor_dualnorm}
In the setting of Theorem~\ref{th_bound_generalloss}, if $d(x,x')=\lVert x-x'\rVert_\text{adv}$ for $x,x'\in\cX$, then the following bound holds:
\begin{align}
        R^*_\epsilon
    \geq
    R^*_0
    + \epsilon\inf_{w\in\mathcal{W}}\E_{z}[\lVert \nabla_x\ell((x,y),w) \rVert_\text{adv*}],
\end{align}
where $\lVert \cdot\rVert_\text{adv*}$ is the dual norm of $\lVert \cdot\rVert_\text{adv}$.
\end{corollary}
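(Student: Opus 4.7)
The plan is to apply Theorem~\ref{th_bound_generalloss} directly and then simplify the inner supremum inside the expectation using the defining identity of the dual norm. Since the adversary's constraint set is already an $\lVert\cdot\rVert_\text{adv}$-ball of radius $\epsilon$, the supremum of a linear functional over this ball is a textbook dual-norm computation.

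Concretely, I would first fix $w \in \cW$ and a realization $z=(x,y)$, and set $v := \nabla_x \ell((x,y),w)$ and $u := x'-x$. Then I would rewrite
\begin{align*}
    \sup_{d(x,x')\leq\epsilon} \langle \nabla_x\ell((x,y),w), x'-x \rangle
    = \sup_{\lVert u\rVert_\text{adv}\leq\epsilon} \langle v, u\rangle.
\end{align*}
By the definition of the dual norm, $\sup_{\lVert u\rVert_\text{adv}\leq 1} \langle v, u\rangle = \lVert v\rVert_\text{adv*}$, and homogeneity gives $\sup_{\lVert u\rVert_\text{adv}\leq\epsilon} \langle v, u\rangle = \epsilon\,\lVert v\rVert_\text{adv*}$. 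Substituting back, I obtain
\begin{align*}
    \sup_{d(x,x')\leq\epsilon} \langle \nabla_x\ell((x,y),w), x'-x \rangle
    = \epsilon\,\lVert \nabla_x\ell((x,y),w) \rVert_\text{adv*}.
\end{align*}

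Finally I would take the expectation over $z$ (pulling the scalar $\epsilon$ out), then the infimum over $w\in\cW$, and plug the resulting expression into the bound of Theorem~\ref{th_bound_generalloss} to conclude
\begin{align*}
    R^*_\epsilon \;\geq\; R^*_0 + \epsilon\,\inf_{w\in\cW}\E_z\bigl[\lVert \nabla_x\ell((x,y),w)\rVert_\text{adv*}\bigr].
\end{align*}
There is no real obstacle here; the only thing to be mildly careful about is measurability of $x\mapsto \lVert\nabla_x\ell((x,y),w)\rVert_\text{adv*}$ so that the expectation is well-defined, but this follows from the assumed regularity (differentiability in $x$ and continuity of the norm) that is implicitly being used in Theorem~\ref{th_bound_generalloss} already.
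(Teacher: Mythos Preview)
Your proposal is correct and essentially the same as the paper's: both apply the bound from Theorem~\ref{th_bound_generalloss} and then use the dual-norm identity $\sup_{\|u\|_\text{adv}\le \epsilon}\langle v,u\rangle=\epsilon\|v\|_\text{adv*}$ to simplify the inner supremum. The only cosmetic difference is that the paper invokes the intermediate inequality at $w^*_\epsilon$ from the \emph{proof} of Theorem~\ref{th_bound_generalloss} before taking the infimum, whereas you apply the theorem's statement directly; the two are equivalent.
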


\begin{proof}[Proof of Theorem~\ref{th_bound_generalloss}]

Recall the notation $\cZ = \cX \times \cY$, and $z = (x,y)$. Since $w^*_\epsilon$ is sub-optimal for minimizing standard risk, we have 
\begin{align*}
    \E_z[\ell((x, y),w^*_\epsilon)]\geq \E_z[\ell((x, y),w^*_0)].
\end{align*}
Hence,
\begin{align*}
    R^*_\epsilon - R^*_0
    &= \E_z\left[\sup_{d(x,x')\leq\epsilon}\ell((x', y),w^*_\epsilon)\right]
    - \E_z[\ell((x, y),w^*_0)]\\
    &\geq \E_z\left[\sup_{d(x,x')\leq\epsilon}\ell((x', y),w^*_\epsilon)\right]
    - \E_z[\ell((x, y),w^*_\epsilon)]\\    
    &= \E_z\left[\sup_{d(x,x')\leq\epsilon}\ell((x', y),w^*_\epsilon)
    - \ell((x, y),w^*_\epsilon)\right]\\
    &\geq 
    \E_z\left[\sup_{d(x,x')\leq\epsilon}
    \langle \nabla_x\ell((x,y),w^*_\epsilon), x'-x \rangle
    \right],\\
    &\geq \inf_{w \in \cW} \E_z\left[\sup_{d(x,x')\leq\epsilon}
    \langle \nabla_x\ell((x,y),w^*_\epsilon), x'-x \rangle
    \right].
\end{align*}
%where we used the convexity of the loss function with respect to $\bx$ in the last inequality. Plugging in $x' = x + \epsilon\frac{\nabla_x\ell((x,y),w^*_\epsilon)}{\lVert \nabla_x\ell((x,y),w^*_\epsilon) \rVert}$, we get
%\begin{align*}
%R^*_\epsilon - R^*_0 
%&\geq
%\epsilon\E_z\left[  \lVert \nabla_x\ell((x,y),w^*_\epsilon)\rVert \right]\\
%&\geq 
%\epsilon\inf_{w\in\cW}\E_z\left[  \lVert \nabla_x\ell((x,y),w)\rVert \right].
%\end{align*}
\end{proof}

\begin{proof}[Proof of Corollary~\ref{cor_dualnorm}]

From the proof of Theorem~\ref{th_bound_generalloss}, we have 
\begin{align*}
    R^*_\epsilon - R^*_0
&\geq \E_z\left[\sup_{d(x,x')\leq\epsilon}\langle\nabla_x\ell((x', y),w^*_\epsilon), x'-x \rangle
    \right]. 
\end{align*}
Under the condition that $d(x,x') = \|x-x'\|_\text{adv}$,
\begin{align*}
    \sup_{d(x,x')\leq\epsilon}\langle\nabla_x\ell((x', y),w^*_\epsilon), x'-x \rangle
    &=
    \sup_{\|\delta\|_\text{adv}\leq\epsilon}\langle\nabla_x\ell((x', y),w^*_\epsilon), \delta \rangle\\
    &= \epsilon \|\nabla_x\ell((x', y),w^*_\epsilon)\|_\text{adv*}.
\end{align*}
\end{proof}

Next, we prove an upper bound for the adversarial risk for a $W_1$-distribution perturbing adversary. As noted earlier, this upper bound also holds for a $W_p$-distribution perturbing adversary of the same budget, where $1\leq p\leq \infty$. We make the following assumption on the loss function:

\begin{definition}[$L_w$-Lipschitz loss function]
We say that the loss function $\ell:\cZ\times\cW\to\mathbb{R}^+$ is $L_w$-Lipschitz with respect to the input if it satisfies the following condition.
\begin{align}\label{eq_lipschitz}
    \lvert\ell((x',y),w) - \ell((x,y), w)\rvert
    &\leq
    L_w\lVert x'-x \rVert.
\end{align}
\end{definition}

\begin{theorem}\label{th_lowerbound_risk}
The adversarial risk for a $W_1$-distribution perturbing adversary with budget $\epsilon$ satisfies $\widehat R^{1,*}_\epsilon \leq R^*_0 + \epsilon L_{w_0^*}.$ %Naturally, we also have $R^*_\epsilon \leq R^*_0 +  \epsilon L_{w_0^*}.$ 
\end{theorem}

The proof of this result uses an optimal transport idea from~\cite{TovJog18}.

\begin{proof}[Proof of 
Theorem~\ref{th_lowerbound_risk}]
 Suppose that the infimum for $\widehat R^{1,*}_\epsilon$ in equation \eqref{eq: optimal distr. robust risk} is attained at $\widehat w^*_\epsilon$ and the supremum for $\widehat R^1_\epsilon(\ell, \widehat w^*_\epsilon)$ in equation \eqref{eq: distr_loss1_formal} is attained for $\gamma^* \in \Gamma^1_\epsilon$.
For $y\in \cY$, recall that $\rho^{\gamma^*}_{x'|y} \in \cP(\cX)$ denotes the distribution of the perturbed data point $x'\in \cX$. Let $\pi_y \in \Pi(\rho_{x|y}, \rho^{\gamma^*}_{x'|y})$ be such that $W_1(\rho_{x|y}, \rho^{\gamma^*}_{x'|y}) = \E_{(x,x')\sim \pi_y} d(x,x')$.
  Then
\begin{align*}
    \widehat R^{1,*}_\epsilon - R^*_0
    &= \E_{(x',y)\sim \rho_y \rho^{\gamma^*}_{x'|y}} \ell((x',y), \widehat w^*_\epsilon) - \E_{(x,y)\sim \rho_y \rho_{x|y}} \ell((x,y),w^*_0)\\
    &\stackrel{(a)}\leq \E_{(x',y)\sim \rho_y \rho^{\gamma^*}_{x'|y}} \ell((x',y), \widehat w^*_0) - \E_{(x,y)\sim \rho_y \rho_{x|y}} \ell((x,y),w^*_0)\\
    &\stackrel{(b)}= \E_y \E_{(x,x')\sim \pi_y} [\ell((x',y), \widehat w^*_0) - \ell((x,y), \widehat w^*_0)]\\
    &\stackrel{(c)}\leq \E_y \E_{(x,x')\sim \pi_y} d(x,x')\cdot L_{w_0^*}\\
    &\stackrel{(d)}\leq \epsilon L_{w_0^*}.
\end{align*}
Here, (a) follows from the definition of $\widehat w^*_\epsilon$, (b) follows from linearity of expectation since $\pi_y$ is a coupling of $(x,x')$ that preserves the marginals, (c) follows from the Lipschitz assumption and (d) follows from the fact that $\gamma^* \in \Gamma^1_\epsilon$.
\end{proof}

\subsection{Optimal adversarial classifier}\label{sec: opt classifier}

In Sections~\ref{sec: opt risk 01 loss} and \ref{sec: opt risk gen loss}, we looked at Question~\ref{q_1} and showed that the adversarial risk can be strictly lower-bounded as a function of adversarial budget $\epsilon$. In this section, we tackle Question~\ref{q_2} and analyze how $w^*_\epsilon$ or $\widehat w^*_\epsilon$ may deviate from $w^*_0$. For the case of $0$-$1$ loss, the optimal classifier can change drastically even with small change in the adversarial budget $\epsilon$. For instance, consider the setting of Theorem~\ref{thm: same variance}. When $\epsilon$ changes from being less than $\frac{\abs{\mu_0 - \mu_1}}{2}$ to greater than $\frac{\abs{\mu_0 - \mu_1}}{2}$, the optimal classifier changes from a halfspace to a constant classifier. Studying the $0$-$1$ loss is hard because closed sets
are not parametrized easily. Hence we focus on the
case of convex loss functions--where convexity is with respect to $w$---to derive bounds in this section. Deriving bounds without strong convexity assumptions appears challenging. To see this, observe that there may be multiple global optima $w^*_0$ when $\epsilon = 0$. The optimal hypothesis can jump from one global optimal to a different one---possibly far away---even without any adversary.

Since our proof technique uses the upper and lower bounds for adversarial losses obtained in Section~\ref{sec: opt risk gen loss}, the bounds for deviation of $w^*_\epsilon$ and $\widehat w^*_\epsilon$ are identical. Now, we prove a theorem on how much the optimal classifier can change in the presence of an adversary.

\begin{theorem}\label{th_opt_clsfr_deviation}
For a loss function $\ell$ that satisfies \eqref{eq_lipschitz}, and is $\lambda$-strongly convex with respect to $w$, the following result holds:
\begin{align}\label{eq_opt_clsfr_deviation}
    \lVert w^*_\epsilon - w^*_0 \rVert 
    \leq 
    \sqrt{
    \frac{2\epsilon L_{w_0^*}}
    {\lambda}
    }.
\end{align}
\end{theorem}

\begin{proof}[Proof of Theorem~\ref{th_opt_clsfr_deviation}]

We have the following series of inequalities.
\begin{align*}
    \epsilon L_{w_0^*}
    &\stackrel{(a)}\geq R^*_\epsilon - R^*_0\\
    &\stackrel{(b)}\geq R_0(\ell, w^*_\epsilon) - R_0(\ell, w^*_0)\\
    &\stackrel{(c)}\geq  \frac{\lambda}{2}(\nabla^2_w R_0(\ell, w^*_0)) \lVert w^*_\epsilon - w^*_0 \rVert^2.
\end{align*}
Here, (a) follows from Theorem~\ref{th_lowerbound_risk}, (b) follows from the fact that $w^*_\epsilon$ is sub-optimal for minimizing $R_0(\ell, w)$, and (c) follows from the $\lambda$-strong convexity of $\ell$ with respect to $w$.
\end{proof}
The above theorem shows that larger values of $\lambda$ prevent the adversary from changing the hypothesis drastically. If the loss function is merely convex but not strongly convex, adding a quadratic penalty $\frac{\lambda}{2}\|w\|^2$ to the loss function will ensure strong convexity.

\section{Experiments}\label{sec: expts}

In this section, we present lower bounds on the optimal adversarial risk for empirical distributions derived from several real world datasets. 

\begin{figure}[]
     \centering
     \begin{subfigure}[bt]{0.35\textwidth}
         \centering
         \includegraphics[width=\textwidth]{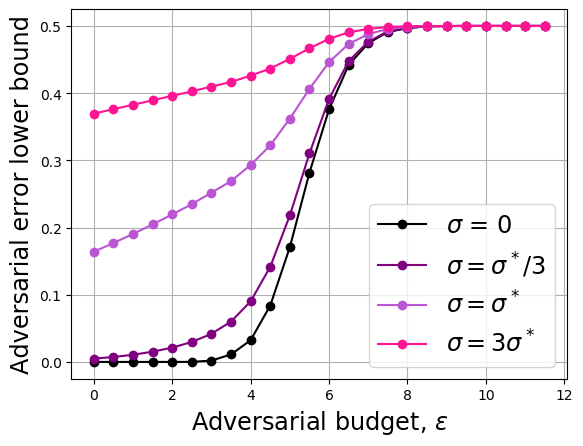}
         \caption{CIFAR10 $\ell_2$}
         \label{fig:cifar_l2}
     \end{subfigure}
     ~
     \begin{subfigure}[bt]{0.35\textwidth}
         \centering
         \includegraphics[width=\textwidth]{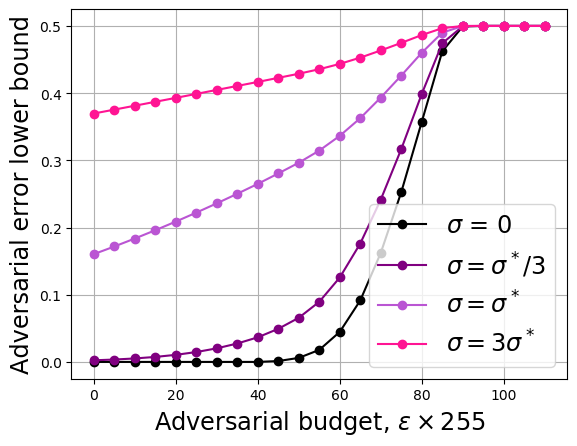}
         \caption{CIFAR10 $\ell_\infty$}
         \label{fig:cifar_linf}
     \end{subfigure}
     
     \begin{subfigure}[bt]{0.35\textwidth}
         \centering
         \includegraphics[width=\textwidth]{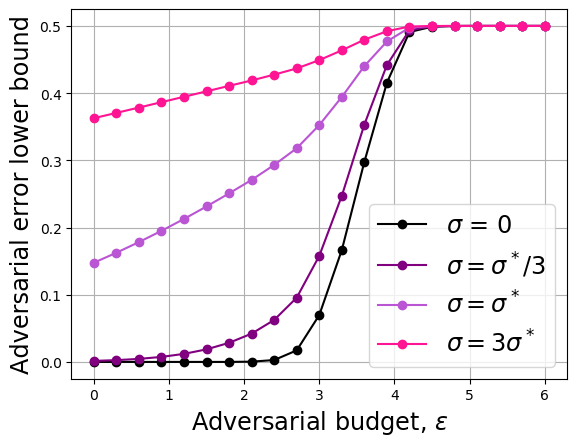}
         \caption{MNIST $\ell_2$}
         \label{fig:mnist_l2}
     \end{subfigure}
     ~
     \begin{subfigure}[bt]{0.35\textwidth}
         \centering
         \includegraphics[width=\textwidth]{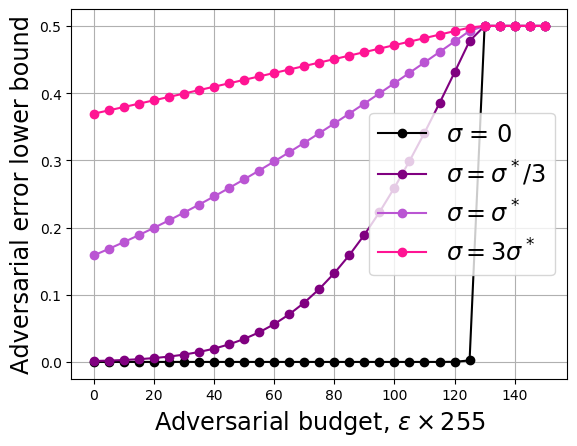}
         \caption{MNIST $\ell_\infty$}
         \label{fig:mnist_linf}
     \end{subfigure}    
     
     \begin{subfigure}[bt]{0.35\textwidth}
         \centering
         \includegraphics[width=\textwidth]{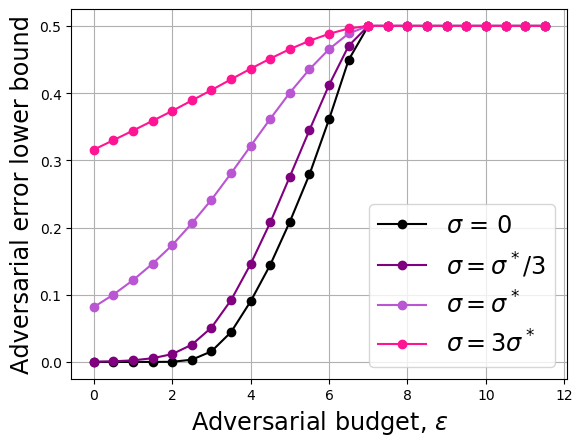}
         \caption{Fashion-MNIST $\ell_2$}
         \label{fig:fmnist_l2}
     \end{subfigure}
     ~
     \begin{subfigure}[bt]{0.35\textwidth}
         \centering
         \includegraphics[width=\textwidth]{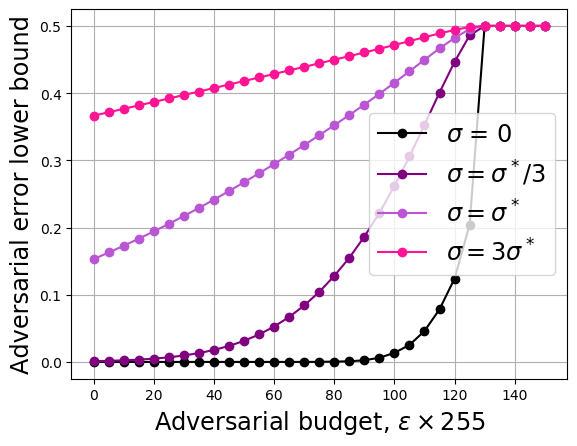}
         \caption{Fashion-MNIST $\ell_\infty$}
         \label{fig:fmnist_linf}
     \end{subfigure}      
     
     \begin{subfigure}[bt]{0.35\textwidth}
         \centering
         \includegraphics[width=\textwidth]{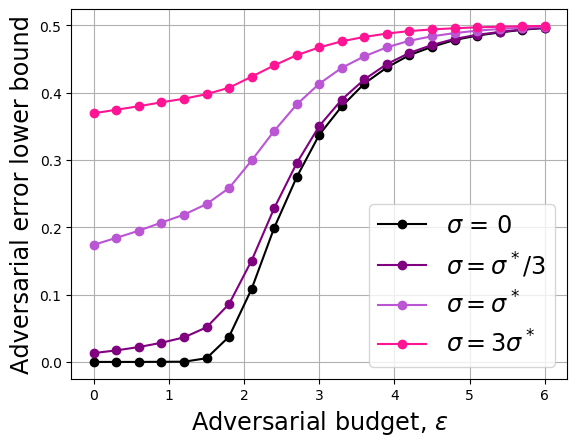}
         \caption{SVHN $\ell_2$}
         \label{fig:svhn_l2}
     \end{subfigure}
     ~
     \begin{subfigure}[bt]{0.35\textwidth}
         \centering
         \includegraphics[width=\textwidth]{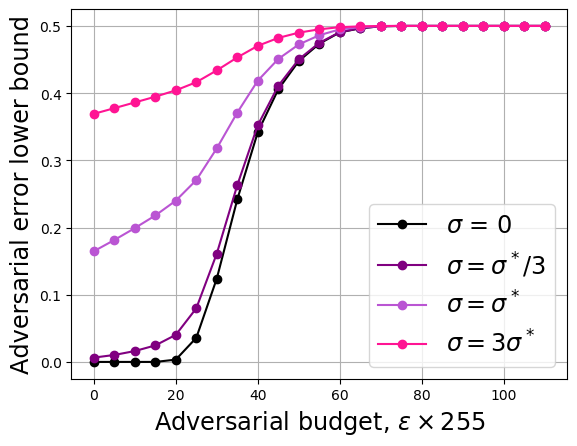}
         \caption{SVHN $\ell_\infty$}
         \label{fig:svhn_linf}
     \end{subfigure}        
     \caption{Lower bounds on adversarial risk computed using Theorem~\ref{th_01bound}. The curves with $\sigma=0$ gives the exact optimal risk for empirical distributions, while the other cuvers give lower bounds on the optimal risk for Gaussian mixtures based on the empirical distributions using the coupling in Theorem~\ref{thm: same variance d-dim}.}
     \label{fig: datasets}
\end{figure}

For the case of empirical distributions, the computation of the optimal transport cost in \eqref{eq_epsilon_Wasserstein} can be formulated as a linear program and solved efficiently. Moreover, when the number of data points in the two empirical distributions is the same, the problem of finding the optimal coupling between the two distributions is reduced to an assignment problem (see Proposition 2.11 in \cite{Pey19}), wherein the task is to optimally match each data point from the first distribution to a distinct data point from the second distribution. Using this methodology, we evaluate the optimal risk for $\ell_2$ and $\ell_\infty$ adversaries for classes $3$ and $5$ in CIFAR10, MNIST, Fashion-MNIST and SVHN datasets.  The results for other pairs of classes are very similar, and are therefore omitted for brevity.
For MNIST, Fashion-MNIST and SVHN datasets, we evaluate the optimal adversarial risk given in Theorem~\ref{th_01bound} by randomly sampling $5000$ data points from each class. The results are showing in Figure~\ref{fig: datasets} with the legend $\sigma=0$.

Since a major fraction of the data points in the empirical distributions are well-separated in $\ell_2$ and $\ell_\infty$ metrics, the optimal risk bound remains $0$ even for high $\epsilon$. For instance, for CIFAR10 dataset, the optimal risk remains $0$
for $\epsilon$ as high as $40/255$ for $\ell_\infty$. Similar results were also obtained in Bhagoji et al.~\cite{BhaEtal19}.
However, the optimal risk bounds for the true distributions may not be $0$ for high $\epsilon$, as it is unreasonable to expect a perfectly robust optimal classifier under very strong adversarial perturbations.
In addition, a common technique while training for a classifier is to augment the dataset with Gaussian perturbed samples for robustness and generalization \cite{HolKoi92, GooEtal16}. 
Motivated by this, 
we also compute optimal risk lower bounds on  Gaussian mixture distribution with the data points as the centers with scaled identity covariances. 
$\sigma = 0$ corresponds to the empirical distribution of the data points from the two classes. As $\sigma$ increases, the overlap in the probability mass between the two classes increases. This allows for the cost of optimal coupling that achieves $D_\epsilon$ to decrease, thus leading to a higher, possibly non-trivial bound for $R^*_\epsilon$.

To compute the optimal risk lower bound for Gaussian mixture, we use a coupling between the mixture distributions in two steps. In the first step, we solve for the optimal coupling that gives the exact optimal risk for the empirical distributions. This gives a pairwise matching of data points between the two empirical distributions. In the second step, we use the optimal coupling for multidimensional Gaussians from Theorem~\ref{thm: same variance d-dim} to transport the mass in the Gaussians within each pair. Overall, this transport map gives an upper bound on the $D_\epsilon$ optimal transport cost between the two mixture distributions. Using this, we obtain the lower bounds on adversarial risk shown in Figure~\ref{fig: datasets}.

Figure~\ref{fig: datasets} shows the lower bounds for various values of the variance $\sigma$ used for the Gaussian mixture, where $\sigma^*$ is half of the mean distance between data points from the two distributions. 
As explained previously, we see in Figure~\ref{fig: datasets} that the lower bound curves for higher values of $\sigma$ are above those for lower values. For instance, the optimal risk for CIFAR10 dataset under $\ell_2$ perturbation with $\epsilon=3$ is $0.25$  for $\sigma = \sigma^*$. That is, the adversarial error rate for CIFAR10 with $\epsilon = 3$ for any algorithm cannot be less than $0.25$ even when trained with Gaussian data augmentation (with $\sigma = \sigma^*$). 
In comparison, the lower bound obtained in Bhagoji et al. \citep{BhaEtal19} (which is equivalent to the case of $\sigma=0$) is $0$ for $\epsilon = 3$.
Computation of  non-trivial lower bounds for higher values of $\epsilon$ on adversarial error rate as in Figure~\ref{fig: datasets} is made possible by our analysis on the optimal coupling to achieve $D_\epsilon$ between multivariate Gaussians in section~\ref{section: same variance}.

\section{Discussion}

In this paper, we have analyzed two notions of {\em adversarial risk}: one resulting from a distribution perturbing adversary ($\widehat{R}^*_\epsilon$) and the other from a data perturbing adversary ($R^*_\epsilon$). We have introduced the $D_\epsilon$ optimal transport distance between probability distributions. Through an application of duality in the optimal transport cost formulation (via Strassen's theorem), we have shown that  $D_\epsilon$ completely characterizes the optimal adversarial risk $R^*_\epsilon$ for the case of binary classification under $0$-$1$ loss function. For general loss functions, we give lower bounds on $R^*_\epsilon$ and upper bounds on $\widehat{R}^*_\epsilon$ in terms of the Lipschitz and strong convexity parameters of the loss function.

Our analysis raises several interesting questions: How big is the gap between $\widehat{R}^*_\epsilon$ and $R^*_\epsilon$ for different kinds of loss functions? Is it possible to directly lower bound $\widehat{R}^*_\epsilon$ without appealing to its dependence on $R^*_\epsilon$? Does there exist an optimal transport distance akin to $D_\epsilon$ that characterizes $\widehat{R}^*_\epsilon$? As evidenced by experiments, our bounds for general loss functions are not particularly tight. Furthermore, we need fairly strong assumptions such as convexity and Lipschitz property for the loss function to state these bounds. It would be interesting to study if these conditions may be relaxed and if tighter bounds could be obtained.

In analysing the adversarial risk for $0$-$1$ loss functions, we give a novel coupling strategy based on monotone mappings that solves the $D_\epsilon$ optimal transport problem for symmetric unimodal distributions like Gaussian, triangular, and uniform distributions. Employing the duality in the optimal transport, we also obtain the adversarially optimal classifier under these settings. Our coupling analysis calls for an interesting open question: Is there a general coupling strategy, akin to the maximal coupling strategy to achieve the total variation transport cost, that works for a broader class of distributions? If yes, this gives us a handle on analyzing the nature of optimal decision boundaries in the adversarial setting. Optimal transport between measures with unequal mass has received attention in recent work~\cite{ChiEtal18}. We plan to investigate if the version of transport from Definition~\ref{def: unequal} is useful in other contexts, and whether computational methods as in~\cite{Pey19} may be used to compute it in practice.

Our analysis for $0$-$1$ loss reveals how the optimal risk smoothly changes from Bayes risk as the data perturbing budget $\epsilon$ is increased. Somewhat more surprisingly, our analysis shows that in some cases, the optimal classifier can change abruptly in the presence of an adversary even for small changes in $\epsilon$.
It remains to be seen if these observations on optimal risk and optimal classifier also hold for the distribution perturbing adversary.

Using our characterization of $R^*_\epsilon$ in terms of $D_\epsilon$, we obtain the optimal risk attainable for classification of real-world datasets like CIFAR10, MNIST, Fashion-MNIST and SVHN. Moreover, levaraging our optimal coupling strategy for Gaussian distributions, we also obtain lower bounds on optimal risk for Gaussian mixtures based on these datasets. 
These lower bounds have implications for the limits of data augmentation strategies using Gaussian perturbations. 
Our bounds on adversarial risk are classifier agnostic, and only depend on the data disributions. 
In addition, our bounds are efficiently computable for empirical/mixture distributions via reformulation as a linear program.
However,  our characterization of $R^*_\epsilon$ in terms of $D_\epsilon$ is limited to the binary classification setting. It is not clear if a similar characterization is possible for multi-class classification, perhaps using multi-marginal optimal transport theory \citep{Pas15}. 

Finally, we remark that analzing the $D_\epsilon$ optimal transport cost may be interesting in itself. The optimal transport cost $c_\epsilon(x, x') = \1\{d(x, x')>2\epsilon\}$ is discontinuous and does not satisfy triangle inequality. This makes it hard to analyse $D_\epsilon$ using standard techniques in optimal transport literature. For instance, it would be interesting tighten the bounds from~\citep{Jog20} concerning rates of converges of $D_\epsilon$ between empirical distributions converges to $D_\epsilon$ between the true data-generating distributions.

\section*{Acknowledgements}

We thank the AE and anonymous reviewers for their critical feedback that has led to a much improved manuscript. VJ acknowledges support from NSF grants CCF-1841190, CCF-1907786, and CCF-1942134, and the Nvidia GPU grant program.
 
\bibliographystyle{plain}
\bibliography{ref}

\appendix

\section{Proofs for Section~\ref{sec: opt risk 01 loss}}

\subsection{Strassen's theorem}\label{app: strassen}
Strassen's theorem is a special case for the Kantorovich duality in the case of a $0$-$1$ loss. The statement provided below is as in Villani \citep[Corollary 1.28]{Vil03}:

\begin{lemma}\label{lem_villani}
Let the input $X$ be drawn from a Polish space $\mathcal{X}$. Let $\Pi(p_0, p_1)$ be the set of all probability measures on $\mathcal{X}\times \mathcal{X}$ with marginals $p_0$ and $p_1$. Then for $\epsilon\geq 0$ and $A\subseteq\cX$,
\begin{align*}
\inf_{\pi\in\Pi(p_0, p_1)} \pi[d(x, x' )>\epsilon]
= \sup_{A\ \ closed} \left\{ p_0(A) - p_1(A^{\oplus \epsilon}) \right\}.
\end{align*}
\end{lemma}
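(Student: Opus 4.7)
The plan is to prove this as a consequence of Kantorovich duality applied to the lower semi-continuous cost $c_\epsilon(x,x') := \mathbf{1}\{d(x,x') > \epsilon\}$. Since $d$ is continuous on the Polish space $\cX$, the set $\{d > \epsilon\}$ is open, so $c_\epsilon$ is lower semi-continuous, and the Kantorovich duality theorem gives
\begin{align*}
\inf_{\pi \in \Pi(p_0, p_1)} \int c_\epsilon \, d\pi
= \sup\left\{\int \varphi \, dp_0 + \int \psi \, dp_1 \; : \; \varphi(x) + \psi(x') \leq c_\epsilon(x,x') \text{ for all } x,x' \right\},
\end{align*}
where the supremum is taken over bounded measurable (or equivalently, upper semi-continuous) functions $\varphi, \psi$.

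For the easy direction ($\geq$), I would work directly with the primal. Fix any closed $A \subseteq \cX$ and any coupling $\pi \in \Pi(p_0, p_1)$. By definition of $A^\epsilon$, if $x \in A$ and $x' \notin A^\epsilon$ then $d(x,x') > \epsilon$, so $\{x \in A,\, x' \notin A^\epsilon\} \subseteq \{d(x,x') > \epsilon\}$. Hence
\begin{align*}
\pi[d(x,x') > \epsilon] \;\geq\; \pi(A \times (A^\epsilon)^c) \;=\; p_0(A) - \pi(A \times A^\epsilon) \;\geq\; p_0(A) - p_1(A^\epsilon).
\end{align*}
Taking the infimum on the left and the supremum over closed $A$ on the right yields one inequality.

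For the reverse direction ($\leq$), I would plug a well-chosen class of indicator-type potentials into the dual. Namely, for a closed set $A$, set $\varphi = \mathbf{1}_A$ and $\psi = -\mathbf{1}_{A^\epsilon}$. Feasibility is a short case check: if $x \in A$ and $x' \notin A^\epsilon$, then $c_\epsilon(x,x') = 1 = \varphi(x)+\psi(x')$; in every other case $\varphi(x)+\psi(x') \leq 0 \leq c_\epsilon(x,x')$. The dual value for this pair is exactly $p_0(A) - p_1(A^\epsilon)$, so the supremum of the dual is at least $\sup_{A \text{ closed}}\{p_0(A) - p_1(A^\epsilon)\}$.

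The remaining step, and the main obstacle, is to argue that the full dual supremum is in fact attained within this restricted class of indicator potentials. My approach would be a layer-cake / $c$-transform reduction: for any feasible $(\varphi, \psi)$ one can assume $\psi$ is the $c_\epsilon$-transform of $\varphi$, which for the $0/1$ cost forces $\psi$ to take only two values; writing $\varphi = \int_0^\infty \mathbf{1}_{\{\varphi > t\}}\, dt$ and integrating shows that the objective is a mixture over level sets $A_t = \{\varphi > t\}$ of expressions of the form $p_0(A_t) - p_1(A_t^\epsilon)$, so the supremum over $(\varphi, \psi)$ equals the supremum over measurable sets $A$ of $p_0(A) - p_1(A^\epsilon)$. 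Finally, since $p_0$ is inner regular on the Polish space $\cX$ (Ulam's theorem) and $p_1(A^\epsilon)$ is upper semi-continuous in $A$ under inner approximation by closed sets, one can restrict from measurable to closed sets $A$ without loss. Combining these steps with the easy direction establishes the claimed equality.
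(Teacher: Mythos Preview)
The paper does not supply its own proof of this lemma; it is quoted in the appendix as Corollary~1.28 from Villani, accompanied only by the remark that it is ``a special case of the Kantorovich duality in the case of a $0$--$1$ loss.'' Your route---invoke Kantorovich duality for the lower semicontinuous cost $c_\epsilon$ and then reduce the dual to indicator potentials---is exactly the strategy that remark points to. The easy direction, the feasibility check for the pair $(\mathbf 1_A,-\mathbf 1_{A^\epsilon})$, and the inner-regularity argument for passing from measurable to closed $A$ are all correct.

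The genuine gap is in the step you yourself flag as the ``main obstacle.'' The claim that taking the $c_\epsilon$-transform of $\varphi$ ``forces $\psi$ to take only two values'' is false. For example, on $(\real,|\cdot|)$ with $\epsilon=1$ and any continuous nonconstant $\varphi$, the transform satisfies $\varphi^{c}(x')\le -\sup_{|x-x'|\le 1}\varphi(x)$, and this upper envelope (which is often the value of $\varphi^c$) varies continuously in $x'$; it is not two-valued. So the layer-cake decomposition does not follow from the sentence you wrote. What does hold is the set inclusion $\{\varphi>t\}^\epsilon\subseteq\{\varphi^c<-t\}$ for every $t$, and combining this with the layer-cake identities for $\varphi$ and $-\psi$ gives
\[
\int\varphi\,dp_0+\int\psi\,dp_1\;\le\;\int_0^{\infty}\Bigl[p_0(\{\varphi>t\})-p_1\bigl(\{\varphi>t\}^\epsilon\bigr)\Bigr]\,dt,
\]
\emph{provided} one first arranges $\varphi\ge 0$ and $\psi\le 0$. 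Even then, bounding the right-hand side by $\sup_A[p_0(A)-p_1(A^\epsilon)]$ requires $\sup\varphi\le 1$, and you have not shown that a near-optimal dual pair can be truncated to $\varphi\in[0,1]$, $\psi\in[-1,0]$ without decreasing the objective; $c$-convexification and a shift alone do not deliver this. That normalization is the missing ingredient, and it is where the actual work in Villani's argument (or in the alternative Hahn--Banach/marriage-lemma proofs of Strassen's theorem) lies.
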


\subsection{Proof of Lemma~\ref{lemma: oplus-ominus}}\label{app: lemma: oplus-ominus}

Let $A$ be a closed set and let $B$ be the closed ball of radius $\epsilon$. Fix $\delta > 0$. Let $\{z_i\}_{i \geq 1}$ be a sequence of points in $A^{\oplus \epsilon}$ converging to a limit $z$. Assume without loss of generality that $d(z_i, z) < \delta/2$. We shall show that $z \in A^{\oplus \epsilon}$ as well. Note that every $z_i$ admits an expression $z_i = a_i + b_i$, where $a_i \in A$ and $b_i \in B$. Since $B$ is a compact set, there exists a subsequence among the $\{b_i\}$ sequence that converges to $b^* \in B$. Fix a $\delta >0$ and pick a subsequence $\{\tilde b_i\}_{i \geq 1}$ such that $\tilde b_i \to b^*$ and $|\tilde b_i - b^*| < \delta/2$ for all $i > 0$. Denote the corresponding subsequence of $\{a_i\}$ by $\{\tilde a_i\}$ and $\{z_i\}$ by $\{\tilde z_i\}$. Observe that 
\begin{align*}
z-\tilde a_i = (z-\tilde z_i) + (b^*-\tilde b_i) - b^*,
\end{align*}
and so by the triangle inequality
\begin{align*}
d(z, \tilde a_i) < \delta/2 + \delta/2 + \epsilon = \epsilon+\delta.
\end{align*}
Thus $\tilde a_i \in B(z, \epsilon+\delta) \cap A$, which is a compact set, giving a convergent subsequence within the $\{\tilde a_i\}$ sequence. Let that subsequence converge to $a^*$. We must have $a^*\in A$ and $b^* \in B$ since $A$ and $B$ are closed. This means $z = a^* + b^*$ must lies in $A^{\oplus \epsilon}$, which shows that $A^{\oplus \epsilon}$ is closed.

Recall that $A^{\ominus \epsilon} = ((A^c)^{\oplus \epsilon})^c$. Since $A^c$ is an open set, it is enough to show that $C^{\oplus \epsilon}$ is open if $C$ is open. Let $z \in C^{\oplus \epsilon}$, which means $z = c + b$ for some $c \in C$ and $b \in B$. Consider a small open ball of radius $\delta$ around $c$, called $N_\delta(c)$ that lies entirely in $C$. This is possible since $C$ is assumed to be open. Now observe that $N_\delta(z) \subseteq C^{\oplus \epsilon}$, since $N_\delta(z) = N_\delta(c) + b$. This shows that every point $z \in C^{\oplus \epsilon}$ admits a small ball around it that is contained in $C^{\oplus \epsilon}$, or equivalently, $C^{\oplus \epsilon}$ is open. This completes the proof.

\subsection{Proof of Lemma~\ref{lemma: equivalence}}\label{app: lemma: equivalence}

Let $x \in A^{\oplus \epsilon}$. Then there exists an $a \in A$ such that $d(x, a) \leq \epsilon$, which means $d(x, A) \leq \epsilon$, and so $x \in A^{\epsilon}$. This shows that $A^{\oplus \epsilon} \subseteq A^\epsilon$. 

To prove the reverse direction, suppose $x \in A^\epsilon$. This means we can a sequence of points $\{a_i\}$ such that $a_i \in A$ and $\liminf_i d(x, a_i) \leq \epsilon$. Fix a $\delta >0$ and assume without loss of generality that $d(x, a_i) \leq \epsilon+\delta$ for all $i > 0$. Then  $a_i \in B(x, \epsilon+\delta) \cap A$ for all $i > 0$. As $A$ is closed, the set $B(x_i, \epsilon+\delta) \cap A$ is compact, and there exists a subsequence $\{\tilde a_i\}$ that converges to $a^* \in A$. By the triangle inequality, $d(x, a^*) \leq d(x, \tilde a_i) + d(\tilde a_i, a^*)$. Taking $\liminf_i$ on both sides yields 
\begin{align*}
d(x, a^*) \leq \liminf_i d(x, \tilde a_i) \leq \epsilon.
\end{align*}
This implies $x \in A^{\oplus \epsilon}$, and we conclude $A^\epsilon \subseteq A^{\oplus \epsilon}$.

\subsection{Proof of Lemma~\ref{lemma: thick and thin fixed}}\label{app: lemma: thick and thin}
We claim that a point $x \in A^{ \ominus \epsilon}$ if and only if $B(x, \epsilon)$ lies entirely in $A$. If this were not the case, then we could find a $y \in A^c$ such that $d(x,y) \leq \epsilon$, and so $x \in (A^c)^{\oplus \epsilon}$, which implies $x \not\in ((A^c)^{\oplus \epsilon})^c = A^{ \ominus \epsilon}$. Conversely, if $B(x, \epsilon) \in A$ then $d(x, y) > \epsilon$ for all $y \in A^c$, and so $x \notin (A^c)^{\oplus \epsilon}$, which means $x \in ((A^c)^{\oplus \epsilon})^c = A^{\ominus \epsilon}$. This observation implies that $(A^{\ominus \epsilon})^{\oplus \epsilon} \subseteq A$.

Using the above logic for $A^{\oplus \epsilon}$, we see that a point $x \in (A^{\oplus \epsilon})^{\ominus \epsilon}$ if and only if $B(x, \epsilon) \subseteq A^{\oplus \epsilon}$. By definition of $A^{\oplus \epsilon}$, every point $x \in A$ satisfies $B(x, \epsilon) \subseteq A^{\oplus \epsilon}$. Thus, if $x \in A$ then $x \in (A^{\oplus \epsilon})^{\ominus \epsilon}$. Equivalently, $A \subseteq (A^{\oplus \epsilon})^{\ominus \epsilon}$.

\section{Proofs for Section~\ref{sec: couplings}}

\subsection{Proof of Theorem~\ref{thm: uniform coupling}}\label{app: uniform coupling proof}

Like in the proof for Theorem~\ref{thm: same mean}, we prove Theorem~\ref{thm: uniform coupling} by partitioning the real line into several regions for $\mu$ and $\nu$, and transporting mass between these regions. Figure~\ref{fig: coupling_uniform} shows the optimal coupling for the case when $I^{2\epsilon}\subseteq J$.

\begin{figure}[t]
\begin{center}
\includegraphics[scale = 0.5]{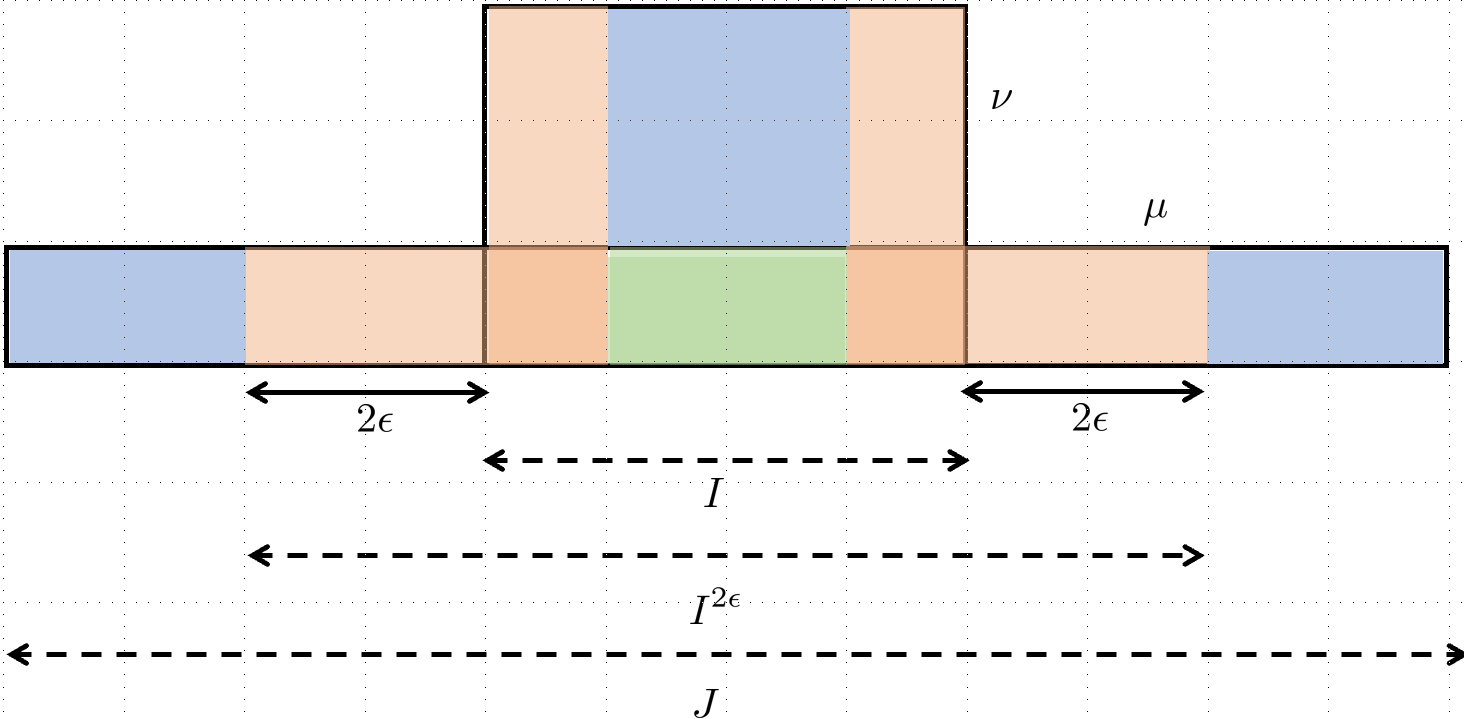}
\end{center}
\caption{Optimal coupling for two uniform distributions. The region shaded in green is kept in place (at no cost).  The two regions shaded in orange are transported monotonically from either side at a cost not exceeding $2\epsilon$ per unit mass. The remaining region in blue is moved at the cost of 1 per unit mass.  }
\label{fig: coupling_uniform}
\end{figure}

We first prove a lower bound. Choose the set $A = I$, we have that
\begin{align}\label{eq: uniform dist LB}
D_\epsilon(\mu, \nu) \geq \mu(A) - \nu(A^{2\epsilon}) = 1 - \nu(I^{2\epsilon}).
\end{align}
To establish the upper bound, we need to find a coupling that transports $\mu$ to $\nu$ such that the cost of transportation is bounded above by $1-\nu(I^{2\epsilon})$.  Without loss of generality, let $I = [-w_1,w_1]$ and $J = [c-w_2, c+w_2]$ for some $c>0$ and $0< w_1\leq w_2$.

\paragraph{Case 1: $2\epsilon< w_2 - w_1$.} We split the analysis into the following five sub-cases.

\paragraph{Case 1(a): $c\in [w_1+w_2+2\epsilon, \infty)$.} In this case, the intervals $I$ and $J$ are separated by at least $2\epsilon$. 
Hence, $\nu(I^{2\epsilon}) = 0$, and therefore, $D_\epsilon(\mu, \nu) \geq 1 - \nu(I^{2\epsilon}) = 1$. Combining this with the fact that  $D_\epsilon(\mu, \nu) \leq 1$, we get that $D_\epsilon(\mu, \nu) = 1 = 1 - \nu(I^{2\epsilon})$.

\paragraph{Case 1(b): $c\in [-w_1+w_2-2\epsilon,  w_1+w_2+2\epsilon)$.} In this case, $\nu(I^{2\epsilon}) = \nu([c-w_2, w_1+2\epsilon]) = (w_1+2\epsilon-c+w_2)/(2w_2) \leq 1$.
Since $\mu([-w_1, w_1]) = 1 \geq  \nu([c-w_2, w_1+2\epsilon])$, there must exist a $u\in [-w_1, w_1]$ such that $\mu([u, w_1]) = \nu([c-w_2, w_1+2\epsilon])$.
Solving for $u$, we get the following.
\begin{align*}
&\frac{w_1-u}{2w_1} = \mu([u, w_1]) = \nu([c-w_2, w_1+2\epsilon]) = \frac{w_1+2\epsilon-c+w_2}{2w_2}\\
&\implies u = w_1 - \frac{w_1}{w_2}(w_1+2\epsilon-c+w_2).
\end{align*}
Since $w_1/w_2<1$, the above equation for $u$ shows that $u> w_1 - (w_1+2\epsilon-c+w_2) = c-w_2-2\epsilon$. Hence, $(c-w_2)-u<2\epsilon$.

Let $\mu_0$ be the restriction of $\mu$ to $[u, w_1]$ and $\nu_0$ be the restriction of $\nu$ to $[c-w_2, w_1+2\epsilon]$. Then, by construction, $\mu_0(\real) = \nu_0(\real)$. By Lemma~\ref{lem: push_forward}, we have a monotone transport map $T: [u, w_1]\to [c-w_2, w_1+2\epsilon]$ that transports $\mu_0$ to $\nu_0$ given by $T(x) = \frac{w_1+2\epsilon-c+w_2}{w_1-u}(x-u)+(c-w_2)$. Note that $T$ transports $u$ to $c-w_2$ and $w_1$ to $w_1+2\epsilon$. Also, $T(x)>x$. Since $T$ has a slope greater than $1$, $T(x)-x$ is an increasing function. Moreover, $T(w_1)-w_1 = 2\epsilon$ and $T(u)-u = (c-w_2)-u <2\epsilon$. Hence, $|T(x)-x|\leq 2\epsilon$ for all $x\in [u, w_1]$. Hence, $D_\epsilon(\mu_0, \nu_0)=0$. Therefore, $D_\epsilon(\mu, \nu) \leq 1- \min(\mu_0, \nu_0) = 1- \nu([c, w_1+2\epsilon]) = 1 - \nu(I^{2\epsilon})$. Combining with the lower bound in \eqref{eq: uniform dist LB}, we conclude that $D_\epsilon(\mu, \nu) = 1 - \nu(I^{2\epsilon})$.

\paragraph{Case 1(c): $c\in (-w_2+w_1+2\epsilon,  -w_1+w_2-2\epsilon)$.} In this case, $\nu(I^{2\epsilon}) = \nu([-w_1-2\epsilon, w_1+2\epsilon]) = (2w_1+4\epsilon)/(2w_2) \leq 1$.
Since $\mu([0,w_1]) = 1/2 > \nu(0, w_1+2\epsilon)$, there must exists a $v\in [0,w_1]$ such that $\mu([v,w_1]) =  \nu([0, w_1+2\epsilon])$.
Let $\mu_+$ be the restriction of $\mu$ to $[u, w_1]$ and $\nu_+$ be the restriction of $\nu$ to $[0, w_1+2\epsilon]$.
Then, by construction, $\mu_+(\real) = \nu_+(\real)$.
Similar to the map $T$ in case 1b, there exists a monotone transport map $T_+: [u, w_1]\to [0, w_1+2\epsilon]$ such that $|T_+(x)-x|\leq 2\epsilon$. Hence, $D_\epsilon(\mu_+, \nu_+)=0$. Similarly, let $\mu_-$ be the restriction of $\mu$ to $[-w_1, -u]$ and $\nu_+$ be the restriction of $\nu$ to $[-w_1-2\epsilon, 0]$. Then by symmetry, there also exists a monotone transport map $T_-: [-w_1, -u]\to [-w_1-2\epsilon, 0]$ such that $|T_-(x)-x|\leq 2\epsilon$. Hence,  $D_\epsilon(\mu_-, \nu_-)=0$. Therefore, 
\begin{align*}
D_\epsilon(\mu, \nu) &\leq 1- [\min(\mu_+, \nu_+)+ \min(\mu_-, \nu_-)]\\
&= 1 - [\nu([0, w_1+2\epsilon]) + \nu([-w_1-2\epsilon, 0])]\\
&= 1 - \nu([-w_1-2\epsilon, w_1+2\epsilon])\\
&= 1 - \nu(I^{2\epsilon}).
\end{align*}
Combining with the lower bound in \eqref{eq: uniform dist LB}, we conclude that $D_\epsilon(\mu, \nu) = 1 - \nu(I^{2\epsilon})$.

\paragraph{Case 1(d): $c\in (-w_1-w_2-2\epsilon, w_1-w_2+2\epsilon]$.} The geometry of this case is a mirror image of that in case 1b. Hence, just as in case 2, we have $D_\epsilon(\mu, \nu) = 1 - \nu(I^{2\epsilon})$.

\paragraph{Case 1(e): $c\in (-\infty, -w_1-w_2-2\epsilon]$.} Like in case 1, the intervals $I$ and $J$ are separated by at least $2\epsilon$. Hence, similar to case 1, we get that $D_\epsilon(\mu, \nu) = 1 = 1 - \nu(I^{2\epsilon})$.

\paragraph{Case 2: $2\epsilon\geq  w_2 - w_1$.} In this case, we have the following sub-cases.

\paragraph{Case 2(a): $c\in [w_1+w_2+2\epsilon, \infty)$.} Like in case 1a, the intervals $I$ and $J$ are separated by at least $2\epsilon$. 
Hence, $D_\epsilon(\mu, \nu) = 1 = 1 - \nu(I^{2\epsilon})$.

\paragraph{Case 2(b): $c\in [w_1-w_2+2\epsilon,  w_1+w_2+2\epsilon)$.} Since $[w_1-w_2+2\epsilon,  w_1+w_2+2\epsilon) \subseteq [-w_1+w_2-2\epsilon,  w_1+w_2+2\epsilon)$, the coupling obtained in Case 1b can be directly applied in this case. Hence, we again have 
$D_\epsilon(\mu, \nu) = 1 = 1 - \nu(I^{2\epsilon})$.

\paragraph{Case 2(c): $c\in (-w_1+w_2-2\epsilon, w_1-w_2+2\epsilon)$.} In this case, the supports of $\mu$ and $\nu$ are within $2\epsilon$ of each other. More specifically, $J\subseteq I^{2\epsilon}$. Hence, $\nu(I^{2\epsilon})=1$.
Let $T$ denote the monotone transport map from  $\mu$ and $\nu$ as defined in Lemma~\ref{lem: push_forward}. Then, $T(x) = \frac{w_2}{w_1}(x-w_1) + (c+w_2)$. Note that $T$ maps $[-w_1, w_1]$ to $[c-w_2, c+w_2]$ monotonically. Since the supports of $\mu$ and $\nu$ are within $2\epsilon$ of each other, we have $|T(x)-x|\leq 2\epsilon$. Hence, $D_\epsilon(\mu, \nu) = 0 = 1 - \nu(I^{2\epsilon})$.

\paragraph{Case 2(d): $c\in ( -w_1-w_2-2\epsilon, -w_1+w_2-2\epsilon]$.} This case is a mirror image of case 2b and hence the result 
$D_\epsilon(\mu, \nu) =1 - \nu(I^{2\epsilon})$ remains the same.

\paragraph{Case 2(e): $c\in (-\infty, -w_1-w_2-2\epsilon]$.} Like in case 1a, the intervals $I$ and $J$ are separated by at least $2\epsilon$. 
Hence, $D_\epsilon(\mu, \nu) = 1 = 1 - \nu(I^{2\epsilon})$.

It is easily checked that the error attained by the proposed classifier also matches the bound, which completes the proof.

\subsection{Proof of Theorem~\ref{thm: triangle coupling}}\label{app: triangle coupling proof}

We have the following cases:

	\paragraph{ Case 1: $m_2-m_1 >  \delta_1+\delta_2+2\epsilon$.}
	
In this case $\mu$ and $\nu$ have disjoint supports separated by at least $2\epsilon$. Moreover, $\mu(A^{\ominus \epsilon})=1$ and $\nu(A^{\oplus \epsilon}) = 0$. Then,
\begin{align*}
	D_\epsilon(\mu, \nu) = \sup_{A\ closed} \mu(A^{\ominus \epsilon}) - \nu(A^{\oplus \epsilon}) \geq \mu(A^{\ominus \epsilon}) - \nu(A^{\oplus \epsilon}) = 1.
\end{align*}
Combining the above inequality with the fact that $D_\epsilon(\mu, \nu)\leq 1$, we get $D_\epsilon(\mu, \nu) = 1$.

	\paragraph{ Case 2: $m_2-m_1 <  \delta_2-\delta_1-2\epsilon$.}
	
	In this case,
	\begin{align*}
	|(m_2+\epsilon) - (m_1-\epsilon)| = |(m_2-m_1) + 2\epsilon| = (m_2-m_1) + 2\epsilon < \delta_2-\delta_1,\\
	|(m_2-\epsilon) - (m_1+\epsilon)| = |(m_2-m_1) - 2\epsilon| \leq |m_2-m_1| + 2\epsilon < \delta_2-\delta_1.
	\end{align*}
	Hence, by Lemma~\ref{lemma: triangular intersection}, the equations $f(x+\epsilon) = g(x-\epsilon)$ and $f(x-\epsilon) = g(x+\epsilon)$ have exactly two solutions each, on the supports of $\Delta(m_1- \epsilon, \delta_1)$ and $\Delta(m_1+ \epsilon, \delta_1)$ respectively. Hence, $l$ must be the minimum of the two solutions to $f(x+\epsilon) = g(x-\epsilon)$ and $r$ must be the maximum of the two solutions to $f(x-\epsilon) = g(x+\epsilon)$. As in the proof of Theorem~\ref{thm: diff mean}, we divide the real line into five regions as shown in Table~\ref{table: triangular_partition_five}, where $l'$ is the leftmost point such that $\mu([l+\epsilon, l']) = \nu([l-\epsilon, l'])$ and $r'$ is the rightmost point such that $\mu([r', r-\epsilon]) = \nu([r', r+\epsilon])$. 
	Observe that by construction, $f(x)\leq g(x+2\epsilon)$ for $x\in [r-\epsilon, m_1+\delta_1]$. Hence by Lemma~\ref{lemma: shift}, $D_\epsilon(\mu_{++}, \nu_{++})=0$. Similarly, we also get $D_\epsilon(\mu_{--}, \nu_{--})=0$.
	
	We will now use Lemma~\ref{lemma: scrunch} to show that $D_\epsilon(\mu_{-}, \nu_{-})=0$.
	Let $a = l-\epsilon$, $a' = l+\epsilon$, $b = l'$ and $\tilde{t} = l'-2\epsilon$. Let $t$ be the first coordinate of the intersection point of two line segments, one joining $(a, g(a))$ and $(b, g(b))$, and the other joining $(a', f(a'))$ and $(b, f(b))$.
	 The following three conditions are satisfied by $\mu_{-}$ and $\nu_{-}$. (1) The support of $\nu_{-}$ is $[a,b]$ and the support of $\mu_{-}$ is $[a',b] = [a+2\epsilon, b]$. (2) $g(x)\geq f(x)$ for $x\in [a, t)$ and $f(x)\geq g(x)$ for $x\in (t, b]$. (3) $g(x)\leq f(x+2\epsilon)$ for $x\in [a, \tilde{t})$ and the the interval $(\tilde{t}, b-2\epsilon]$ is empty because $\tilde{t}= b-2\epsilon$.
	 Hence, $D_\epsilon(\mu_{-}, \nu_{-})=0$. Similarly, $D_\epsilon(\mu_{+}, \nu_{+})=0$.
	 
	 Finally, $D_\epsilon(\mu_0, \nu_0)=0$. This is because $f(x)\geq g(x)$ for $x\in [l', r']$ where $[l', r']$ is the support of both $\mu_0$ and $\nu_0$ and so an identity map $T(x)=x$ may be used to transport all the mass from $\nu_0$ to $\mu_0$ at zero cost.
	 
	 Like in the proof of Theorem~\ref{thm: same variance}, we can upper bound $D_\epsilon(\mu, \nu)$ as follows.
	 \begin{align*}
	 D_\epsilon(\mu, \nu) &\leq 1 - (\nu([l-\epsilon, r+\epsilon]) + \mu([m_1-\delta_1, l+\epsilon]) + \mu([r-\epsilon, m_1+\delta_1])\\
	 &= \mu(A^{\ominus \epsilon}) -\nu(A^{\oplus \epsilon}).
	 \end{align*}
	 Since $D_\epsilon(\mu, \nu) = \sup_{B\ closed} \mu(B^{\ominus \epsilon}) - \nu(B^{\oplus \epsilon})$, the above inequality turns to an equality.
	
	\paragraph{ Case 3: $\delta_2-\delta_1-2\epsilon<m_2-m_1 <  \delta_2+\delta_1+2\epsilon$. }
	
	In this case,
	\begin{align*}
	(m_2-\epsilon) - (m_1+\epsilon) = (m_2-m_1) - 2\epsilon < \delta_2+\delta_1,\\
	(m_1+\epsilon) - (m_2-\epsilon) = 2\epsilon - (m_2-m_1) < \delta_2+\delta_1.
	\end{align*}	
	Hence, $|(m_2-\epsilon) - (m_1+\epsilon)|< \delta_2+\delta_1$. By Lemma~\ref{lemma: triangular intersection}, the equation $f(x-\epsilon) = g(x+\epsilon)$ has either one or two solutions. Therefore, $r$ must be the rightmost solution to $f(x-\epsilon) = g(x+\epsilon)$.
	
	We will split the analysis into three sub-cases.

		\paragraph{ Case 3(a): $m_2-m_1 > \delta_2 - \delta_1+2\epsilon$.}

		We will decompose $\mu$ and $\nu$ into two mutually singular positive measures each.
		Let $\mu_-$ and $\mu_+$ be the restriction of $\mu$ to the intervals $[m_1-\delta_1, r-\epsilon]$ and $[r-\epsilon, m_1+\delta_1]$ respectively. Let $\nu_-$ and $\nu_+$ be the restriction of $\nu$ to the intervals $[m_2-\delta_2, r+\epsilon]$ and $[r+\epsilon, m_2+\delta_2]$ respectively. 
		The following inequality shows that the support of $\nu_-$ is of a lesser length than that of $\mu_-$.
		\begin{align*}
			[(r+\epsilon)-(m_2-\delta_2)] - [(r-\epsilon)-(m_1-\delta_1)] = \delta_2 - \delta_1 + 2\epsilon - (m_2-m_1)<0.
		\end{align*}		
		It follows that the support of $\nu_+$ is of a greater length than that of 	$\mu_+$.
		By construction, $g(x-2\epsilon)\leq f(x)$ for $x\in [m_2-\delta_2, r+\epsilon]$. Hence, by Lemma~\ref{lemma: shift}, $D_\epsilon(\mu_-, \nu_-) = 0$. A similar analysis shows that $D_\epsilon(\mu_+, \nu_+) = 0$. Hence,
		\begin{align*}
		D_\epsilon(\mu, \nu) &\leq 1 - \min(\mu_-(\real), \nu_-(\real)) - \min(\mu_+(\real), \nu_+(\real))\\
		&= 1 - \mu([r-\epsilon, \infty)) - \nu([r+\epsilon, \infty))\\
		&= \mu(A^{\ominus \epsilon}) -\nu(A^{\oplus \epsilon}).
		\end{align*}
		Since $D_\epsilon(\mu, \nu) = \sup_{B\ closed} \mu(B^{-\epsilon}) - \nu(B^\epsilon)$, the above inequality turns to an equality.

		\paragraph{ Case 3(b):  $ \delta_2 - \delta_1 <  m_2-m_1 \leq \delta_2 - \delta_1 +2\epsilon$.}

		Let $\mu_-, \mu_+, \nu_-$ and $\nu_+$ be as defined in case 3(a). 
		The following inequality shows that the support of $\mu_+$ is smaller than that of $\nu_+$.
		\begin{align*}
			[(m_2+\delta_2) - (r+\epsilon)] - [(m_1+\delta_1) - (r-\epsilon)]
			= (m_2-m_1) + \delta_2-\delta_1 -2\epsilon >0.
		\end{align*}			
		Moreover, $f(x)\leq g(x+2\epsilon)$ for $x\in [r-\epsilon, m_1+\delta_1]$.
		Hence by Lemma~\ref{lemma: shift}, $D_\epsilon(\mu_+, \nu_+) = 0$.
		
		We will now show that $D_\epsilon(\mu_{-}, \nu_{-})=0$ by verifying the conditions of Lemma~\ref{lemma: scrunch}.
		Since $2\epsilon < 2\delta_1$, we have the following.
		\begin{align*}
			\delta_2 - \delta_1 <  m_2-m_1 \leq \delta_2 - \delta_1 +2\epsilon < \delta_2 - \delta_1 + 2\delta_1 = \delta_2 + \delta_1.
		\end{align*}				
		Hence, by Lemma~\ref{lemma: triangular intersection}, there is exactly one point of intersection of $f(x)$ and $g(x)$ on the support of $\mu$. Let $t$ be the first coordinate of that point. Let $a = m_2-\delta_2-2\epsilon$, $a' = a+2\epsilon$ and $b= r+\epsilon$. Then, (1) the support of $\mu_-$ is $[m_1-\delta_1, r-\epsilon]$ which is a subset of $[a, b]$, and the support of $\nu_-$ is $[a', b]$. (2) $f(x)\geq g(x)$ for $x\in (a, t]$ and $f(x)\leq g(x)$ for $x\in (t, b]$. Hence, the first two conditions of Lemma~\ref{lemma: scrunch} are verified.
		To verify, the third condition, we note the following.
		\begin{align*}
			(m_2-2\epsilon)-m_1 = m_2-m_1 - 2\epsilon < \delta_2-\delta_1,\\
			m_1-(m_2-2\epsilon) = m_1-m_2+2\epsilon < 2\epsilon < \delta_2-\delta_1.
		\end{align*}
		Hence, by  Lemma~\ref{lemma: triangular intersection}, $f(x)-g(x+2\epsilon)=0$ exactly twice on the support of $\mu$. The greater of the two will be $r-\epsilon$. Let $\tilde{t}$ be the lesser of the two. Then, $\tilde{t}<r-\epsilon = b-2\epsilon$. Further, $f(x)\leq g(x+2\epsilon)$ for $x\in [a, \tilde{t})$ and $f(x)\geq g(x+2\epsilon)$ for $x\in (\tilde{t}, b-2\epsilon]$.
	 Hence, $D_\epsilon(\mu_{-}, \nu_{-})=0$ by Lemma~\ref{lemma: scrunch}. 
	 Therefore, the optimal risk and optimal classifier remain the same as in case 3(a).

		\paragraph{ Case 3(c): $m_2-m_1 \leq \delta_2 - \delta_1$.}
		
		We will partition the real line into four regions as shown in Table~\ref{table: triangular_partition_four}, where $l'$ is the leftmost point such that $\mu([m_1-\delta_1, l']) = \nu([m_2-\delta_2, l'])$ and $r'$ is as defined in case 2. Since $\mu_+, \nu_+, \mu_{++}$ and $\nu_{++}$ are defined in an identical manner to case 2, we get $D_\epsilon(\mu_{+}, \nu_{+}) = D_\epsilon(\mu_{++}, \nu_{++}) = 0$.
		
		We will now show $D_\epsilon(\mu_{--}, \nu_{--})=0$ using Lemma~\ref{lemma: scrunch}. Let $a=m_1-\delta_1-2\epsilon$, $a'=a+2\epsilon$, $b = l'$ and $\tilde{t} = b-2\epsilon$. Since $m_2-m_1 \leq \delta_2 - \delta_1$, by Lemma~\ref{lemma: triangular intersection}, $f(x)-g(x)=0$ has exactly two solutions. Let $t$ be the lesser of the two. Then, (1) the support of $\nu_{--}$ is $[m_2-\delta_2, b]$ which is a subset of $[a,b]$ and the support of $\mu_{--}$ is $[a',b]$. (2) $g(x)\geq f(x)$ for $x\in [a, t)$ and $f(x)\geq g(x)$ for $x\in (t, b]$. (3) $g(x)\leq f(x+2\epsilon)$ for $x\in [a, \tilde{t})$ and the the interval $(\tilde{t}, b-2\epsilon]$ is empty because $\tilde{t}= b-2\epsilon$. Hence, $D_\epsilon(\mu_-, \nu_-)=0$.
		
		Finally, $D_\epsilon(\mu_-, \nu_-)=0$ because $f(x)\geq g(x)$ for $x\in [l', r']$ and the identity map $T(x)=x$ transports all the mass from $\nu_-$ to $\mu_-$ at zero cost.
		
		Overall, we have the following inequality.
		\begin{align*}
		D_\epsilon(\mu, \nu)
		&\leq 1- (\nu([m_2-\delta_2, l']) + \nu([l', r']) + \nu([r', r+\epsilon]) + \mu([r-\epsilon, m_1+\delta_1]))\\
		&= \mu(A^{\ominus \epsilon}) - \nu(A^{\oplus \epsilon}).
		\end{align*}
		As in Case 2, we conclude that $D_\epsilon(\mu, \nu) = \mu(A^{\ominus \epsilon}) - \nu(A^{\oplus \epsilon})$.

\begin{figure}[t]
    \centering
    \includegraphics[scale=0.5]{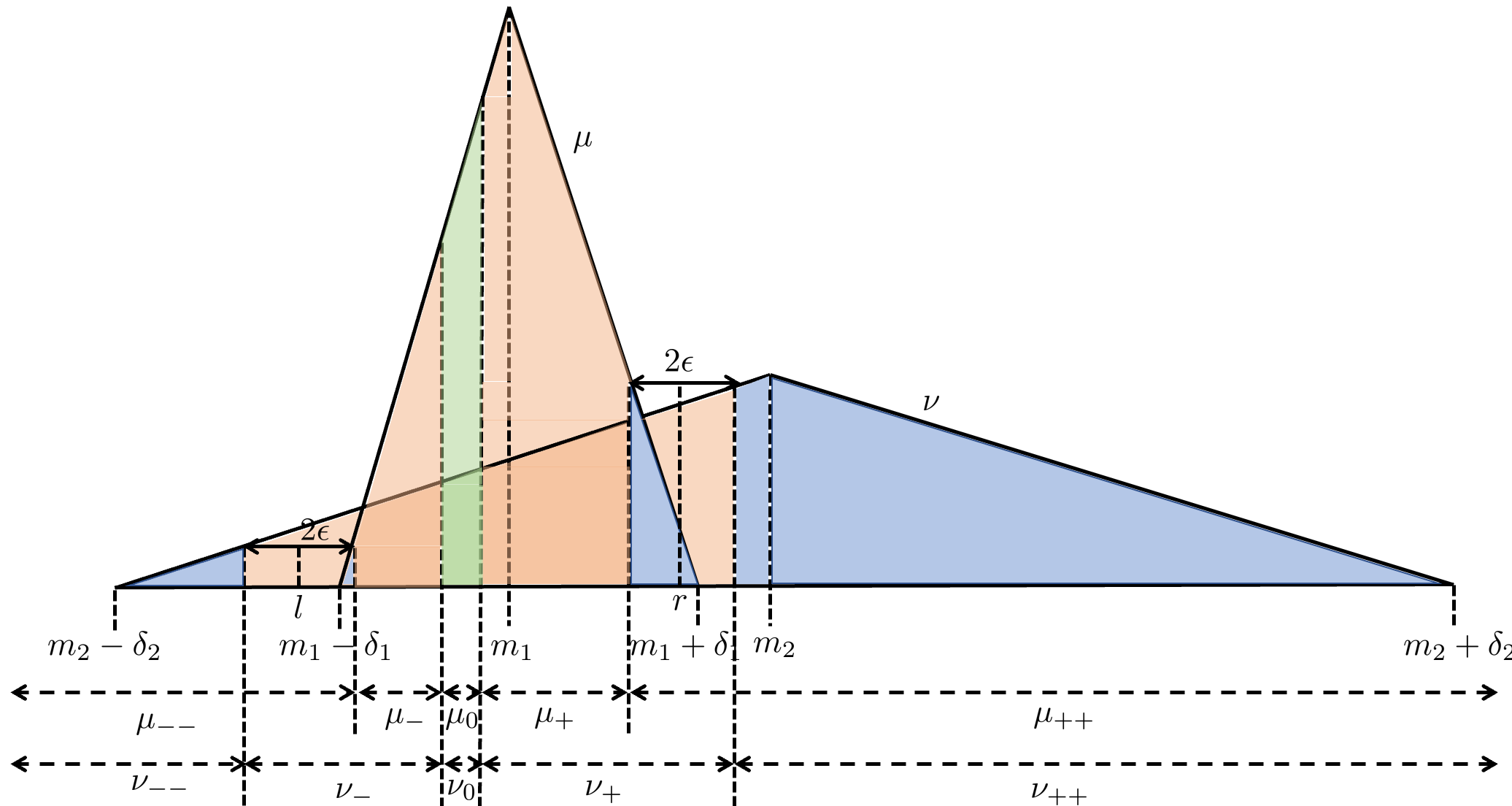}
    \caption{Optimal transport coupling for triangular distributions $\mu$ and $\nu$. As in the proof of Theorem~\ref{thm: diff mean}, we divide the real line into five regions. The transport plan from $\mu$ to $\nu$ consists of five maps transporting $\mu_{--} \to \nu_{--}$ (blue regions to the left), $\mu_- \to \nu_-$ (orange regions to the left), $\mu_0 \to \nu_0$ (green regions in the middle), $\mu_+ \to \nu_+$ (orange regions to the right), and $\mu_{++} \to \nu_{++}$ (blue regions to the right).}
    \label{fig: triangular}
\end{figure}

\begin{table}[!htb]
\begin{center}
\begin{tabular}{|c|c|}
\hline
$\mu_{--}$ &$(m_1-\delta_1, l+\epsilon]$  \\ \hline
$\mu_{-}$ &$(l+\epsilon, l']$  \\ \hline
$\mu_{0}$ &$(l', r')$  \\ \hline
$\mu_{+}$ &$[r', r-\epsilon)$  \\ \hline
$\mu_{++}$ &$[r-\epsilon, m_1+\delta_1)$  \\ \hline
\end{tabular}
%\end{table}
\quad
%\begin{table}[]
\begin{tabular}{|c|c|}
\hline
$\nu_{--}$ &$(m_2-\delta_2, l-\epsilon]$  \\ \hline
$\nu_{-}$ &$(l-\epsilon, l']$  \\ \hline
$\nu_{0}$ &$(l', r')$  \\ \hline
$\nu_{+}$ &$[r', r+\epsilon)$  \\ \hline
$\nu_{++}$ &$[r+\epsilon, m_2+\delta_2)$  \\ \hline
\end{tabular}
\end{center}
\caption{The real line is partitioned into five regions for $\mu$ and $\nu$ for Case 2.}
\label{table: triangular_partition_five}
\end{table}

\begin{table}[!htb]
\begin{center}
\begin{tabular}{|c|c|}
\hline
$\mu_{--}$ &$(m_1-\delta_1, l']$  \\ \hline
$\mu_{-}$ &$(l', r')$  \\ \hline
$\mu_{+}$ &$[r', r-\epsilon)$  \\ \hline
$\mu_{++}$ &$[r-\epsilon, m_1+\delta_1)$  \\ \hline
\end{tabular}
%\end{table}
\quad
%\begin{table}[]
\begin{tabular}{|c|c|}
\hline
$\nu_{--}$ &$(m_2-\delta_2, l']$  \\ \hline
$\nu_{-}$ &$(l', r')$  \\ \hline
$\nu_{+}$ &$[r', r+\epsilon)$  \\ \hline
$\nu_{++}$ &$[r+\epsilon, m_2+\delta_2)$  \\ \hline
\end{tabular}
\end{center}
\caption{The real line is partitioned into four regions for $\mu$ and $\nu$ for Case 3(c).}
\label{table: triangular_partition_four}
\end{table}

\end{document}